\newcommand*\dbar[1]{\overline{\overline{\lower0.2ex\hbox{$#1$}}}}
\colorlet{darkgreen}{green!50!black}
\NewDocumentCommand{\Exp}{d() r[]}{\ensuremath{
		\mathds{E}
		\IfValueT{#1}{_{#1}}
		{\left[#2\right]}
}}
\newtheorem{theorem}{Theorem}
\newtheorem{conjecture}{Conjecture}
\newtheorem{definition}{Definition}
\newtheorem{lemma}{Lemma}
\newtheorem{proposition}{Proposition}
\newcommand{\eat}[1]{}
\newcommand{\cD}{\mathcal{D}}
\newcommand{\cT}{\mathcal{T}}
\newcommand{\cY}{\mathcal{Y}}
\newcommand{\cO}{\mathcal{O}}
\newcommand{\cI}{\mathcal{I}}
\newcommand{\cS}{\mathcal{S}}
\newcommand{\real}{\mathbb{R}}
\newcommand{\bW}{{\bf W}}
\definecolor{pastelblue}{RGB}{76,113,175}
\definecolor{pastelgreen}{RGB}{84,167,104}
\definecolor{pastelred}{RGB}{196,78,82}
\definecolor{pastelgrey}{RGB}{230,230,230}
\definecolor{pastelbeige}{RGB}{243,236,221}
\definecolor{pastelpurple}{RGB}{154,139,192}
\definecolor{mysalmon}{RGB}{250, 128, 114}
\definecolor{myblue}{RGB}{60,105,210}
\definecolor{mygreen}{RGB}{60,179,113}
\def\eqref#1{equation~\ref{#1}}
\def\ceil#1{\lceil #1 \rceil}
\def\1{\bm{1}}
\DeclareMathAlphabet{\mathsfit}{\encodingdefault}{\sfdefault}{m}{sl}
\SetMathAlphabet{\mathsfit}{bold}{\encodingdefault}{\sfdefault}{bx}{n}
\def\cD{{\mathcal{D}}}
\def\cG{{\mathcal{G}}}
\def\cI{{\mathcal{I}}}
\def\cO{{\mathcal{O}}}
\def\cR{{\mathcal{R}}}
\def\cS{{\mathcal{S}}}
\def\cT{{\mathcal{T}}}
\def\cY{{\mathcal{Y}}}
\def\sR{{\mathbb{R}}}
\newcommand{\modelclass}{Reconstruction Neural Network}
\newcommand{\kmodel}{$k$-Reconstruction Neural Network}
\newcommand{\recmodel}{Full Reconstruction Neural Network}
\newcommand{\gnn}{$k$-Reconstruction GNN}
\newcommand*{\lms}{\{\mskip-5mu\{}
\newcommand*{\rms}{\}\mskip-5mu\}}
\def\thmt@refnamewithcomma #1#2#3,#4,#5\@nil{\@xa\def\csname\thmt@envname #1utorefname\endcsname{#3}\ifcsname #2refname\endcsname
	\csname #2refname\expandafter\endcsname\expandafter{\thmt@envname}{#3}{#4}\fi
}
\newtheorem{observation}{Observation}
\Crefname{conjecture}{Conjecture}{Conjectures}
\Crefname{observation}{Observation}{Observations}
\definecolor{pastelblue}{RGB}{76,113,175}
\definecolor{pastelgreen}{RGB}{144,238,144}
\definecolor{pastelred}{RGB}{196,78,82}
\definecolor{pastelgrey}{RGB}{230,230,230}
\definecolor{pastelbeige}{RGB}{243,236,221}
\definecolor{pastelpurple}{RGB}{154,139,192}
\definecolor{mysalmon}{RGB}{250, 128, 114}
\definecolor{myblue}{RGB}{60,105,210}
\definecolor{mygreen}{RGB}{60,179,113}
\definecolor{dkgreen}{rgb}{0,0.6,0}
\definecolor{dkred}{rgb}{0.5,0,0}
\newcommand{\concat}[1]{\textsc{Concat}(#1)}
 \newcommand{\xhdr}[1]{{\noindent\bfseries #1}}
\title{Reconstruction for Powerful Graph Representations}
\author{Leonardo Cotta \\
  Purdue University\\
  \texttt{cotta@purdue.edu} \\
  \And
  Christopher Morris \\
  Mila -- Quebec AI Institute, McGill University \\
  \texttt{chris@christophermorris.info} \\
  \And
  Bruno Ribeiro \\
  Purdue University\\
  \texttt{ribeiro@cs.purdue.edu} \\
}
\begin{document}

\maketitle

\begin{abstract}
Graph neural networks (GNNs) have limited expressive power, failing to represent many graph classes correctly. While more expressive graph representation learning (GRL) alternatives can distinguish some of these classes, they are significantly harder to implement, may not scale well, and have not been shown to outperform well-tuned GNNs in real-world tasks. Thus, devising simple, scalable, and expressive GRL architectures that also achieve real-world improvements remains an open challenge. In this work, we show the extent to which graph reconstruction---reconstructing a graph from its subgraphs---can mitigate the theoretical and practical problems currently faced by GRL architectures. First, we leverage graph reconstruction to build two new classes of expressive graph representations. Secondly, we show how graph reconstruction boosts the expressive power of any GNN architecture while being a (provably) powerful inductive bias for invariances to vertex removals. Empirically,  we show how reconstruction can boost GNN's expressive power---while maintaining its invariance to permutations of the vertices---by solving seven graph property tasks not solvable by the original GNN. Further, we demonstrate how it boosts state-of-the-art GNN's performance across nine real-world benchmark datasets.
\end{abstract}
\vspace{-0.5em}
\section{Introduction}

Supervised machine learning for graph-structured data, i.e., graph classification and regression, is ubiquitous across application domains ranging from chemistry and bioinformatics~\citep{Barabasi2004,Sto+2020} to image~\citep{Sim+2017}, and social network analysis~\citep{Eas+2010}. Consequently, machine learning on graphs is an active research area with numerous proposed approaches---notably GNNs~\citep{Cha+2020,Gil+2017,Gro+2020} being the most representative case of GRL methods.

Arguably, GRL's most interesting results arise from a cross-over between graph theory and representation learning. For instance, the representational limits of GNNs are upper-bounded by a simple heuristic for the graph isomorphism problem~\citep{Mor+2019,Xu+2018b}, the \emph{$1$-dimensional Weisfeiler-Leman algorithm} ($1$-WL)~\citep{Gro2017,Mor+2021,Wei+1976,Wei+1968}, which might miss crucial structural information in the data~\citep{Arv+2015}. Further works show how GNNs cannot approximate graph properties such as diameter, radius, girth, and subgraph counts~\citep{Chen+2020,Gar+2020}, inspiring architectures~\cite{azizian2020characterizing,Mar+2019,Mor+2019,Morris2020b} based on the more powerful \emph{$\kappa$-dimensional Weisfeiler-Leman algorithm} ($\kappa$-WL)~\cite{Gro2017}.\footnote{We opt for using $\kappa$ instead of $k$, i.e., $\kappa$-WL instead of $k$-WL, to not confuse the reader with the  hyperparameter $k$ of our models.}
On the other hand, despite the limited expressiveness of GNNs, they still can overfit the training data, offering limited generalization performance~\citep{Xu+2018b}. Hence, devising GRL architectures that are simultaneously sufficiently expressive and avoid overfitting remains an open problem.

An under-explored connection between graph theory and GRL is graph reconstruction, which studies graphs and graph properties uniquely determined by their subgraphs. In this direction, both the pioneering work of \citet{shawe1993symmetries} and the more recent work of \citet{bouritsas2020improving}, show that assuming the reconstruction conjecture (see \Cref{conj:recon}) holds, their models are most-expressive representations (universal approximators) of graphs. Unfortunately, \citeauthor{shawe1993symmetries}'s computational graph grows exponentially with the number of vertices, and \citeauthor{bouritsas2020improving}'s full representation power requires performing multiple graph isomorphism tests on potentially large graphs (with $n-1$ vertices). Moreover, these methods were not inspired by the more general subject of graph reconstruction; instead, they rely on the reconstruction conjecture to prove their architecture's expressive powers.

\xhdr{Contributions.}
In this work, we directly connect graph reconstruction to GRL. We first show how the \emph{$k$-reconstruction of graphs}---reconstruction from induced $k$-vertex subgraphs---induces a natural class of expressive GRL architectures for supervised learning with graphs, denoted \emph{\kmodel{s}}. We then show how several existing works have their expressive power limited by $k$-reconstruction. Further, we show how the reconstruction conjecture's insights lead to a provably most expressive representation of graphs. Unlike \citet{shawe1993symmetries} and \citet{bouritsas2020improving}, which, for graph tasks, require fixed-size unattributed graphs and multiple (large) graph isomorphism tests, respectively, our method represents bounded-size graphs with vertex attributes and does not rely on isomorphism tests.

To make our models scalable, we propose \emph{\gnn{s}}, a general tool for boosting the expressive power and performance of GNNs with graph reconstruction. Theoretically, we characterize their expressive power showing that \gnn{s} can distinguish graph classes that the $1$-WL and $2$-WL cannot, such as cycle graphs and strongly regular graphs, respectively. Further, to explain gains in real-world tasks, we show how reconstruction can act as a lower-variance risk estimator when the graph-generating distribution is invariant to vertex removals. Empirically, we show that reconstruction enhances GNNs' expressive power, making them solve multiple synthetic graph property tasks in the literature not solvable by the original GNN. On real-world datasets, we show that the increase in expressive power coupled with the lower-variance risk estimator boosts GNNs' performance up to 25\%.
Our combined theoretical and empirical results make another important connection between graph theory and GRL.

\subsection{Related work} 
We review related work from GNNs, their limitations, data augmentation, and the reconstruction conjecture in the following. See~\cref{ext_rw} for a more detailed discussion.

\xhdr{GNNs.} Notable instances of this architecture include, e.g.,~\citep{Duv+2015,Ham+2017,Vel+2018}, and the spectral approaches proposed in, e.g.,~\citep{Bru+2014,Def+2015,Kip+2017,Mon+2017}---all of which descend from early work in~\citep{bas+1997,Kir+1995,Mer+2005,mic+2009,mic+2005,Sca+2009,Spe+1997}. Aligned with the field's recent rise in popularity, there exists a plethora of surveys on recent advances in GNN methods. Some of the most recent ones include~\citep{Cha+2020,Wu+2018,Zho+2018}.

\xhdr{Limits of GNNs.} Recently, connections to Weisfeiler-Leman type algorithms have been shown~\citep{Bar+2020,Che+2019,Gee+2020a,Gee+2020b,Mae+2019,Mar+2019,Mor+2019,Morris2020b,Xu+2018b}. Specifically, the authors of~\citep{Mor+2019,Xu+2018b} show how the 1-WL limits the expressive power of any possible GNN architecture. \citet{Mor+2019} introduce \emph{$\kappa$-dimensional GNNs} which rely on a more expressive message-passing scheme between subgraphs of cardinality $\kappa$. Later, this was refined in~\citep{azizian2020characterizing, Mar+2019} and in \citep{Mor+2019b} by deriving models equivalent to the more powerful $ \kappa $-dimensional Weisfeiler-Leman algorithm. \citet{Che+2019} connect the theory of universal approximation of permutation-invariant functions and graph isomorphism testing, further introducing a variation of the $2$-WL. Recently, a large body of work propose enhancements to GNNs, e.g., see~\cite{Alb+2019,beaini2020directional,bodnar2021weisfeiler,bouritsas2020improving,murphy2019relational,Vig+2020,you2021identity}, making them more powerful than the $1$-WL; see~\cref{ext_rw} for a in-depth discussion. For clarity, throughout this work, we will use the term GNNs to denote the class of message-passing architectures limited by the $1$-WL algorithm, where the class of distinguishable graphs is well understood~\citep{Arv+2015}.

\xhdr{Data augmentation, generalization and subgraph-based inductive biases.}
There exist few works proposing data augmentation for GNNs for graph classification. \citet{Kon+2020} introduces a simple feature perturbation framework to achieve this,  while~\citet{Ron+2020,Fen+2020} focus on vertex-level tasks. \citet{Gar+2020} study the generalization abilities of GNNs showing bounds on the Rademacher complexity, while \citet{Lia+2020} offer a refined analysis within the PAC-Bayes framework.
Recently, \citet{bouritsas2020improving} proposed to use subgraph counts as vertex and edge features in GNNs. Although the authors show an increase in expressiveness, the extent, e.g., which graph classes their model can distinguish, is still mostly unclear. Moreover,~\citet{Yeh+2020} investigate GNNs' ability to generalize to larger graphs. Concurrently,~\citet{bevilacqua2021size} show how subgraph densities can be used to build size-invariant graph representations. However, the performance of such models in in-distribution tasks, their expressiveness, and scalability remain unclear.
Finally,~\citet{yuan2021explainability} show how GNNs' decisions can be explained by (often large) subgraphs, further motivating our use of graph reconstruction as a powerful inductive bias for GRL.

\xhdr{Reconstruction conjecture.} The reconstruction conjecture is a longstanding open problem in graph theory, which has been solved in many particular settings. Such results come in two flavors. Either proving that graphs from a specific class are reconstructible or determining which graph functions are reconstructible. Known results of the former are, for instance, that regular graphs, disconnected graphs, and trees are reconstructible~\citep{bondymanual,kellytrees}. In particular, we highlight that outerplanar graphs, which account for most molecule graphs, are known to be reconstructible~\citep{Gil+1974}. 
For a comprehensive review of graph reconstruction results, see \citet{bondymanual}.

\section{Preliminaries}\label{sec:prelims}

\begin{figure*}[t!]
\centering
        \includegraphics[scale=0.7]{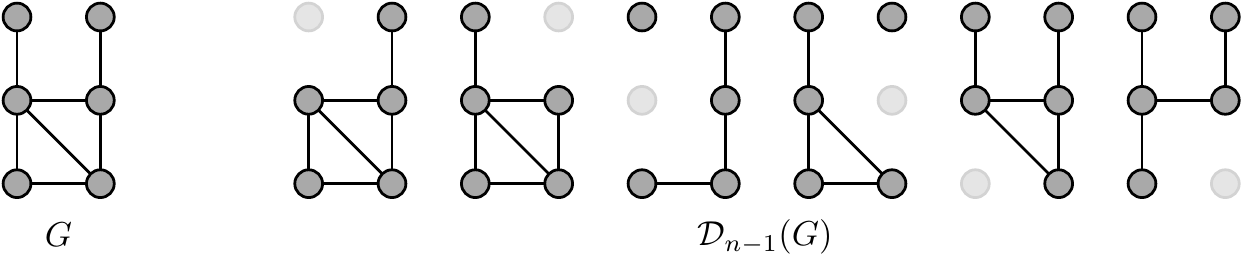}\caption{A graph $G$ and its deck $\mathcal{D}_{n-1}(G)$, faded out vertices are not part of each card in the deck.}
    \label{fig:deck}
\vspace{-1em}
\end{figure*}

Here, we introduce notation and give an overview of the main results in graph reconstruction theory~\citep{bondymanual,godsil1993algebraic}, including the reconstruction conjecture~\citep{ulam}, which forms the basis of the models in this work.

\xhdr{Notation and definitions.} As usual, let $[n] = \{ 1, \dotsc, n \} \subset \N$ for $n \geq 1$, and let $\{\!\!\{ \dots\}\!\!\}$ denote a multiset. In an abuse of notation, for a set $X$ with $x$ in $X$, we denote by $X - x$ the set $X \setminus \{x\}$. We also assume elementary definitions from graph theory, such as graphs, directed graphs, vertices, edges, neighbors, trees, isomorphism, et cetera; see~\cref{notation}. The vertex and the edge set of a graph $G$ are denoted by $V(G)$ and $E(G)$, respectively. The \emph{size} of a graph $G$ is equal to its number of vertices. Unless indicated otherwise, we use $n:=|V(G)|$. If not otherwise stated, we assume that vertices and edges are annoted with \emph{attributes}, i.e., real-valued vectors.

We denote the set of all finite and simple graphs by $\cG$. The subset of $\cG$ without edge attributes (or edge directions) is denoted $\mathfrak{G} \subset \cG$. We write $G \simeq H$ if the graphs $G$ and $H$ are isomorphic. Further, we denote the isomorphism type, i.e., the equivalence class of the isomorphism relation, of a graph $G$ as $\cI(G)$. Let $S \subseteq V(G)$, then $G[S]$ is the induced subgraph with edge set $E(G)[S] = \{ S^2 \cap E(G) \}$. We will refer to induced subgraphs simply as subgraphs in this work. 

Let $\mathfrak{R}$ be a family of graph representations, such that for $d\geq 1$, $r$ in $\mathfrak{R}$, $r \colon \cG \to \sR^d$, assigns a $d$-dimensional representation vector $r(G)$ for a graph $G$ in $\cG$. We say $\mathfrak{R}$ can \emph{distinguish} a graph $G$ if there exists $r$ in $\mathfrak{R}$ that assigns a unique representation to the isomorphism type of $G$, i.e., $r(G) = r(H)$ if and only if $ G \simeq H $. 
Further, we say $\mathfrak{R}$ distinguishes a pair of non-isomorphic graphs $G$ and $H$ if there exists some $r$ in $\mathfrak{R}$ such that $r(G) \neq r(H)$. Moreover, we write $\mathfrak{R}_1 \preceq \mathfrak{R}_2$ if $\mathfrak{R}_2$ distinguishes between all graphs $\mathfrak{R}_1$ does, and $\mathfrak{R}_1 \equiv \mathfrak{R}_2$ if both directions hold. The corresponding strict relation is denoted by $\prec$. Finally, we say $\mathfrak{R}$ is a \emph{most-expressive representation} of a class of graphs if it distinguishes all non-isomorphic graphs in that class. 

\xhdr{Graph reconstruction.} Intuitively, the reconstruction conjecture states that an undirected edge-unattributed graph can be fully recovered up to its isomorphism type given the multiset of its vertex-deleted subgraphs' isomorphism types. This multiset of subgraphs is usually referred to as the \emph{deck} of the graph, see \Cref{fig:deck} for an illustration. Formally, for a graph $G$, we define its deck as $
    \cD_{n-1}(G) = \lms \cI(G[V(G) - v]) \colon v \in V(G) \rms$.
We often call an element in $\cD_{n-1}(G)$ a \emph{card}. We define the graph reconstruction problem as follows.
\begin{definition}\label{def:recon}
    Let $G$ and $H$ be graphs, then $H$ is a \emph{reconstruction} of $G$  if $H$ and $G$ have the same deck, denoted $H \sim G$. A graph $G$ is \emph{reconstructible} if every reconstruction of $G$ is isomorphic to $G$, i.e., $H \sim G$ implies $H \simeq G$. 
\end{definition}
\vspace{-0.5em}
Similarly, we define \emph{function reconstruction}, which relates functions that map two graphs to the same value if they have the same deck.

\begin{definition}\label{def:func-recon}
Let  $f\colon \cG \to \cY$ be a function, then $f$ is \emph{reconstructible} if $f(G)=f(H)$ for all graphs in $\{(H,G) \in \cG^2 \colon H \sim G \}$, i.e., $G \sim H$ implies $f(G) = f(H) $.
\end{definition}
\vspace{-0.5em}

We can now state the reconstruction conjecture, which in short says that every $G$ in $\mathfrak{G}$ with $ |V| \geq 3$ is reconstructible.

\begin{conjecture}[\citet{kelly,ulam}]\label{conj:recon}
	Let $H$ and $G$ in $\mathfrak{G}$ be two finite, undirected, simple graphs with at least three vertices. If $H$ is a reconstruction of $G$, then $H$ and $G$ are isomorphic.
\end{conjecture}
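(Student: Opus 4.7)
The final statement is the Kelly--Ulam reconstruction conjecture, a problem that has resisted proof since 1941 despite substantial effort. I will not claim to produce a proof; instead I sketch the shape any serious attempt has historically taken and identify where the barriers lie, since the authors explicitly state this only as a conjecture to motivate their architectures.

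The plan would be an induction on $n = |V(G)|$. The base cases up to some small $n$ can be verified by exhaustive enumeration (and indeed have been, computationally). For the inductive step, the principal tool is \emph{Kelly's Lemma}: for any graph $H$ with strictly fewer vertices than $G$, the number of (induced or non-induced) subgraphs of $G$ isomorphic to $H$ is reconstructible from $\cD_{n-1}(G)$. A proof would then try to recover a distinguished structural invariant of $G$ first, e.g., the degree sequence, number of edges, connectivity, and block structure---all known to be deck-reconstructible---and then leverage the inductive hypothesis on the cards themselves to pin down the adjacency relation.

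Concretely, I would try to reconstruct $G$ by ``gluing'' pairs of cards along their common $(n-2)$-vertex subgraphs. Since the degree sequence of $G$ is deck-reconstructible, one can label each card by which degree was removed, and one can then seek a consistent assignment of edges between the re-identified vertices of two overlapping cards. This is essentially the strategy used in every known partial result: trees (Kelly), disconnected graphs (Kelly/Harary), regular graphs, outerplanar graphs (Giles), unit-interval graphs, and maximal planar graphs all succeed because the restricted class furnishes extra rigidity that makes the gluing unique.

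The main obstacle---which has blocked the problem for over eighty years---is that this gluing strategy has no known way to succeed in general for connected, non-regular graphs. Kelly's Lemma counts only subgraphs strictly smaller than $G$, so there is a fundamental information-theoretic gap between what the deck reveals about proper subgraphs and the adjacency structure of $G$ itself; any proof must bridge this gap. Bollob\'as's probabilistic argument shows that almost every graph is reconstructible, and the \emph{edge}-reconstruction analogue is known for graphs with sufficiently many edges, but the full vertex-reconstruction conjecture remains open. For this paper, \Cref{conj:recon} is taken as a premise underpinning the representational-power arguments that follow, not as something to be proved here.
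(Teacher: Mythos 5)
You are right that this statement is the Kelly--Ulam reconstruction conjecture, which the paper states as \Cref{conj:recon} precisely because no proof exists; the paper offers none and instead uses it as a hypothesis (e.g., in the expressiveness result for Full Reconstruction Neural Networks in \Cref{supp:recmodel}) while citing the same partial evidence you list---Kelly's Lemma, reconstructibility of trees, disconnected, regular and outerplanar graphs, and computational verification for graphs with up to 11 vertices. Your decision not to claim a proof, together with your accurate summary of the known partial results and of where the gluing-based strategies break down, matches the paper's own treatment of the statement.
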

\vspace{-0.5em}
We note here that the reconstruction conjecture does not hold for directed graphs, hypergraphs, and infinite graphs~\citep{bondymanual,stockmeyer1977falsity,stockmeyer1981census}. In particular, edge directions can be seen as edge attributes. Thus, the reconstruction conjecture does not hold for the class $\cG$. In contrast, the conjecture has been proved for practical-relevant graph classes, such as disconnected graphs, regular graphs, trees, and outerplanar graphs~\citep{bondymanual}. Further, computational searches show that graphs with up to 11 vertices are reconstructible~\citep{mckay1997small}. Finally, many graph properties are known to be reconstructible, such as every size subgraph count, degree sequence, number of edges, and the characteristic polynomial~\citep{bondymanual}.

\xhdr{Graph \texorpdfstring{$k$}{}-reconstruction.} \citet{kellytrees} generalized graph reconstruction, considering the multiset of subgraphs of size $k$ instead of $n-1$, which we denote $\cD_{k}(G) = \lms \cI(H) \colon H \in \cS^{(k)}(G) \rms$,
where $\cS^{(k)}$ is the set of all $\binom{n}{k}$ $k$-size subsets of $V$. We often call an element in $\cD_{k}(G)$ a $k$-\emph{card}. From the $k$-deck definition, it is easy to extend the concept of graph and function reconstruction, cf.~\Cref{def:recon,def:func-recon}, to \emph{graph} and \emph{function $k$-reconstruction}.

\begin{definition}\label{def:krecon}
    Let $G$ and $H$ be graphs, then $H$ is a \emph{$k$-reconstruction} of $G$  if $H$ and $G$ have the same $k$-deck, denoted $H \sim_k G$. A graph $G$ is \emph{$k$-reconstructible} if every $k$-reconstruction of $G$ is isomorphic to $G$, i.e., $H \sim_k G$ implies $H \simeq G$. 
\end{definition}
\vspace{-0.5em}
Accordingly, we define $k$-\emph{function reconstruction} as follows.
\begin{definition}\label{def:func-krecon}
Let  $f\colon \cG \to \cY$ be a function, then $f$ is $k$-reconstructible if $f(G)=f(H)$ for all graphs in $\{(H,G) \in \cG^2 \colon H \sim_k G \}$, i.e., $G \sim_k H$ implies $f(G) = f(H) $.
\end{definition}
\vspace{-0.5em}

Results for $k$-reconstruction usually state the least $k$ as a function of $n$ such that all graphs $G$ in $\cG$ (or some subset) are $k$-reconstructible~\citep{nydl2001graph}. There exist extensive partial results in this direction, mostly describing $k$-reconstructibility (as a function of $n$) for a particular family of graphs, such as trees, disconnected graphs, complete multipartite graphs, and paths, see \citep{nydl2001graph,kostochka2019reconstruction}. More concretely, \citet{nydl1981finite,spinoza2019reconstruction} showed graphs with $2k$ vertices that are not $k$-reconstructible. In practice, these results imply that for some fixed $k$ there will be graphs with not many more vertices than $k$ that are not $k$-reconstructible. 
Further, $k$-reconstructible graph functions such as degree sequence and connectedness have been studied in~\citep{manvel1974some,spinoza2019reconstruction} depending on the size of $k$. In \Cref{supp:rec}, we discuss further such results.

\section{\modelclass{}s}\label{sec:recnets}

Building on the previous section, we propose two neural architectures based on graph $k$-reconstruction and graph reconstruction. First, we look at \emph{\kmodel{s}}, the most natural way to use graph $k$-reconstruction. Secondly, we look at \emph{\recmodel{s}}, where we leverage the Reconstruction Conjecture to build a most-expressive representation for the class of graphs of bounded size and unattributed edges.
 
\xhdr{\texorpdfstring{\kmodel{s}}{}.} Intuitively, the key idea of \kmodel{s} is that of learning a joint representation based on subgraphs induced by $k$ vertices. Formally, let $ f_{\bW} \colon \cup^{\infty}_{m=1}  \real^{m \times d} \to \real^{t}$ be a (row-wise) permutation-invariant function and $\cG_k = \{ G \in \cG \colon |V(G)| = k\}$ be the set of graphs with exactly $k$ vertices. Further, let  $h^{(k)} \colon \cG_k \to \real^{1 \times d}$ be a graph representation function such that two graphs $G$ and $H$ on $k$ vertices  are mapped to the same vectorial representation if and only if they are isomorphic, i.e., $h^{(k)}(G)=h^{(k)}(H) \iff G \simeq H$ for all $G$ and $H$ in $\cG_k$. We define \kmodel{s} over $\cG$ as a function with parameters $\bW$ in the form
	\begin{equation*}
	r^{(k)}_{\bW}(G) = f_{\bW} \left(\concat{\lms h^{(k)}(G[S]) \colon S \in \cS^{(k)} \rms }\right)\!,
	\end{equation*}
where $\cS^{(k)}$ is the set of all $k$-size subsets of $V(G)$ for some $3 \leq k \leq n$, and $\textsc{Concat}$ denotes row-wise concatenation of a multi-set of vectors in some arbitrary order. Note that $h^{(k)}$ might also be a function with learnable parameters. In that case, we require it to be most-expressive for $\cG_k$. The following results characterize the expressive power of the above architecture.

\begin{proposition}\label{prop:kmodel}
Let $f_{\bW}$ be a universal approximator of multisets~\citep{zaheer2017deep,wagstaff19,murphy2019janossy}. Then, $r^{(k)}_{\bW}$ can approximate a function if and only if the function is $k$-reconstructible. 
\end{proposition}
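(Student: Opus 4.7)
The plan is to use the multiset $M(G) := \lms h^{(k)}(G[S]) \colon S \in \cS^{(k)} \rms$ as a faithful vectorial encoding of the $k$-deck $\cD_k(G)$, and then invoke the universal multiset approximation property of $f_{\bW}$.

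First I would establish the key bookkeeping step: because $h^{(k)}$ is, by hypothesis, most-expressive on $\cG_k$, the map $\cI(H)\mapsto h^{(k)}(H)$ is injective on isomorphism types of graphs of size $k$. Consequently, two graphs $G,H$ satisfy $M(G)=M(H)$ (as multisets in $\real^{1\times d}$) if and only if $\cD_k(G)=\cD_k(H)$, i.e., $G\sim_k H$. Since $f_{\bW}$ is permutation-invariant row-wise and $\textsc{Concat}$ only fixes an arbitrary ordering of the multiset, the value $r^{(k)}_{\bW}(G)$ depends on $G$ only through $M(G)$; write $r^{(k)}_{\bW}(G)=F_{\bW}(M(G))$ for some multiset function $F_{\bW}$.

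For the ``only if'' direction, suppose $r^{(k)}_{\bW}$ approximates a target $g\colon\cG\to\cY$ to arbitrary accuracy. Pick any $G,H$ with $G\sim_k H$. By the first step, $M(G)=M(H)$, hence $r^{(k)}_{\bW}(G)=r^{(k)}_{\bW}(H)$ for every $\bW$. Letting the approximation error shrink to zero forces $g(G)=g(H)$, so $g$ is $k$-reconstructible by \Cref{def:func-krecon}.

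For the ``if'' direction, suppose $g$ is $k$-reconstructible. Then the assignment $\tilde g(M(G)) := g(G)$ is well-defined on the image $\{M(G) : G\in\cG\}$ of multisets, because any two preimages $G,H$ of the same multiset satisfy $G\sim_k H$ and hence $g(G)=g(H)$. By hypothesis, $f_{\bW}$ is a universal approximator of multiset functions, so there exist parameters $\bW$ making $F_{\bW}$ approximate $\tilde g$ on the relevant (compact, or restricted to any finite set of graphs of interest) domain, and therefore $r^{(k)}_{\bW}$ approximates $g$.

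The main obstacle is cosmetic rather than conceptual: pinning down the notion of ``approximate'' in a way that is compatible with the cited universal-multiset-approximation results (which typically require either a fixed maximum multiset cardinality, or uniformity over compact sets of vector entries). Since we are working over $\cG$ with graphs of bounded size and vertex attributes in a compact set, this can be handled by restricting to each fixed size $n$ (so that $|\cS^{(k)}|=\binom{n}{k}$ is constant) and applying, e.g., the Deep Sets / Janossy pooling guarantees of \citet{zaheer2017deep,wagstaff19,murphy2019janossy}, exactly as the proposition's hypothesis stipulates.
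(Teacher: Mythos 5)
Your proposal is correct and follows essentially the same route as the paper's own proof: since $h^{(k)}$ is most expressive, the multiset input to $f_{\bW}$ is a faithful encoding of the $k$-deck, so the ``only if'' direction follows from the representation factoring through $\cD_k(G)$ and the ``if'' direction from universal multiset approximation (Deep Sets / Stone--Weierstrass). Your write-up is simply a more explicit version of the paper's terse argument, including the same compactness caveat the paper handles by stating the result for continuous functions on a compact subset of $\cG$ under the sup norm.
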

Moreover, we can observe the following.
\begin{observation}[\citet{nydl2001graph, kostochka2019reconstruction}]\label{obs:kdeck}
For any graph $G$ in $\cG$, its $k$-deck $\cD_k(G)$ determines its $(k-1)$-deck $\cD_{k-1}(G)$. \end{observation}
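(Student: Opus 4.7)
The plan is to recover $\cD_{k-1}(G)$ from $\cD_k(G)$ by a straightforward double-counting argument over pairs of subsets $(S,T)$ with $S \subset T \subseteq V(G)$, $|S| = k-1$, $|T| = k$. The only quantities I need are (i) the value of $n$, and (ii) for each $k$-card, the ability to read off its own $(k-1)$-deck internally.

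First I would observe that $n = |V(G)|$ is itself already determined by $\cD_k(G)$, since $|\cD_k(G)| = \binom{n}{k}$ and this equation has a unique integer solution $n \geq k$. Next, from $\cD_k(G)$ alone I can form the multiset
\[
  \cM_k(G) \;:=\; \biguplus_{H \in \cD_k(G)} \cD_{k-1}(H),
\]
where $\biguplus$ denotes multiset union: every $(k-1)$-card of every $k$-card is computable purely from that $k$-card, so $\cM_k(G)$ is a function of $\cD_k(G)$.

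The heart of the argument is then to show that $\cM_k(G) = (n-k+1) \cdot \cD_{k-1}(G)$ as multisets of isomorphism types. To see this, consider the bipartite incidence between labelled subsets: every $(k-1)$-subset $S \subseteq V(G)$ is contained in exactly $n-(k-1) = n-k+1$ distinct $k$-subsets of $V(G)$, namely $S \cup \{v\}$ for $v \in V(G)\setminus S$. Dually, each $k$-subset $T$ contains exactly $k$ subsets of size $k-1$, and the isomorphism types of those induced subgraphs are exactly the entries of $\cD_{k-1}(G[T])$. Summing $\cD_{k-1}(G[T])$ over all $k$-subsets $T$ therefore lists the isomorphism type $\cI(G[S])$ once for each $T \supset S$, i.e., $n-k+1$ times. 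Since this holds for every $(k-1)$-subset $S$, each element of $\cD_{k-1}(G)$ appears with multiplicity exactly $(n-k+1)$ times its multiplicity in $\cD_{k-1}(G)$, which is the claimed identity. Dividing every multiplicity in $\cM_k(G)$ by the positive integer $n-k+1$ then recovers $\cD_{k-1}(G)$.

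There is essentially no hard step: the only thing one must be careful about is that the counting is performed on isomorphism types with multiplicities rather than on labelled subgraphs, so one should phrase $\cM_k(G)$ as a multiset sum and verify the identity pointwise at each isomorphism type $T \in \cG_{k-1}$. The argument requires $n \geq k$, which is implicit since $\cD_k(G)$ would otherwise be empty, and it uses no property of $G$ beyond its vertex count, so it applies uniformly to every $G \in \cG$.
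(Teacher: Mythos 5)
Your proof is correct and takes essentially the same route as the paper, which attributes the observation to the cited literature and sketches the identical Kelly-style double counting in its supplement: the multiset union of the $(k-1)$-decks of all $k$-cards lists each $(k-1)$-card of $G$ exactly $n-k+1$ times, so dividing multiplicities recovers $\cD_{k-1}(G)$. Your preliminary recovery of $n$ from $|\cD_k(G)|=\binom{n}{k}$ is a valid (and standard) step, so there are no gaps.
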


From \Cref{obs:kdeck}, we can derive a hierarchy in the expressive power of \kmodel{s} with respect to the subgraph size $k$. That is, 
$r^{(3)}_\bW \preceq r^{(4)}_\bW \preceq \dots \preceq r^{(n-2)}_\bW  \preceq r^{(n-1)}_\bW .$

In \Cref{supp:kmodel}, we show how many existing architectures have their expressive power limited by $k$-reconstruction. 
We also refer to \Cref{supp:kmodel} for the proofs, a discussion on the model's computational complexity, approximation methods, and relation to existing work.

\xhdr{\texorpdfstring{\recmodel{s}}{}.} Here, we propose a recursive scheme based on the reconstruction conjecture to build a most-expressive representation for graphs. Intuitively, \recmodel{s} recursively compute subgraph representations based on smaller subgraph representations. Formally, let $\mathfrak{G}_{\leq n^*}^\dagger := \{ G \in \mathfrak{G} \colon |V(G)| \leq n^*\}$ be the class of undirected graphs with unattributed edges and maximum size $n^*$. Further, let $f^{(k)}_{\bW} \colon \cup^{\infty}_{m=1} \real^{m \times d} \to \real^{t}$ be a (row-wise) permutation invariant function and let  $h_{\{i,j\}}$ be a most-expressive representation of the two-vertex subgraph induced by vertices $i$ and $j$. We can now define the representation $r(G[V(G)])$ of a graph $G$ in $\mathfrak{G}_{\leq n^*}^\dagger$ in a recursive fashion as 
\begin{equation*}
  r(G[S]) =
\begin{cases}
   f^{(|S|)}_{\bW} \left( \concat{\lms r{(G[S - v ])} \colon v \in S \rms } \right),  \text{ if } 3 \leq |S| \leq n \\
    h_S(G[S]),  \text{\hspace{135pt} if } |S|=2.
\end{cases}
\end{equation*}

Again, $\textsc{\concat{}}$ is row-wise concatenation in some arbitrary order. Note that in practice, it is easier to build the subgraph representations in a bottom-up fashion. First, use two-vertex subgraph representations to compute all three-vertex subgraph representations. Then, perform this inductively until we arrive at a single whole-graph representation. In \Cref{supp:recmodel}, we prove the expressive power of \recmodel{s}, i.e., we show how if the reconstruction conjecture holds, it is a most-expressive representation of undirected edge-unattributed graphs. Finally, we show its quadratic number of parameters, exponential computational complexity, and relation to existing work.

\section{Reconstruction Graph Neural Networks}\label{sec:gnn}

Although \recmodel{s}  provide a most-expressive representation for undirected, unattributed-edge graphs, they are impractical due to their computational cost. 
Similarly, \kmodel{s} are not scalable since increasing their expressive power requires computing most-expressive representations of larger $k$-size subgraphs. Hence, to circumvent the computational cost, we replace the most-expressive representations of subgraphs from \kmodel{s} with GNN representations, resulting in what we name \emph{\gnn{s}}. This change allows for scaling the model to larger subgraph sizes, such as $n-1$, $n-2$, \dots, et cetera. 

Since, in the general case, graph reconstruction assumes most-expressive representations of subgraphs, it cannot capture \gnn{s}' expressive power directly. Hence, we provide a theoretical characterization of the expressive power of \gnn{s} by coupling graph reconstruction and the GNN expressive power characterization based on the $1$-WL algorithm. Nevertheless, in~\cref{gnnreco}, we devise conditions under which \gnn{s} have the same power as \kmodel{s}. Finally, we show how graph reconstruction can act as a (provably) powerful inductive bias for invariances to vertex removals, which boosts the performance of GNNs even in tasks where all graphs are already distinguishable by them (see \Cref{supp:exp}). We refer to \Cref{supp:gnn} for a discussion on the model's relation to existing work.

Formally, let $ f_{\bW} \colon \cup^{\infty}_{m=1} \real^{m \times d} \to \real^{t}$ be a (row-wise) permutation invariant function and $h_{\bW}^{\text{GNN}} \colon \cG \to \real^{1 \times d} $ a GNN representation. Then, for $3 \leq k < |V(G)|$,
a \gnn{} takes the form 
\begin{equation*}
    r_{\bW}^{(k, \text{GNN})}(G) \!=\! f_{\bW_1}\!\left( \concat{\lms h_{\bW_2}^{\text{GNN}}(G[S]) \colon S \in \cS^{(k)} \rms } \right)\!, 
\end{equation*}
with parameters $\bW = \{ \bW_1, \bW_2 \} $, where $\cS^{(k)}$ is the set of all $k$-size subsets of $V(G)$, and $\textsc{Concat}$ is row-wise concatenation in some arbitrary order. 

\xhdr{Approximating $r_{\bW}^{(k, \text{GNN})}$.}  By design, \gnn{s} require computing GNN representations for all $k$-vertex subgraphs, which might not be feasible for large graphs or datasets. To address this, we discuss a direction to circumvent computing all subgraphs, i.e., approximating $r_{\bW}^{(k, \text{GNN})}$ by sampling. 

One possible choice for $f_{\bW}$ is Deep Sets~\citep{zaheer2017deep}, which we use for the experiments in~\Cref{sec:exp}, where the representation is a sum decomposition taking the form $r_{\bW}^{(k, \text{GNN})}(G) = \rho_{\bW_{1}}\bigg( \sum_{S \in \cS^{(k)}} \phi_{\bW_{2}} \Big( h_{\bW_3}^{\text{GNN}}(G[S])\Big) \bigg)$,
where $\rho_{\bW_{1}}$ and $\phi_{\bW_{2}}$ are permutation sensitive functions, such as feed-forward networks. 
We can learn the \gnn{} model over a training dataset $ \cD^{(\text{tr})} := \{ (G_i, y_i) \}_{i=1}^{N^{(\text{tr})}}$w and a loss function $l$ by minimizing the empirical risk
\begin{equation}\label{eq:loss} 
\widehat{\cR}_k(\cD^{(\text{tr})};\bW_1, \bW_2, \bW_3) = \frac{1}{N^{\text{tr}}}  \sum_{i=1}^{N^{\text{tr}}} l\big(r_{\bW}^{(k, \text{GNN})}(G_i), y_i\big).
\end{equation}
\Cref{eq:loss} is impractical for all but the smallest graphs, since $r_{\bW}^{(k, \text{GNN})}$ is a sum over all $k$-vertex induced subgraphs $\cS^{(k)}$ of $G$.
Hence, we approximate $r_{\bW}^{(k, \text{GNN})}$ using
a sample $\cS_B^{(k)} \subset \cS^{(k)}$ drawn uniformly at random at every gradient step, i.e., $ \widehat{r}_{\bW}^{(k, \text{GNN})}(G) = \rho_{\bW_{1}}\Big(|\cS^{(k)}|/|\cS_B^{(k)}| \sum_{S \in \cS_B^{(k)}} \phi_{\bW_{2}} \big( h_{\bW_3}^{(\text{GNN})}(G[S]) \big) \Big)$.
Due to non-linearities in $\rho_{\bW_{1}}$ and $l$, plugging $\widehat{r}^{(k, \text{GNN})}_{\bW}$ into \Cref{eq:loss} does not provide us with an unbiased estimate of $\widehat{\cR}_k$. However, if $l(\rho_{\bW_1}(a),y) $ is convex in $a$, in expectation we will be minimizing a proper upper bound of our loss, i.e., $ 1/N^{\text{tr}}  \sum_{i=1}^{N^{\text{tr}}} l\big(r_{\bW}^{(k, \text{GNN})}(G_i), y_i\big) \leq  1/N^{\text{tr}} \sum_{i=1}^{N^{\text{tr}}} l\big(\widehat{r}_{\bW}^{(k, \text{GNN})}(G_i), y_i\big)$.
In practice, many models rely on this approximation and provide scalable and reliable training procedures, cf. \citep{murphy2019janossy,murphy2019relational, zaheer2017deep, hinton2012improving}.

\subsection{Expressive power}\label{sec:exppower}

Now, we analyze the expressive power of \gnn{s}. It is clear that \gnn{s} $\preceq$ \kmodel{s}, however the relationship between \gnn{s} and GNNs is not that straightforward. At first, one expects that there exists a well-defined hierarchy---such as the one in \kmodel{s} (see \Cref{obs:kdeck})---between GNNs, $(n-1)$-Reconstruction GNNs, $(n-2)$-Reconstruction GNNs, and so on. However, \emph{there is no such hierarchy}, as we see next.

\xhdr{Are GNNs more expressive than \gnn{s}?}
It is well-known that GNNs cannot distinguish regular graphs~\citep{Arv+2015,Mor+2019}. By leveraging the fact that regular graphs are reconstructible~\citep{kellytrees}, we show that cycles and circular skip link (CSL) graphs---two classes of regular graphs---can indeed be distinguished by  \gnn{s}, implying that \gnn{s} are not less expressive than GNNs. We start by showing that \gnn{s} can distinguish the class of cycle graphs. 

\begin{theorem}[\gnn{s} can distinguish cycles] \label{thm:cycle} Let $G \in \mathfrak{G}$ be a cycle graph with $n$ vertices and $k:=n-\ell$. An $(n-\ell)$-Reconstruction GNN assigns a unique representation to $G$ if \\i) $ \ell < (1+o(1)) \Big(  \frac{2\log n}{ \log \log n }  \Big)^{1/2}$ and ii) $ n \geq (\ell -\log\ell + 1) \Big( \frac{e + e\log\ell + e + 1}{(\ell-1) \log \ell - 1  }\Big) + 1$ hold.
\end{theorem}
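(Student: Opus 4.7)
The plan is to reduce the theorem to two independent ingredients and then combine them via the Reconstruction GNN construction: (a) a pure reconstruction-theoretic claim that $C_n$ is $(n-\ell)$-reconstructible inside $\mathfrak{G}$ under conditions (i)--(ii), i.e., any $H \in \mathfrak{G}$ with $|V(H)| = n$ satisfying $\mathcal{D}_{n-\ell}(H) = \mathcal{D}_{n-\ell}(C_n)$ is isomorphic to $C_n$; and (b) a representational claim that a $1$-WL-expressive GNN distinguishes every pair of non-isomorphic cards that appears in $\mathcal{D}_{n-\ell}(C_n)$. Together, (a) and (b) yield injectivity of $r_{\bW}^{(n-\ell,\text{GNN})}$ on the isomorphism class of $C_n$ by exactly the same reasoning as \Cref{prop:kmodel}, since assuming (b) the GNN behaves as a most-expressive representation on the (restricted) family of cards that actually occur.

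I would dispatch (b) first because it is the easy half. Every card $C_n[S]$ is obtained by deleting $\ell$ vertices from a cycle of maximum degree $2$, so it is a disjoint union of at most $\ell$ induced paths whose vertex counts sum to $n-\ell$. Color refinement on such a graph stabilises in $O(n-\ell)$ rounds and assigns to each vertex a colour that encodes its distance to the nearest endpoint of its component; paths of different lengths therefore receive different stable-colour histograms, and disjoint unions are distinguished by the multiset of their component-histograms. A sufficiently deep $1$-WL-expressive GNN $h^{\text{GNN}}_{\bW_2}$ is hence injective on the isomorphism types appearing in $\mathcal{D}_{n-\ell}(C_n)$, so the map from $\mathcal{D}_{n-\ell}(C_n)$ to the multiset of GNN embeddings is injective.

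The main obstacle is (a). I would begin by noting that condition (i) forces $n-\ell \geq 4$, so any vertex $v \in V(H)$ of degree $\geq 3$ would survive together with three of its neighbours in some card of size $n-\ell$, contradicting that every card of $H$ (matching that of $C_n$) has maximum degree $\leq 2$. Thus $H$ itself has maximum degree $\leq 2$, i.e., $H$ is a disjoint union of paths and cycles on $n$ vertices. The remaining task is to separate $C_n$ from every other such disjoint union via its $(n-\ell)$-deck, which I would do by a multiplicity-counting argument: for each path-length partition $\pi$ of $n-\ell$ into at most $\ell$ parts, the number of cards of $C_n$ of shape $\pi$ is an explicit function of $n$ and $\pi$ (determined by the cyclic placements of $\ell$ deleted vertices producing those gap-lengths), while the analogous count for a competitor $H$ is a different polynomial expression fixed by the component sizes of $H$. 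Matching these counts simultaneously over all $\pi$ over-determines $H$. Condition (i) bounds the number of relevant partitions $\pi$ via the Hardy--Ramanujan estimate $p(\ell) = e^{O(\sqrt{\ell})}$, and condition (ii) is calibrated precisely so that $n$ dominates the product of this partition count with the enumeration of candidate disjoint-union graphs, forcing $C_n$ to be the unique match. This follows the template used by \citet{nydl2001graph} and \citet{kostochka2019reconstruction} for $k$-reconstructibility of structured graph families.

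The hardest part is verifying the quantitative balance in (a)---i.e., checking that the specific bounds in (i) and (ii) make the counting go through, so that no other disjoint union of paths and cycles on $n$ vertices produces the same $(n-\ell)$-deck as $C_n$. Step (b), by contrast, is a textbook consequence of color refinement on subcubic forests and presents no real difficulty.
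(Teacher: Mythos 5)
There is a genuine gap in part (a), and it sits exactly where you admit the difficulty lies. Your reduction showing that any reconstruction $H$ has maximum degree at most two (hence is a disjoint union of paths and cycles) is fine, and your part (b) matches the paper: cards of $C_n$ are disjoint unions of paths, $1$-WL identifies forests, so the underlying GNN is injective on the deck. But the remaining claim---that among all max-degree-two graphs on $n$ vertices, only $C_n$ has the $(n-\ell)$-deck of $C_n$, \emph{under precisely conditions (i) and (ii)}---is never established. Your proposed route (matching card multiplicities over path-length partitions, with Hardy--Ramanujan estimates on $p(\ell)$, and the assertion that (i) and (ii) are ``calibrated'' to make the count dominate) is speculative: no such calibration is verified, and there is no reason the specific expressions in (i) and (ii) would emerge from that computation. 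In fact they do not come from a partition-counting balance at all.

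The paper closes this step not by a fresh counting argument but by reducing ``being a cycle'' to two invariants that are already known to be $(n-\ell)$-reconstructible: the degree list, which \citet{taylor1990reconstructing} shows is $(n-\ell)$-reconstructible exactly under condition (ii) (\Cref{lem:deg}), and connectedness, which \citet{spinoza2019reconstruction} show is $(n-\ell)$-reconstructible exactly under condition (i) (\Cref{lem:conn}). Since a cycle is the unique connected $2$-regular graph, reconstructibility of these two invariants from the $(n-\ell)$-deck identifies $C_n$, and combining this with your part (b) and the \Cref{prop:kmodel}-style argument finishes the proof. So the missing idea in your write-up is this reduction to known reconstructible graph functions; without it (or without actually carrying out the counting you sketch, which would at best yield different conditions), the theorem as stated---with those particular bounds---is not proved.
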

The following results shows that \gnn{s} can distinguish the class of CSL graphs.
\begin{theorem}[\gnn{s} can distinguish CSL graphs] \label{thm:csl} 
Let $G,H \in \mathfrak{G}$ be two non-isomorphic circular skip link (CSL) graphs (a class of 4-regular graphs, cf.  \citep{Che+2020,murphy2019relational}). Then, $(n-1)$-Reconstruction GNNs can distinguish $G$ and $H$.
\end{theorem}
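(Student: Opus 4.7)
The plan is to prove the theorem in three stages. First, I would reduce the statement to a claim about 1-WL distinguishing single vertex-deleted cards. Since CSL graphs are vertex-transitive under the cyclic rotation, every $(n-1)$-card in $\cD_{n-1}(G)$ lies in a single isomorphism class, namely that of $G - v$ for any $v \in V(G)$, and likewise for $H$. Consequently, $r^{(n-1,\text{GNN})}_{\bW}(G)$ depends only on the GNN representation of a single card (aggregated $n$ times), and distinguishing $G$ from $H$ through an $(n-1)$-Reconstruction GNN is equivalent to finding a message-passing GNN that distinguishes the single cards $G - v$ and $H - v'$. Since message-passing GNNs are equivalent in expressive power to $1$-WL on unattributed graphs, it suffices to show that $1$-WL distinguishes these two cards.

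Second, I would invoke the classical result of \citet{kellytrees} that regular graphs are reconstructible from their $(n-1)$-deck. Since the $4$-regular CSL graphs $G$ and $H$ are thereby reconstructible, and each of their decks consists of $n$ copies of a single card, $G \not\simeq H$ forces the cards $G - v$ and $H - v'$ themselves to be non-isomorphic. It then remains to show that $1$-WL distinguishes these non-isomorphic cards.

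Third, for the main technical step, I would analyze $1$-WL on a CSL card. Removal of $v$ produces exactly four degree-$3$ vertices (the former cycle- and skip-neighbors of $v$) and $n - 5$ degree-$4$ vertices, so $1$-WL's initial coloring already yields a non-trivial bipartition (an asymmetry that GNNs cannot see on the regular graphs $G, H$ themselves). After one refinement round, each degree-$4$ vertex is further colored by its number of degree-$3$ neighbors, a quantity determined by its combinatorial position relative to $v$ and by the skip parameter $k$. Iterating color refinement propagates this information: the stable coloring encodes for each vertex a signature related to its combinatorial distances to the four damaged vertices, which is in turn governed by the arithmetic of $k$ modulo $n$. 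The plan is to argue that the resulting color-multiset histograms are distinct for cards arising from non-isomorphic CSL graphs.

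The main obstacle is precisely this last step, since $1$-WL is not a complete isomorphism invariant in general, and ruling out color-histogram collisions between non-isomorphic CSL cards requires combinatorial care. The cleanest path I envision is to extract from the $1$-WL stable coloring an invariant that pins down the skip parameter $k$ up to the standard equivalence $k \equiv \pm k^{\pm 1} \pmod{n}$ under which CSL graphs coincide, and to show that non-isomorphic CSL graphs yield distinct invariants --- either via a case analysis on the arithmetic of $n$ and $k$, or by leveraging the structural analyses of CSL graphs in \citep{Che+2020, murphy2019relational}. A small numerical check (e.g., $CSL(8,2)$ vs.\ $CSL(8,3)$, where the second card is $K_{3,4}$ with only two stable colors while the first yields three) lends confidence that such an invariant exists generically.
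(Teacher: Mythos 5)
Your setup matches the paper's proof: you reduce, via vertex-transitivity, to showing that $1$-WL distinguishes a single card $\cG_{\text{skip}}^{-i}(M,R)$ from a single card $\cG_{\text{skip}}^{-i}(M,R')$, and you correctly identify that vertex removal creates exactly four degree-$3$ vertices whose positions encode the skip parameter. (Your intermediate appeal to Kelly's reconstructibility of regular graphs is harmless but not needed: the paper never establishes non-isomorphism of the cards separately, since exhibiting a $1$-WL distinction of the cards already implies it.) However, there is a genuine gap exactly where you flag it: the claim that the $1$-WL color histograms of the two non-isomorphic cards actually differ is the substance of the theorem, and you only state a plan ("extract an invariant that pins down $k$ \dots either via a case analysis \dots or by leveraging the structural analyses") without carrying out any argument. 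Since $1$-WL is not a complete invariant, nothing in your write-up rules out a collision, so the proof is not finished.

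The paper closes this step with a short concrete argument you would need to supply: in $\cG_{\text{skip}}^{-i}(M,R)$ the degree-$3$ vertices are $i-1$, $i+1$, $(i+R) \bmod M$ and the vertex $j$ with $(j+R)\bmod M = i$, and since $R \neq R'$, the distance from $i+1$ to its degree-$3$ skip-neighbor differs between the two cards; color refinement seeded at the degree-$3$ vertices then propagates along the vertices between $i+1$ and $(i+R)\bmod M$ (respectively $(i+R')\bmod M$), and the analogous argument from $i-1$, yielding different stable color histograms for the two cards. Some version of this distance/propagation analysis (or an equivalent explicit invariant of the stable coloring) is what your proposal is missing; without it the proof attempt is an outline rather than a proof. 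Two minor additional points: your sanity-check pair is not well-formed, since $\cG_{\text{skip}}(8,2)$ violates the co-primality requirement of the CSL definition; and the statement should be proved for all non-isomorphic CSL pairs with the same $M$, so a "generic" argument with unexamined exceptional cases would not suffice.
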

\vspace{-0.5em}
Hence, if the conditions in~\cref{{thm:cycle}} hold, GNNs $\not \preceq$ $(n-\ell)$-Reconstruction GNNs.
\Cref{fig:rec-gnn} (cf. \Cref{supp:gnn}) depicts how \gnn{s} can distinguish a graph that GNNs cannot. The process essentially breaks the local symmetries that make GNNs struggle by removing one (or a few) vertices from the graph. By doing so, we arrive at distinguishable subgraphs. Since we can reconstruct the original graph with its unique subgraph representations, we can identify it. See \Cref{supp:gnn} for the complete proofs of \Cref{thm:cycle,thm:csl}.

\xhdr{Are GNNs less expressive than \gnn{s}?} 
We now show that GNNs can distinguish graphs that \gnn{s} with small $k$ cannot. We start with \Cref{prop:gnn-smallk} stating that there exist some graphs that GNNs can distinguish which \gnn{s} with small $k$ cannot.

\begin{proposition}\label{prop:gnn-smallk} 
     GNNs $ \not \preceq $ \gnn{s} for $k \leq \ceil{n/2} $.
\end{proposition}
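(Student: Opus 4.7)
The aim is to exhibit, for each $k$, a pair of non-isomorphic graphs on $n=2k$ vertices (so that $k = \lceil n/2 \rceil$ and the constraint is met) that a GNN distinguishes but onto which every $k$-Reconstruction GNN collapses to a single representation.

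The first step is a short structural argument showing that $G \sim_k H$ implies $r_{\bW}^{(k,\text{GNN})}(G) = r_{\bW}^{(k,\text{GNN})}(H)$ for every parameter choice $\bW$. Indeed, $h_{\bW}^{\text{GNN}}$ is isomorphism-invariant, so $h_{\bW}^{\text{GNN}}(G[S])$ depends only on the isomorphism type $\cI(G[S])$. Hence the multiset $\lms h_{\bW}^{\text{GNN}}(G[S]) : S \in \cS^{(k)} \rms$ is determined entirely by the $k$-deck $\cD_k(G)$. Since the outer aggregator $f_{\bW}$ is permutation-invariant on multisets and $\cD_k(G) = \cD_k(H)$ by hypothesis, the two whole-graph representations coincide. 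This gives the $k$-Reconstruction-GNN side of the non-distinction for free.

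The second step invokes the non-$k$-reconstructibility constructions of \citet{nydl1981finite,spinoza2019reconstruction} cited in \cref{sec:prelims}: for every $k$ there exist non-isomorphic pairs on $2k$ vertices with matching $k$-decks. Among these one would pick a pair that is additionally distinguished by 1-WL---for instance, a pair of non-isomorphic trees, since 1-WL canonicalises trees via the standard colour-refinement tree-isomorphism argument and is therefore matched by some GNN. Plugging this pair into the invariance of step one yields graphs that GNNs distinguish but no $k$-Reconstruction GNN does, establishing GNNs $\not \preceq$ \gnn{s} at $k = \lceil n/2 \rceil$, which is the desired conclusion.

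The main obstacle is the second step: guaranteeing that the standard non-$k$-reconstructible pair can be chosen 1-WL-distinguishable. Trees supply the cleanest witness, because non-isomorphism already forces 1-WL separation; if a particular $k$ lacks a convenient tree pair in the literature, the fallback is to take an arbitrary Nydl-style pair and attach identical rooted, 1-WL-asymmetric gadgets at a distinguished vertex of each graph, chosen so that the $k$-deck coincidence is preserved while the global 1-WL colouring becomes asymmetric. Verifying that the modification does not spoil equality of $k$-decks---rather than checking the base pair itself---is where the bulk of the technical care would go.
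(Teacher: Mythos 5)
Your proposal is correct and follows essentially the same route as the paper: the paper first notes that \gnn{s} are limited by the $k$-deck (via $\preceq$ \kmodel{s} and \Cref{prop:kmodel}), then takes as witness Nydl's spider graphs, which are not $\ceil{n/2}$-reconstructible yet are trees and hence distinguished by $1$-WL (\Cref{lem:spider,lem:wltrees}), extending to all $k \leq \ceil{n/2}$ via \Cref{obs:kdeck}. Your worry in the second step is moot---the non-reconstructible pairs in the cited literature are already trees (spiders), so no gadget construction is needed; just make sure to invoke the deck hierarchy to cover $k$ strictly below $\ceil{n/2}$.
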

\vspace{-0.5em}
On the other hand, the analysis is more interesting for larger subgraph sizes, e.g., $n-1$, where there are no known examples of (undirected, edge-unattributed) non-reconstructible graphs. There are graphs distinguishable by GNNs with at least one subgraph not distinguishable by them; see \Cref{supp:gnn}. However, the analysis is whether the multiset of all subgraphs' representations can distinguish the original graph. Since we could not find any counter-examples, we conjecture that every graph distinguishable by a GNN is also distinguishable by a \gnn{} with $k=n-1$ or possibly more generally with any $k$ close enough to $n$. In \Cref{supp:gnn}, we state and discuss the conjecture, which we name WL reconstruction conjecture. If true, the conjecture implies GNNs $\prec$ $(n-1)$-Reconstruction GNNs. Moreover, if we use the original GNN representation together with \gnn{s}, \Cref{thm:cycle,thm:csl} imply that the resulting model is strictly more powerful than the original GNN. 

\xhdr{Are \gnn{s} less expressive than higher-order ($\kappa$-WL)
GNNs?} 

Recently a line of work, e.g.,~\cite{azizian2020characterizing,Mar+2019a,Mor+2019b}, explored higher-order GNNs aligning with the $\kappa$-WL hierarchy. Such architectures have, in principle, the same power as the $\kappa$-WL algorithm in distinguishing non-isomorphic graphs. Hence, one might wonder how \gnn{s} stack up to $\kappa$-WL-based algorithms. The following result shows that pairs of non-isomorphic graphs exist that a $(n-2)$-Reconstruction GNN can distinguish but the $2$-WL cannot.
\begin{proposition}\label{prop:2wl}
Let $2$-GNNs be neural architectures with the same expressiveness as the $2$-WL algorithm. Then,  $(n-2)\text{-Reconstruction GNN} \not \preceq 2\text{-GNN}s \equiv 2\text{-WL}$.
 \end{proposition}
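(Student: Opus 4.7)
The plan is to exhibit a pair of non-isomorphic graphs $G,H$ on $n$ vertices such that 2-WL produces identical colorings on $G$ and $H$---so no 2-GNN distinguishes them---while the multisets of GNN representations of their $(n-2)$-cards differ, so an $(n-2)$-Reconstruction GNN does distinguish them. My candidate pair is two non-isomorphic strongly regular graphs sharing the same parameter tuple, concretely the $4\times 4$ rook graph and the Shrikhande graph, both $\mathrm{SRG}(16,6,2,2)$.

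For the 2-WL equivalence, the stable 2-WL coloring of any strongly regular graph is determined only by the parameter tuple $(n,k,\lambda,\mu)$: each pair of vertices is colored by adjacency together with the common-neighbor count ($\lambda$ if adjacent, $\mu$ otherwise), and the resulting partition is already refinement-stable (see the discussion in \citep{Arv+2015}). Hence $G$ and $H$ produce identical 2-WL outputs and no 2-GNN separates them.

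For the $(n-2)$-Reconstruction GNN separation, the two graphs differ sharply in clique structure: $G$ contains eight $K_4$'s (the four rows and the four columns), while $H$ has clique number three. Each $K_4$ of $G$ survives in $\binom{n-4}{2}$ of the $(n-2)$-cards, so $\mathcal{D}_{n-2}(G)$ contains many $K_4$-bearing cards while $\mathcal{D}_{n-2}(H)$ contains none. I would then choose $\{u_1,u_2\}\subset V(G)$ outside any common row or column so that $G[V(G)\setminus\{u_1,u_2\}]$ still contains intact $K_4$'s; because the card is irregular (degrees now lie in $\{4,5,6\}$), the iterated color-refinement signature of a vertex sitting inside a surviving $K_4$ is a 1-WL-stable invariant that I would show is not realized in any card of $H$. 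Composing a 1-WL-complete GNN such as GIN \citep{Xu+2018b} on each card with an injective multiset aggregator $f_{\bW_1}$ (e.g., Deep Sets, as used in \Cref{sec:gnn}) then separates the two card-embedding multisets, giving $r_{\bW}^{(n-2,\text{GNN})}(G) \neq r_{\bW}^{(n-2,\text{GNN})}(H)$.

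The main obstacle is closing this last step rigorously, since color refinement is notoriously insensitive to generic clique containment. I would handle it by finite case analysis: both $\mathrm{Aut}(G)$ and $\mathrm{Aut}(H)$ act transitively on vertices with only a handful of orbits on vertex pairs, so up to isomorphism there are only a small number of distinct card types in each deck. Computing the stable 1-WL partition on each orbit representative---by hand or with a small computer-algebra check---is enough to exhibit a card of $G$ whose 1-WL signature has positive multiplicity in $\mathcal{D}_{n-2}(G)$ and zero multiplicity in $\mathcal{D}_{n-2}(H)$, after which 1-WL-completeness of GIN and injectivity of $f_{\bW_1}$ lift the separation to the $(n-2)$-Reconstruction GNN and complete the argument.
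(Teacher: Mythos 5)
Your proposal is correct and matches the paper's own argument: the paper also takes the line graph of $K_{4,4}$ (the $4\times 4$ rook graph) and the Shrikhande graph as non-isomorphic strongly regular graphs with parameters $(16,6,2,2)$ that $2$-WL cannot separate, and likewise settles the $(n-2)$-card separation by a computer verification that $1$-WL distinguishes the decks, lifted through a $1$-WL-complete GNN and an injective multiset aggregator. Your clique-structure heuristic and orbit-based reduction of card types are just a more explicit organization of the same computational check the paper performs.
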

\vspace{-0.5em}
As a result of \Cref{prop:2wl}, using a $(n-2)$-Reconstruction GNN representation together with a 2-GNN increases the original 2-GNN's expressive power.

\subsection{Reconstruction as a powerful extra invariance for general graphs}\label{sec:inv}
An essential feature of modern machine learning models is capturing invariances of the problem of interest~\citep{lyle2020benefits}. It reduces degrees of freedom while allowing for better generalization~\citep{bloem2020probabilistic,lyle2020benefits}. GRL is predicated on invariance to vertex permutations, i.e., assigning the same representation to isomorphic graphs.
But are there other invariances that could improve generalization error?

\xhdr{$k$-reconstruction is an extra invariance.} Let $P( G, Y )$ be the joint probability of observing a graph $G$ with label $Y$. Any $k$-reconstruction-based model, such as \kmodel{s} and \gnn{s}, by definition assumes $P( G, Y )$ to be invariant to the $k$-deck, i.e., $P( G, Y ) = P( H, Y ) $ if $\cD_k(G)=\cD_k(H)$. 
Hence, our neural architectures for \kmodel{s} and \gnn{s} directly define this extra invariance beyond permutation invariance.
{\em How we do know it is an extra invariance and not a consequence of permutation invariance?}
It does not hold on directed graphs~\cite{stockmeyer1981census}, where permutation invariance still holds.

\xhdr{Hereditary property variance reduction.} We now show that the invariance imposed by $k$-reconstruction helps in tasks based on \textit{hereditary properties}~\citep{borowiecki1997survey}. A graph property $\mu(G)$ is called hereditary if it is invariant to vertex removals, \textit{i.e.} $\mu(G) = \mu(G[V(G)-v]) $ for every $ v \in V(G)$ and $G \in \cG$. 
By induction the property is invariant to every size subgraph, i.e., $\mu(G) = \mu(G[S]) $ for every $ S \in \cS^{(k)}, k \in [n] $ where $\cS^{(k)}$ is the set of all $k$-size subsets of $V(G)$. Here, the property is invariant to any given subgraph. For example, every subgraph of a planar graph is also planar, every subgraph of an acyclic graph is also acyclic, any subgraph of a $j$-colorable graph is also $j$-colorable.
A more practically interesting (weaker) invariance would be invariance to a few vertex removals.
Next we define \textit{$\delta$-hereditary properties} (a special case of a $\preceq$-hereditary property). 
In short, a property is $\delta$-hereditary if it is a hereditary property for graphs with more than $\delta$ vertices.

\begin{definition}[$\delta$-hereditary property] A graph property $\mu \colon \cG \to \cY $ is said to be $\delta$-hereditary if $\mu(G) = \mu(G[V(G)-v]), \: \forall \: v \in V(G), G \in \{  H \in \cG : |V(H)| > \delta \}$. That is, $\mu$ is uniform in $G$ and all subgraphs of $G$ with more than $\delta$ vertices.
\end{definition}
\vspace{-0.5em}

Consider the task of predicting $Y|G := \mu(G)$.
\Cref{thm:var} shows that \gnn{s} is an invariance that reduces the variance of the empirical risk associated with $\delta$-hereditary property tasks. See \Cref{supp:gnn} for the proof.

\begin{theorem}[\gnn{s} for variance reduction of  $\delta$-hereditary tasks] \label{thm:var} Let $P(G,Y)$ be a $\delta$-hereditary distribution, i.e., $Y := \mu(G)$ where $\mu$ is a $\delta$-hereditary property. Further, let $P(G,Y) = 0$ for all $G \in \cG$ with $|V(G)| \geq \delta + \ell $, $\ell > 0$. Then, for \gnn{s} taking the form $\rho_{\bW_{1}}\bigg( 1/|\cS^{(k)}| \sum_{S \in \cS^{(k)}} \phi_{\bW_{2}} \Big( h_{\bW_3}^{\text{GNN}}(G[S])\Big) \bigg)$, if $l(\rho_{\bW_1}(a),y)$ is convex in $a$, we have
$$ \text{Var} [ \widehat{\cR}_k ]  \leq \text{Var} [ \widehat{\cR}_{\text{GNN}} ], $$
where $\widehat{\cR}_k$ is the empirical risk of \gnn{s} with $k:= n - \ell$ (cf.\ \Cref{eq:loss}) and $\widehat{\cR}_{\text{GNN}}$ is the empirical risk of GNNs.
\end{theorem}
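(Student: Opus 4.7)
The plan is to first use the $\delta$-hereditary property to show that, under the stated support condition, every $k$-card carries the same label as the whole graph, and then feed that into a Jensen-plus-exchangeability argument. Concretely, for any $G$ in the support and any $S \in \cS^{(k)}$ with $k = n-\ell$, I first prove $\mu(G) = \mu(G[S])$ by iterating the single-vertex invariance $\mu(G) = \mu(G[V(G)-v])$ exactly $\ell$ times; the support condition ensures every intermediate graph in the chain has more than $\delta$ vertices, so each application of $\delta$-hereditarity is licensed. Since the training set is i.i.d., both variances equal $1/N^{\mathrm{tr}}$ times the per-sample loss variance, so it suffices to show $\mathrm{Var}[L_k] \leq \mathrm{Var}[L_{\mathrm{GNN}}]$, where $L_k, L_{\mathrm{GNN}}$ denote the per-sample losses and I identify the GNN baseline with the degenerate \gnn{} at $k=n$, i.e.\ $L_{\mathrm{GNN}} = l(\rho_{\bW_1}(\phi_{\bW_2}(h^{\mathrm{GNN}}_{\bW_3}(G))), Y)$.

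Next, write $\psi_S := \phi_{\bW_2}(h^{\mathrm{GNN}}_{\bW_3}(G[S]))$ and $m := |\cS^{(k)}|$. The convexity of $l(\rho_{\bW_1}(\cdot), y)$ combined with Jensen's inequality yields the pointwise upper bound
\[
L_k \;=\; l\!\bigg(\rho_{\bW_1}\!\Big(\tfrac{1}{m}\sum_{S}\psi_S\Big),\, Y\bigg) \;\leq\; \tfrac{1}{m}\sum_{S} l\big(\rho_{\bW_1}(\psi_S),\, Y\big) \;=:\; \bar L,
\]
and by the first step each summand $L_S := l(\rho_{\bW_1}(\psi_S), Y)$ equals $l(\rho_{\bW_1}(\psi_S), \mu(G[S]))$. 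So every $L_S$ is a genuine ``GNN loss on the $k$-card $G[S]$''---same structural form as $L_{\mathrm{GNN}}$ but evaluated on a subgraph instead of the full graph.

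The variance reduction then follows from two complementary observations. First, the averaged representation $\tfrac{1}{m}\sum_S \psi_S$ is invariant under vertex relabelling, so it depends on $G$ only through the $k$-deck $\cD_k(G)$; combined with the $\cD_k$-measurability of $Y$ inherited from the first step, this makes $L_k$ a $\cD_k$-measurable random variable, whereas $L_{\mathrm{GNN}}$ is in general not. Second, the permutation invariance of $P(G,Y)$ and the isomorphism invariance of $h^{\mathrm{GNN}}$ make $\{L_S\}_{S \in \cS^{(k)}}$ exchangeable, so the standard covariance-of-an-exchangeable-average inequality yields $\mathrm{Var}(\bar L) \leq \mathrm{Var}(L_{S_0})$ for any fixed $S_0$. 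A law-of-total-variance decomposition of $\mathrm{Var}(L_{\mathrm{GNN}})$ conditioned on $\cD_k$ then splits it into within-deck and between-deck pieces: the within-deck fluctuations are precisely what the \gnn{} averaging collapses (because $L_k$ is $\cD_k$-measurable), while the Jensen bound $L_k \leq \bar L$ controls the remaining between-deck part. Chaining these yields the desired inequality.

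The nontrivial point is the last step: Jensen is only a pointwise/mean bound, and a pointwise inequality $0 \leq X \leq Y$ does not in general imply $\mathrm{Var}(X) \leq \mathrm{Var}(Y)$. I expect the bulk of the technical effort to lie in carefully combining the $\cD_k$-measurability of $L_k$---which kills exactly the within-deck variance that $L_{\mathrm{GNN}}$ retains---with exchangeability of $\{L_S\}$ to simultaneously absorb the pointwise surplus left by Jensen and the residual within-deck variance. Structurally, the variance reduction comes from the extra invariance that $k$-reconstruction imposes; the main work is translating that coarsening of the underlying sigma-algebra into an honest variance comparison against the GNN baseline.
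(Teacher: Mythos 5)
Your overall skeleton matches the paper's: iterate the single-vertex invariance to get $\mu(G)=\mu(G[S])$ for every $(n-\ell)$-card on the support, reduce to the per-sample loss variance, bound the feature-averaging loss $L_k$ by the loss-averaging quantity $\bar L$ via convexity and Jensen, and compare $\bar L$ to a GNN baseline by a total-variance/exchangeability argument (the paper does this through its extension of the invariance lemma, i.e., the law of total covariance, with the $\delta$-hereditary assumption entering only to guarantee each card carries the correct label). However, there is a genuine gap, and you flag it yourself: the ingredients are assembled but the inequality is never actually derived. Two links are missing. First, you take the baseline to be the GNN loss on the \emph{full} graph, $L_{\mathrm{GNN}} = l(\rho_{\bW_1}(\phi_{\bW_2}(h^{\mathrm{GNN}}_{\bW_3}(G))),Y)$, but neither the exchangeable-average bound $\mathrm{Var}(\bar L)\le \mathrm{Var}(L_{S_0})$ nor a decomposition of $\mathrm{Var}(L_{\mathrm{GNN}})$ given $\cD_k$ connects these two sides: $L_{S_0}$ is a loss evaluated on a card, whose GNN features bear no a priori variance relation to the full-graph features, and the between-deck term $\mathrm{Var}(\mathbb{E}[L_{\mathrm{GNN}}\mid\cD_k(G)])$ is not a quantity your Jensen bound controls (Jensen relates $L_k$ to $\bar L$, not to $\mathbb{E}[L_{\mathrm{GNN}}\mid\cD_k(G)]$). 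The paper avoids this mismatch by \emph{defining} the GNN risk through the same subgraph-evaluated pipeline (its estimator a) uses $h^{\mathrm{GNN}}_{\bW_3}(G[S])$), so that the data-augmentation risk $\widehat{\cR}_\circ$ is exactly the conditional expectation (Rao--Blackwellization) of the per-sample GNN loss given the graph, and the law of total covariance in its Lemma yields $\mathrm{Var}[\widehat{\cR}_\circ]\le\mathrm{Var}[\widehat{\cR}_{\mathrm{GNN}}]$ in one step.

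Second, the step you correctly identify as delicate---passing from the pointwise bound $L_k\le\bar L$ to the variance comparison $\mathrm{Var}(L_k)\le\mathrm{Var}(\bar L)$---is left open: the ``chaining'' paragraph is a plan, not an argument, and the $\cD_k$-measurability of $L_k$ does not by itself absorb the pointwise surplus (as you note, $0\le X\le Y$ does not give $\mathrm{Var}(X)\le\mathrm{Var}(Y)$). This is precisely the point at which the paper's own proof simply invokes convexity of $l\circ\rho_{\bW_1}$ and Jensen to write $\mathrm{Var}[\widehat{\cR}_k]\le\mathrm{Var}[\widehat{\cR}_\circ]$, so you have located the right pressure point; but as submitted your proposal neither supplies the missing argument nor restructures the comparison (as the paper does for the first link) so that only conditional-expectation steps are needed. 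Until both links are closed, the proposal establishes the label invariance, the pointwise Jensen bound, and $\mathrm{Var}(\bar L)\le\mathrm{Var}(L_{S_0})$, but not the claimed inequality $\mathrm{Var}[\widehat{\cR}_k]\le\mathrm{Var}[\widehat{\cR}_{\mathrm{GNN}}]$.
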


\vspace{-1em}

\vspace{-0.em}
\section{Experimental Evaluation}\label{sec:exp}
\vspace{-0.5em}
In this section, we investigate the benefits of \gnn{s} against GNN baselines on both synthetic and real-world tasks. Concretely, we address the following questions:\\
\xhdr{Q1.} Does the increase in expressive power from reconstruction (cf. \Cref{sec:exppower}) make \gnn{s} solve graph property tasks not originally solvable by GNNs? \\
\xhdr{Q2.} Can reconstruction boost the original GNNs performance on real-world tasks? If so, why?\\
\xhdr{Q3.} What is the influence of the subgraph size in both graph property and real-world tasks?

\xhdr{Synthetic graph property datasets.} For \textbf{Q1} and \textbf{Q3}, we chose the synthetic graph property tasks in \Cref{tab:3}, for which GNNs are provably incapable to solve due to their limited expressive power~\citep{Gar+2020, Mur+2019}. The tasks are \textsc{csl}~\citep{Dwi+2020}, where we classify CSL graphs, the cycle detection tasks \textsc{4 cycles}, \textsc{6 cycles} and \textsc{8 cycles}~\citep{Vig+2020} and the multi-task regression from \citet{Cor+2020}, where we want to determine whether a graph is connected, its diameter and its spectral radius. See~\Cref{supp:datasets} for datasets statistics.\\
\xhdr{Real-world datasets.} To address \textbf{Q2} and \textbf{Q3}, we evaluated \gnn{s} on a diverse set of large-scale, standard benchmark instances~\cite{hu2020ogb,Mor+2020}. Specifically, we used the \textsc{zinc} (10K)~\citep{Dwi+2020}, \textsc{alchemy} (10K) ~\citep{Che+2020}, \textsc{ogbg-molfreesolv}, \textsc{ogbg-molesol}, and  \textsc{ogbg-mollipo}~\citep{hu2020ogb} regression datasets. For the case of graph classification, we used \textsc{ogbg-molhiv}, \textsc{ogbg-molpcba}, \textsc{ogbg-tox21}, and \textsc{ogbg-toxcast}~\citep{hu2020ogb}. See~\Cref{supp:datasets} for datasets statistics.\\
\xhdr{Neural architectures.} We used the GIN~\citep{Xu+2018}, GCN~\citep{Kip+2017}, and the PNA~\citep{Cor+2020} architectures as GNN baselines. We always replicated the exact architectures from the original paper, building on the respective PyTorch Geometric implementation~\cite{Fey+2019}.
For the \textsc{ogbg} regression datasets, we noticed how using a jumping knowledge layer yields better validation and test results for GIN and GCN. Thus we made this small change. 
For each of these three architectures, we implemented \gnn{s} for $k$ in $\{ n-1, n-2, n-3, \ceil{n/2} \}$ using a Deep Sets function~\citep{zaheer2017deep} over the \emph{exact same original GNN architecture}. For more details, see \Cref{supp:exp}.\\
\xhdr{Experimental setup.} To establish fair comparisons, we retain all hyperparameters and training procedures from the original GNNs to train the corresponding \gnn{s}. 
\Cref{tab:3,tab:1} and \cref{tab:2_appendix} in~\cref{addres} present results with the same number of runs as previous work~\citep{ Cor+2020,Dwi+2020,hu2020ogb,Morris2020b,Vig+2020}, i.e., five for all datasets execpt the \textsc{ogbg} datasets, where we use ten runs. For more details, such as the number of subgraphs sampled for each \gnn{ } and each dataset, see \Cref{supp:exp}.\\
\xhdr{Non-GNN baselines.} For the graph property tasks, original work used vertex identifiers or laplacian embeddings to make GNNs solve them. This trick is effective for the tasks but violates an important premise of graph representations, invariance to vertex permutations. To illustrate this line of work, we compare against Positional GIN, which uses Laplacian embeddings~\citep{Dwi+2020} for the \textsc{csl} task and vertex identifiers for the others~\citep{Vig+2020,Cor+2020}. To compare against other methods that like \gnn{s} are invariant to vertex permutations and increase the expressive power of GNNs, we compare against Ring-GNNs~\citep{Che+2019} and (3-WL) PPGNs~\citep{Mar+2019}. For real-world tasks, \cref{tab:2_appendix} in~\cref{addres} shows the results from GRL alternatives that incorporate higher-order representations in different ways, LRP~\citep{Che+2019}, GSN~\citep{bouritsas2020improving},  $\delta$-2-LGNN~\citep{Morris2020b}, and SMP~\citep{Vig+2020}.

All results are fully reproducible from the source and are available at \url{https://github.com/PurdueMINDS/reconstruction-gnns}.

\xhdr{Results and discussion.}
\begin{table*}[t]
\caption{Synthetic graph property tasks.  We highlight in green \gnn{s} boosting the original GNN architecture. $^\dagger$: Std. not reported in original work. $^+$: Laplacian embeddings used as positional features. $^*$: vertex identifiers used as positional features. \label{tab:3}\vspace{-1em}
}
\centering
\resizebox{1.0\textwidth}{!}{\renewcommand{\arraystretch}{.9}
\begin{tabular}[t]{@{}m{0.05em}lrrrrrrrc@{}}
\toprule
 &&&& &  & \multicolumn{3}{c}{\textbf{Multi-task}} & \multirow{3}{*}{Invariant to} \\
 \cmidrule{7-9}
             && \multicolumn{1}{c}{\textsc{csl}} & \multicolumn{1}{c}{\textsc{4 cycles}} & \multicolumn{1}{c}{\textsc{6 cycles}} & \multicolumn{1}{c}{\textsc{8 cycles}} & \multicolumn{1}{c}{\textsc{connectivity}} & \multicolumn{1}{c}{\textsc{diameter}} & \multicolumn{1}{c}{\textsc{spectral radius}}
             \\
             && \multicolumn{1}{c}{(Accuracy \% \%) $\uparrow$} & \multicolumn{1}{c}{(Accuracy \% \%) $\uparrow$} & \multicolumn{1}{c}{(Accuracy \%) $\uparrow$ } & \multicolumn{1}{c}{(Accuracy \%) $\uparrow$} &  \multicolumn{1}{c}{($\log$ MSE) $\downarrow$} & \multicolumn{1}{c}{($\log$ MSE) $\downarrow$}  & \multicolumn{1}{c}{($\log$ MSE) $\downarrow$} & vertex permutations?
             \\
\cmidrule{2-10}
& \textbf{GIN} (orig.)    &  4.66 \footnotesize{ $\pm$ 4.00}    &  93.0$^\dagger$      &  92.7$^\dagger$           &   92.5$^\dagger$      & -3.419 \footnotesize{ $\pm$ 0.320}      & 0.588 \footnotesize{ $\pm$ 0.354} & -2.130 \footnotesize{ $\pm$ 1.396 }  & \textcolor{dkgreen}{\CheckmarkBold} \\
\parbox[t]{2mm}{\multirow{4}{*}{\rotatebox[origin=c]{90}{{Reconstr.}}}}
& \footnotesize{$(n-1)$}      &   \colorbox{pastelgreen}{  88.66 \footnotesize{ $\pm$ 22.66 }}   &     \colorbox{pastelgreen}{  95.17 \footnotesize{ $\pm$ 4.91 }}        &        \colorbox{pastelgreen}{  97.35 \footnotesize{ $\pm$ 0.74 }}           &   \colorbox{pastelgreen}{  94.69 \footnotesize{ $\pm$ 2.34 }}           &      \colorbox{pastelgreen}{  -3.575 \footnotesize{ $\pm$ 0.395 }}         &  \colorbox{pastelgreen}{  -0.195 \footnotesize{ $\pm$ 0.714 }}    &  \colorbox{pastelgreen}{  -2.732 \footnotesize{ $\pm$ 0.793 }}    & \textcolor{dkgreen}{\CheckmarkBold}   \\

& \footnotesize{$(n-2)$}      &  \colorbox{pastelgreen}{  78.66 \footnotesize{ $\pm$ 22.17 }}    &       \colorbox{pastelgreen}{  94.06 \footnotesize{ $\pm$ 5.10 }}         &   \colorbox{pastelgreen}{  97.50 \footnotesize{ $\pm$ 0.72 }}         &   \colorbox{pastelgreen}{  95.04 \footnotesize{ $\pm$ 2.69 }}       &      \colorbox{pastelgreen}{  -3.799 \footnotesize{ $\pm$ 0.187 }}        &   \colorbox{pastelgreen}{  -0.207 \footnotesize{ $\pm$ 0.381 }}   &  \colorbox{pastelgreen}{  -2.344 \footnotesize{ $\pm$ 0.569 }}   & \textcolor{dkgreen}{\CheckmarkBold}  \\

& \footnotesize{$(n-3)$}       &  \colorbox{pastelgreen}{  73.33 \footnotesize{ $\pm$ 16.19 }}    &      \colorbox{pastelgreen}{  96.61 \footnotesize{ $\pm$ 1.40 }}        &     \colorbox{pastelgreen}{  97.84 \footnotesize{ $\pm$ 1.37 }}        &   \colorbox{pastelgreen}{  94.48 \footnotesize{ $\pm$ 2.13 }}      &     \colorbox{pastelgreen}{  -3.779 \footnotesize{ $\pm$ 0.064 }}         &   \colorbox{pastelgreen}{  0.105 \footnotesize{ $\pm$ 0.225 }}   & -1.908 \footnotesize{ $\pm$ 0.860 }  & \textcolor{dkgreen}{\CheckmarkBold}  \\

& \footnotesize{$\ceil{n/2}$} &      \colorbox{pastelgreen}{  40.66 \footnotesize{ $\pm$ 9.04 }}            &  75.13 \footnotesize{ $\pm$ 0.26 }   &     63.28 \footnotesize{ $\pm$ 0.59 }              &     63.53 \footnotesize{ $\pm$ 1.14 }       &   \colorbox{pastelgreen}{  -3.765 \footnotesize{ $\pm$ 0.083 }}  &  \colorbox{pastelgreen}{  0.564 \footnotesize{ $\pm$ 0.025 }}  &  -2.130 \footnotesize{ $\pm$ 0.166 } &\textcolor{dkgreen}{\CheckmarkBold} \\

\cmidrule{2-10}

& \textbf{GCN}(orig.)    &  6.66 \footnotesize{ $\pm$ 2.10}        & 98.336 \footnotesize{ $\pm$ 0.24}            &  95.73 \footnotesize{ $\pm$ 2.72} &  87.14 \footnotesize{ $\pm$ 12.73 }      & -3.781 \footnotesize{$\pm$ 0.075}      & 0.087 \footnotesize{ $\pm$ 0.186}  &  -2.204 \footnotesize{ $\pm$ 0.362} & \textcolor{dkgreen}{\CheckmarkBold} \\
\parbox[t]{2mm}{\multirow{4}{*}{\rotatebox[origin=c]{90}{{Reconstr. }}}}
& \footnotesize{$(n-1)$}      & \colorbox{pastelgreen}{  \textbf{100.00} \footnotesize{ $\pm$ 0.00 }}    &    \colorbox{pastelgreen}{  99.00 \footnotesize{ $\pm$ 0.10 }}      &     \colorbox{pastelgreen}{  97.63 \footnotesize{ $\pm$ 0.19 }}             &       \colorbox{pastelgreen}{  94.99 \footnotesize{ $\pm$ 2.31 }}      &      \colorbox{pastelgreen}{  \textbf{-4.039} \footnotesize{ $\pm$ 0.101 }}         & \colorbox{pastelgreen}{  -1.175 \footnotesize{ $\pm$ 0.425 }}    &  \colorbox{pastelgreen}{  -3.625 \footnotesize{ $\pm$ 0.536 }}   & \textcolor{dkgreen}{\CheckmarkBold}   \\

& \footnotesize{$(n-2)$}      &  \colorbox{pastelgreen}{  \textbf{100.00} \footnotesize{ $\pm$ 0.00 }}    &      \colorbox{pastelgreen}{  98.77 \footnotesize{ $\pm$ 0.61 }}        &     \colorbox{pastelgreen}{  97.89 \footnotesize{ $\pm$ 0.69 }}        &   \colorbox{pastelgreen}{  97.82 \footnotesize{ $\pm$ 1.10 }}      &      \colorbox{pastelgreen}{  -3.970 \footnotesize{ $\pm$ 0.059 }}        &   \colorbox{pastelgreen}{  -0.577 \footnotesize{ $\pm$ 0.135 }}    & \colorbox{pastelgreen}{  -3.397 \footnotesize{ $\pm$ 0.273 }} & \textcolor{dkgreen}{\CheckmarkBold} \\

& \footnotesize{$(n-3)$}         &   \colorbox{pastelgreen}{  96.00 \footnotesize{ $\pm$ 6.46 }}   &      \colorbox{pastelgreen}{  99.11 \footnotesize{ $\pm$ 0.19 }}         &    \colorbox{pastelgreen}{  98.31 \footnotesize{ $\pm$ 0.52 }}         &     \colorbox{pastelgreen}{  97.18 \footnotesize{ $\pm$ 0.58 }}       &      \colorbox{pastelgreen}{  -3.995 \footnotesize{ $\pm$ 0.031 }}        &   \colorbox{pastelgreen}{  -0.333 \footnotesize{ $\pm$ 0.117 }}    &  \colorbox{pastelgreen}{  -3.105 \footnotesize{ $\pm$ 0.286 }} & \textcolor{dkgreen}{\CheckmarkBold}  \\

& \footnotesize{$\ceil{n/2}$}  &  \colorbox{pastelgreen}{  49.33 \footnotesize{ $\pm$ 7.42 }}     &     75.19 \footnotesize{ $\pm$ 0.19 }        &   66.04 \footnotesize{ $\pm$ 0.59 }                   &      63.66 \footnotesize{ $\pm$ 0.51 }    &       -3.693 \footnotesize{ $\pm$ 0.063 }      &  0.8518 \footnotesize{ $\pm$ 0.016 }   &   -1.838 \footnotesize{ $\pm$ 0.054 } & \textcolor{dkgreen}{\CheckmarkBold}  \\

\cmidrule{2-10}
& \textbf{PNA} (orig.)   & 10.00 \footnotesize{ $\pm$ 2.98}    &    81.59 \footnotesize{$\pm$ 19.86}        &  95.57 \footnotesize{$\pm$ 0.36 }            &  84.81 \footnotesize{$\pm$ 16.48 }      & -3.794 \footnotesize{$\pm$ 0.155}      & -0.605  \footnotesize{ $\pm$ 0.097}  & -3.610 \footnotesize{ $\pm$ 0.137}  & \textcolor{dkgreen}{\CheckmarkBold} \\
\parbox[t]{2mm}{\multirow{4}{*}{\rotatebox[origin=c]{90}{{Reconstr.}}}}
& \footnotesize{$(n-1)$}      &  \colorbox{pastelgreen}{  \textbf{100.00} \footnotesize{ $\pm$ 0.00 }}  &    \colorbox{pastelgreen}{  97.88 \footnotesize{ $\pm$ 2.19}}       &         \colorbox{pastelgreen}{  99.18 \footnotesize{ $\pm$ 0.20  }}              &      \colorbox{pastelgreen}{ 98.92  \footnotesize{ $\pm$ 0.72 }}       &        \colorbox{pastelgreen}{  -3.904 \footnotesize{ $\pm$ 0.001 }}        & \colorbox{pastelgreen}{  -0.765 \footnotesize{ $\pm$ 0.032 }}   & \colorbox{pastelgreen}{  \textbf{-3.954} \footnotesize{ $\pm$ 0.118 }}   & \textcolor{dkgreen}{\CheckmarkBold}   \\

& \footnotesize{$(n-2)$}      &   \colorbox{pastelgreen}{  95.33 \footnotesize{ $\pm$ 7.77 }}   &     \colorbox{pastelgreen}{ \textbf{99.12}  \footnotesize{ $\pm$ 0.28}}        &       \colorbox{pastelgreen}{ 99.10 \footnotesize{ $\pm$ 0.57 }}          &  \colorbox{pastelgreen}{ \textbf{99.22}  \footnotesize{ $\pm$ 0.27 }}      &     -3.781   \footnotesize{ $\pm$ 0.085 }     &    -0.090   \footnotesize{ $\pm$ 0.135 }   &  -3.478   \footnotesize{ $\pm$ 0.206 }  & \textcolor{dkgreen}{\CheckmarkBold} \\

& \footnotesize{$(n-3)$}    &   \colorbox{pastelgreen}{  95.33 \footnotesize{ $\pm$ 5.81 }}   &     \colorbox{pastelgreen}{  89.36  \footnotesize{$\pm$ 0.22 }}        &     \colorbox{pastelgreen}{99.34 \footnotesize{ $\pm$ 0.26 }}        &    \colorbox{pastelgreen}{ 93.92 \footnotesize{ $\pm$ 8.15 }}      &       -3.710  \footnotesize{ $\pm$ 0.209 }    &  0.042 \footnotesize{ $\pm$ 0.047 }   & -3.311 \footnotesize{ $\pm$ 0.067 }  & \textcolor{dkgreen}{\CheckmarkBold}  \\

& \footnotesize{$\ceil{n/2}$}  &   \colorbox{pastelgreen}{  42.66 \footnotesize{ $\pm$ 11.03 }}   &    75.34 \footnotesize{ $\pm$0.18 }       &    65.58 \footnotesize{ $\pm$ 0.95 }        &  64.01 \footnotesize{ $\pm$ 0.30}   &    -2.977 \footnotesize{ $\pm$ 0.065 }        &   1.445 \footnotesize{ $\pm$ 0.037 }  &  -1.073 \footnotesize{ $\pm$ 0.075 } & \textcolor{dkgreen}{\CheckmarkBold}  \\

\cmidrule{2-10}
& Positional GIN &       99.33$^+$ \footnotesize{ $\pm$ 1.33}          &  88.3$^\dagger$  &         96.1$^\dagger$        &    95.3$^\dagger$      &  -1.61$^\dagger$  & \textbf{-2.17}$^\dagger$    & -2.66$^\dagger$  & \textcolor{red}{\XSolidBrush} \\
& Ring-GNN &     10.00  \footnotesize{ $\pm$ 0.00}  & 99.9$^\dagger$  &     \textbf{100.0}$^\dagger$      &    71.4$^\dagger$    &    ---        & ---  & ---  & \textcolor{dkgreen}{\CheckmarkBold}  \\
& PPGN  (3-WL) &    97.80  \footnotesize{ $\pm$ 10.91} &           99.8$^\dagger$  &         87.1$^\dagger$        &    76.5$^\dagger$      &  --- &   ---  &  ---  & \textcolor{dkgreen}{\CheckmarkBold}  \\
\bottomrule
\end{tabular}
} \end{table*}

\xhdr{A1 (Graph property tasks).} \Cref{tab:3} confirms~\Cref{thm:csl}, where the increase in expressive power from reconstruction allows \gnn{s} to distinguish CSL graphs, a task that GNNs cannot solve.
Here, \gnn{s} boost the accuracy of standard GNNs between 10$\times$ and 20$\times$. \Cref{thm:csl} only guarantees GNN expressiveness boosting for $(n-1)$-Reconstruction, but our empirical results also show benefits for $k$-Reconstruction with $k\leq n\! -\! 2$. \Cref{tab:3} also confirms~\Cref{thm:cycle}, where \gnn{s} provide significant accuracy boosts on all  cycle detection tasks (\textsc{4 cycles}, \textsc{6 cycles} and \textsc{8 cycles}).
See~\Cref{appx:result_properties}, for a detailed discussion on results for \textsc{connectivity}, \textsc{diameter}, and \textsc{spectral radius}, which also show boostings.
\begin{table}
\centering
\resizebox{0.9\textwidth}{!}{\renewcommand{\arraystretch}{.9}
\begin{tabular}[t]{@{}m{0.05em}lrrrrrr@{}}
\toprule
             && \multicolumn{1}{c}{\textsc{ogbg-moltox21}} & \multicolumn{1}{c}{\textsc{ogbg-moltoxcast}} & \multicolumn{1}{c}{\textsc{ogbg-molfreesolv}} & \multicolumn{1}{c}{\textsc{ogbg-molesol}} & \multicolumn{1}{c}{\textsc{ogbg-mollipo}} & \multicolumn{1}{c}{\textsc{ogbg-molpcba}}
             \\
             && \multicolumn{1}{c}{(ROC-AUC \%) $\uparrow$} & \multicolumn{1}{c}{(ROC-AUC \%) $\uparrow$} & \multicolumn{1}{c}{(RSMSE) $\downarrow$ } & \multicolumn{1}{c}{(RSMSE) $\downarrow$} &  \multicolumn{1}{c}{(RSMSE) $\downarrow$} & \multicolumn{1}{c}{(AP \%) $\uparrow$}
             \\
\cmidrule{2-8}
& \textbf{GIN} (orig.)   &  74.91   \footnotesize{ $\pm$ 0.51 }    &        63.41   \footnotesize{ $\pm$ 0.74 }       &     2.411 \footnotesize{$\pm$ 0.123 }                         &      1.111   \footnotesize{$\pm$ 0.038 }                   &           0.754   \footnotesize{$\pm$ 0.010 }      &    21.16 \footnotesize{ $\pm$ 0.28 }    \\

\parbox[t]{2mm}{\multirow{4}{*}{\rotatebox[origin=c]{90}{{Reconstr.}}}}
& \footnotesize{$(n-1)$}      &  \colorbox{pastelgreen}{  75.15 \footnotesize{ $\pm$ 1.40 }}    &                          \colorbox{pastelgreen}{  63.95 \footnotesize{ $\pm$ 0.53 }}       &      \colorbox{pastelgreen}{   2.283 \footnotesize{$\pm$ 0.279 }}                      &    \colorbox{pastelgreen}{   1.026  \footnotesize{$\pm$ 0.033 }}                      &      \colorbox{pastelgreen}{ \textbf{0.716}  \footnotesize{$\pm$ 0.020 }}              &  \colorbox{pastelgreen}{   23.60  \footnotesize{$\pm$ 0.02 }}      \\

& \footnotesize{$(n-2)$}      &  \colorbox{pastelgreen}{  \textbf{76.84} \footnotesize{ $\pm$ 0.62 }}        &                   \colorbox{pastelgreen}{  \textbf{65.36} \footnotesize{ $\pm$ 0.49 }}       &      \colorbox{pastelgreen}{   \textbf{2.117} \footnotesize{$\pm$ 0.181 }}                        &      \colorbox{pastelgreen}{  \textbf{1.006}  \footnotesize{$\pm$ 0.030 }}                     &          \colorbox{pastelgreen}{  0.736  \footnotesize{$\pm$ 0.025 }}        &   \colorbox{pastelgreen}{   23.25  \footnotesize{$\pm$ 0.00 }}      \\

& \footnotesize{$(n-3)$}        &     \colorbox{pastelgreen}{  76.78 \footnotesize{ $\pm$ 0.64 }}                      &   \colorbox{pastelgreen}{  64.84 \footnotesize{ $\pm$ 0.71 }}         &         \colorbox{pastelgreen}{   2.370 \footnotesize{$\pm$ 0.326} }                     &    \colorbox{pastelgreen}{   1.055  \footnotesize{$\pm$ 0.031 }}                          &                       \colorbox{pastelgreen}{  0.738  \footnotesize{$\pm$ 0.018 }}    &  \colorbox{pastelgreen}{   23.33  \footnotesize{$\pm$ 0.09 }}   \\

& \footnotesize{$\ceil{n/2}$} &          74.40 \footnotesize{ $\pm$ 0.75 }                 &  62.29  \footnotesize{ $\pm$ 0.28 }   &              2.531    \footnotesize{$\pm$ 0.206 }                 &      1.343   \footnotesize{$\pm$ 0.053 }           &  0.842 \footnotesize{$\pm$ 0.020 }  &  13.50  \footnotesize{$\pm$ 0.32 }   \\

\cmidrule{2-8}
& \textbf{GCN} (orig.)   &  75.29 \footnotesize{ $\pm$ 0.69 }      &    63.54 \footnotesize{ $\pm$ 0.42 }        &                2.417  \footnotesize{$\pm$ 0.178 }              &      1.106     \footnotesize{$\pm$ 0.036 }                  &           0.793  \footnotesize{$\pm$ 0.040~~~ } &  20.20 \footnotesize{ $\pm$ 0.24 }   \\

\parbox[t]{2mm}{\multirow{4}{*}{\rotatebox[origin=c]{90}{{Reconstr.}}}}
& \footnotesize{$(n-1)$}      &   \colorbox{pastelgreen}{  76.46 \footnotesize{ $\pm$ 0.77 }}        &                \colorbox{pastelgreen}{  64.51 \footnotesize{ $\pm$ 0.60 }}              &       2.524 \footnotesize{$\pm$ 0.300 }                     &     \colorbox{pastelgreen}{   1.096  \footnotesize{$\pm$ 0.045 }}                        &       \colorbox{pastelgreen}{  0.760  \footnotesize{$\pm$ 0.015 }}    & \colorbox{pastelgreen}{  21.25 \footnotesize{ $\pm$ 0.25 }}   \\

& \footnotesize{$(n-2)$}      &  \colorbox{pastelgreen}{  75.58 \footnotesize{ $\pm$ 0.99 }}          &                   \colorbox{pastelgreen}{  64.38 \footnotesize{ $\pm$ 0.39 }}            &       2.467 \footnotesize{$\pm$ 0.231 }                            &      \colorbox{pastelgreen}{   1.086  \footnotesize{$\pm$ 0.048 }}                       &   \colorbox{pastelgreen}{  0.766  \footnotesize{$\pm$ 0.025 }}       &            20.10 \footnotesize{$\pm$ 0.08 }         \\

& \footnotesize{$(n-3)$}        & \colorbox{pastelgreen}{  75.88 \footnotesize{ $\pm$ 0.73 }}                         & \colorbox{pastelgreen}{  64.70 \footnotesize{ $\pm$ 0.81 }}        &        \colorbox{pastelgreen}{   2.345 \footnotesize{$\pm$ 0.261 }}                   &     1.114  \footnotesize{$\pm$ 0.047}                   &          \colorbox{pastelgreen}{  0.754  \footnotesize{$\pm$ 0.021 }}           &    19.04 \footnotesize{ $\pm$ 0.03}      \\

& \footnotesize{$\ceil{n/2}$} & 74.03 \footnotesize{ $\pm$ 0.63 }                 & 62.80  \footnotesize{ $\pm$ 0.77 }  &2.599     \footnotesize{$\pm$ 0.161}   &     1.372      \footnotesize{$\pm$ 0.048}                           & 0.835 \footnotesize{$\pm$ 0.020 }  &  11.69  \footnotesize{$\pm$ 1.41 }  \\

\cmidrule{2-8}
& \textbf{PNA} (orig.)    &    74.28  \footnotesize{ $\pm$ 0.52 }    &   62.69 \footnotesize{ $\pm$ 0.63 }          &               2.192  \footnotesize{$\pm$ 0.125 }                &     1.140     \footnotesize{$\pm$ 0.032}                   &                0.759   \footnotesize{$\pm$ 0.017 }     &  25.45  \footnotesize{$\pm$ 0.04 }   \\

\parbox[t]{2mm}{\multirow{4}{*}{\rotatebox[origin=c]{90}{{Reconstr.}}}}
& \footnotesize{$(n-1)$}      &   73.64   \footnotesize{ $\pm$ 0.74 }     &      \colorbox{pastelgreen}{ 64.14   \footnotesize{ $\pm$ 0.76 }}       &        2.341 \footnotesize{$\pm$ 0.070}                     &         1.723 \footnotesize{$\pm$ 0.145}                   &      \colorbox{pastelgreen}{  0.743  \footnotesize{$\pm$ 0.015}   }              &    23.11 \footnotesize{$\pm$ 0.05 }  \\

& \footnotesize{$(n-2)$}      &  \colorbox{pastelgreen}{ 74.89   \footnotesize{ $\pm$ 0.29 }}    &     \colorbox{pastelgreen}{ 65.22   \footnotesize{ $\pm$ 0.47 }}               &        2.298 \footnotesize{$\pm$ 0.115 }                      &       1.392 \footnotesize{$\pm$ 0.272 }                    &        0.794     \footnotesize{$\pm$ 0.065 }         &  22.10 \footnotesize{$\pm$ 0.03 } \\

& \footnotesize{$(n-3)$}        &      \colorbox{pastelgreen}{ 75.10   \footnotesize{ $\pm$ 0.73 }}        &   \colorbox{pastelgreen}{ 65.03   \footnotesize{$\pm$ 0.58 }}     &          \colorbox{pastelgreen}{ 2.133 \footnotesize{$\pm$ 0.086 }}                  &       1.360 \footnotesize{$\pm$ 0.163}  &      0.785     \footnotesize{$\pm$ 0.041}        &        20.05 \footnotesize{$\pm$ 0.15 }   \\

& \footnotesize{$\ceil{n/2}$} & 73.71   \footnotesize{ $\pm$ 0.61 }    &  61.25  \footnotesize{$\pm$ 0.49 }   &        2.185     \footnotesize{$\pm$ 0.231 }              &      1.157  \footnotesize{$\pm$ 0.056}                & 0.843 \footnotesize{$\pm$ 0.018 } &  12.33 \footnotesize{$\pm$ 1.20 }   \\

\bottomrule
\end{tabular}

  \caption{\textsc{ogbg} molecule graph classification and regression tasks. We highlight in green \gnn{s} boosting the original GNN architecture.\label{tab:1}\vspace{-1.0em}}}\end{table}
\\ \xhdr{A2 (Real-world tasks).} \Cref{tab:1} and \cref{tab:2_appendix} in~\cref{addres} show that applying $k$-reconstruction to GNNs significantly boosts their performance across all eight real-world tasks. 
In particular, in \Cref{tab:1} we see a boost of up to 5\% while achieving the best results in five out of six datasets. The $(n-2)$-reconstruction applied to GIN gives the best results in the \textsc{ogbg} tasks, with the exception of \textsc{ogbg-mollipo} and \textsc{ogbg-molpcba} where $(n-1)$-reconstruction performs better. The only settings where we did not get any boost were PNA for \textsc{ogbg-molesol} and \textsc{ogbg-molpcba}. \Cref{tab:2_appendix} in~\cref{addres} also shows consistent boost in GNNs' performance of up to 25\% in other datasets.
On \textsc{zinc}, $k$-Reconstruction yields better results than the higher-order alternatives LRP and $\delta$-2-LGNN. While GSN gives the best \textsc{zinc} results, we note that GSN requires application-specific features. In \textsc{ogbg-molhiv}, $k$-reconstruction is able to boost both GIN and GCN. The results in \Cref{supp:exp} show that nearly $ 100\%$ of the graphs in our real-world datasets are distinguishable by the $1$-WL algorithm, thus we can conclude that traditional GNNs are expressive enough for all our real-world tasks. Hence, real-world boosts of reconstruction over GNNs can be attributed to the gains from invariances to vertex removals (cf. \Cref{sec:inv}) rather than the boost in expressive power (cf. \Cref{sec:exppower}).

\textbf{A3 (Subgraph sizes).} 
Overall we observe that removing one vertex ($k\!=\!n\!-\!1$) is enough to improve the performance of GNNs in most experiments.
At the other extreme end of vertex removals, $k\!=\!\ceil{n/2}$, there is a significant loss in expressiveness compared to the original GNN.
In most real-world tasks \Cref{tab:1} and \cref{tab:2_appendix} in~\cref{addres} show a variety of performance boosts also with $k \in \{n\!-\!2, n\!-\!3\}$. For GCN and PNA in \textsc{ogbg-molesol}, specifically, we only see $k$-Reconstruction boosts over smaller subgraphs such as $n-3$, which might be due to the task's need of more invariance to vertex removals (cf. \Cref{sec:inv}). 
In the graph property tasks (\Cref{tab:3}), we see significant boosts also for $k \in \{n\!-\!2, n\!-\!3\}$ in all models across most tasks, except PNA. However, as in real-world tasks the extreme case of small subgraphs $k=\ceil{n/2}$ significantly harms the ability to solve tasks with \gnn{s}. 
\vspace{-1em}

\section{Conclusions}
\vspace{-0.5em}
Our work connected graph ($k$-)reconstruction and modern GRL. We first showed how such connection results in two natural expressive graph representation classes. To make our models practical, we combined insights from graph reconstruction and GNNs, resulting in \gnn{s}. Our theory shows that reconstruction boosts the expressiveness of GNNs and has a lower-variance risk estimator in distributions invariant to vertex removals. Empirically, we showed how the theoretical gains of \gnn{s} translate into practice, solving graph property tasks not originally solvable by GNNs and boosting their performance on real-world tasks.

\section*{Acknowledgements}
This work was funded in part by the National Science Foundation (NSF) awards CAREER IIS-1943364 and CCF-1918483.  Any opinions, findings and conclusions or recommendations expressed in this material are those of the authors and do not necessarily reflect the views of the sponsors.  Christopher Morris is funded by the German Academic Exchange Service (DAAD) through a DAAD IFI postdoctoral scholarship (57515245).
We want to thank our reviewers, who gave excellent suggestions to improve the paper.

\small{
\bibliography{refs}
\bibliographystyle{apalike}
}

\appendix

\newpage

\section{Related work (expanded)}\label{ext_rw}
\xhdr{GNNs.} Recently, graph neural networks~\citep{Gil+2017,Sca+2009} emerged as the most prominent (supervised) GRL architectures. Notable instances of this architecture include, e.g.,~\citep{Duv+2015,Ham+2017,Vel+2018}, and the spectral approaches proposed in, e.g.,~\citep{Bru+2014,Def+2015,Kip+2017,Mon+2017}---all of which descend from early work in~\citep{Kir+1995,Mer+2005,Spe+1997,Sca+2009}. Recent extensions and improvements to the GNN framework include approaches to incorporate different local structures (around subgraphs), e.g.,~\citep{Hai+2019,Fla+2020,Jin+2020,Nie+2016,Xu+2018}, novel techniques for pooling vertex representations in order perform graph classification, e.g.,~\citep{Can+2018,Gao+2019,Yin+2018,Zha+2018}, incorporating distance information~\citep{You+2019}, and non-euclidian geometry approaches~\citep{Cha+2019}. Moreover, recently empirical studies on neighborhood aggregation functions for continuous vertex features~\citep{Cor+2020}, edge-based GNNs leveraging physical knowledge~\citep{And+2019,Kli+2020}, and sparsification methods~\citep{Ron+2020} emerged. A survey of recent advancements in GNN techniques can be found, e.g., in~\citep{Cha+2020,Wu+2019,Zho+2018}. 

\xhdr{Limits of GNNs.} \citet{Chen+2020} study the substructure counting abilities of GNNs.  \citet{Das+2020,Abb+2020} investigate the connection between random coloring and universality. Recent works have extended GNNs' expressive power by encoding vertex identifiers~\citep{murphy2019relational, Vig+2020}, adding random features~\citep{Sat+2020}, using higher-order topology as features~\citep{bouritsas2020improving}, considering simplicial complexes~\citep{,Alb+2019,bodnar2021weisfeiler}, encoding ego-networks~\citep{you2021identity}, and encoding distance information~\citep{li2020distance}. Although these works increase the expressiveness of GNNs, their generalization abilities are understood to a lesser extent. Further, works such as \citet[Lemma 6]{Vig+2020} and the most recent \citet{beaini2020directional} and \citet{bodnar2021weisfeiler} prove the boost in expressiveness with a single pair of graphs, giving no insights into the extent of their expressive power or their generalization abilities. For clarity, throughout this work, we use the term GNNs to denote the class of message-passing architectures limited by the $1$-WL algorithm, where the class of distinguishable graphs is well understood~\citep{Arv+2015}.

\section{Notation (expanded)}\label{notation}

As usual, let $[n] = \{ 1, \dotsc, n \} \subset \N$ for $n \geq 1$, and let $\{\!\!\{ \dots\}\!\!\}$ denote a multiset. In an abuse of notation, for a set $X$ with $x$ in $X$, we denote by $X - x$ the set $X \setminus \{x\}$.

\xhdr{Graphs.} A \emph{graph} $G$ is a pair $(V,E)$ with a \emph{finite} set of
\emph{vertices} $V$ and a set of \emph{edges} $E \subseteq \{ \{u,v\}
\subseteq V \mid u \neq v \}$. We denote the set of vertices and the set
of edges of $G$ by $V(G)$ and $E(G)$, respectively. For ease of
notation, we denote the edge $\{u,v\}$ in $E(G)$ by $(u,v)$ or
$(v,u)$. In the case of \emph{directed graphs} $E \subseteq \{ (u,v)
\in V \times V \mid u \neq v \}$. An \emph{attributed graph} $G$ is a triple
$(V,E,\alpha)$ with an attribute function $\alpha \colon V(G) \cup E(G) \to \mathbb{R}^a$ for $a > 0$. Then $\alpha(v)$ is an \emph{attribute} of $v$ for $v$ in $V(G) \cup E(G)$. 
The \emph{neighborhood} 
of $v$ in $V(G)$ is denoted by $N(v) = \{ u \in V(G) \mid (v, u) \in E(G) \}$. Unless indicated otherwise, we use $n:=|V(G)|$.

We say that two graphs $G$ and $H$ are isomorphic, $G \simeq H$, if there exists an adjacency preserving bijection $\varphi \colon V(G) \to V(H)$, i.e., $(u,v)$ is in $E(G)$ if and only if $(\varphi(u),\varphi(v))$ is in $E(H)$, and call $\varphi$ an \emph{isomorphism} from $G$ to $H$. If the graphs have vertex or edge attributes, the isomorphism is additionally required to match these attributes accordingly.

We denote the set of all finite and simple graphs by $\cG$. The subset of $\cG$ without edge attributes is denoted $\mathfrak{G} \subset \cG$. Further, we denote the isomorphism type, i.e., the equivalence class of the isomorphism relation, of a graph $G$ as $\cI(G)$. Let $S \subseteq V(G)$, then $G[S]$ is the induced subgraph with edge set $E(G)[S] = \{ S^2 \cap E(G) \}$. We will refer to induced subgraphs simply as subgraphs in this work.

\section{More on reconstruction}\label{supp:rec}

After formulating the Reconstruction Conjecture, it is natural to wonder whether it stands for other relational structures, such as directed graphs. Interestingly, directed graphs, hypergraphs, and infinite graphs are not reconstructible~\citep{bondymanual,stockmeyer1977falsity}. Thus, in particular the Reconstruction Conjecture does not hold for the class $\cG$.

Another question is how many cards from the deck are sufficient to reconstruct a graph. \citet{bollobas1990almost} show that almost every graph, in a probabilistic sense, can be reconstructed with only three subgraphs from the deck. For example, the graph shown in \Cref{fig:deck} (\Cref{sec:prelims}) is reconstructible from the three leftmost cards.

For an extensive survey on reconstruction, we refer the reader to ~\citet{bondymanual,godsil1993algebraic}. From there, we highlight a significant result, Kelly's Lemma (cf. \Cref{lemma:kelly}). In short, the lemma states that the deck of a graph completely defines its subgraph count of every size.

\begin{lemma}[Kelly's Lemma~\citep{kellytrees}] \label{lemma:kelly} Let $\nu(H,G)$ be the number of copies of $H$ in $G$. For any pair of graphs $G,H \in \cG$ with $V(G) > V(H)$, $\nu(H,G)$ is reconstructible.
\end{lemma}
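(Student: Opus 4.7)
The plan is to prove Kelly's Lemma by a double-counting argument that expresses $\nu(H,G)$ as an explicit formula in the multiset of counts $\{\!\!\{\nu(H, G_i)\}\!\!\}_{i=1}^{n}$, where $G_i := G[V(G) - v_i]$ ranges over the cards of $\cD_{n-1}(G)$. Since $\nu(H, \cdot)$ is an isomorphism invariant, the multiset on the right is determined by the deck, so any such formula immediately yields reconstructibility. Note that $n$ itself is reconstructible from the deck (e.g.\ as the number of cards, or as one plus the vertex count of any card), so we may treat $n$ as known.

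The key step is to count, in two ways, the number of pairs $(S, i)$ with $i \in [n]$ and $S \subseteq V(G_i)$ such that $G[S] \simeq H$. Counting by $i$ first: for each fixed $i$, the inner count is exactly $\nu(H, G_i)$, so the total equals $\sum_{i=1}^{n} \nu(H, G_i)$. Counting by $S$ first: for each of the $\nu(H, G)$ vertex subsets $S \subseteq V(G)$ with $G[S] \simeq H$, the pair $(S, i)$ is counted precisely when $v_i \notin S$, which happens for exactly $n - |V(H)|$ values of $i$. Equating the two counts gives
\begin{equation*}
(n - |V(H)|)\,\nu(H, G) \;=\; \sum_{i=1}^{n} \nu(H, G_i).
\end{equation*}
Since $|V(G)| > |V(H)|$ by hypothesis, the coefficient $n - |V(H)|$ is a positive integer, so we may divide and obtain $\nu(H, G)$ as an explicit function of the deck. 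This establishes reconstructibility.

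There is no real obstacle here; the only subtlety is to be careful about which notion of ``copy'' is meant. If $\nu(H, G)$ denotes induced copies, the argument above is exactly correct as written, since ``$G[S] \simeq H$'' is the induced-subgraph condition and the removal of a vertex $v_i \notin S$ does not alter $G[S]$. If instead one wishes to count (not necessarily induced) subgraph embeddings, the same bijection between ``embeddings of $H$ into $G$ avoiding $v_i$'' and ``embeddings of $H$ into $G_i$'' holds verbatim, and the identical identity is obtained. Either way, the double-counting identity above completes the proof, and incidentally yields the constructive formula $\nu(H,G) = \tfrac{1}{n - |V(H)|}\sum_{i=1}^{n} \nu(H, G_i)$ that is routinely used downstream (e.g.\ to reconstruct the degree sequence, the number of edges, and higher subgraph-count invariants).
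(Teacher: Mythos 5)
Your proof is correct and is essentially the same argument the paper gives: each copy of $H$ in $G$ survives in exactly $n-|V(H)|$ cards, yielding the identity $\nu(H,G)=\tfrac{1}{n-|V(H)|}\sum_{v\in V(G)}\nu(H,G[V(G)-v])$, which is deck-determined. Your version is merely more explicit about the double counting and about $\nu(H,\cdot)$ being an isomorphism invariant, which the paper states tersely.
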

In fact, its proof is very simple once we realize every subgraph $H$ appears in exactly $(|V(G)|-|V(H|)$ cards from the deck, i.e., $$\nu(G,H) = \sum_{v \in V(G)} \frac{\nu(G[V(G)-v],H)}{(|V(G)|-|V(H|)}.$$

\citet{manvel1974some} started the study of graph reconstruction with the $k$-deck, which has been recently reviewed by  \citet{kostochka2019reconstruction} and \citet{nydl2001graph}. Related work also refers to $k$-reconstruction as $\ell$-reconstruction~\citep{kostochka2019reconstruction}, where $\ell = n-k$ is the number of deleted vertices from the original graph.

Here, in \Cref{lemma:k-kelly}, we highlight a generalization of Kelly's Lemma (\Cref{lemma:kelly}) established in \citet{nydl2001graph}, where the count of any subgraph of size at most $k$ is $k$-reconstructible.
\begin{lemma}[\citet{nydl2001graph}] \label{lemma:k-kelly} For any pair of graphs $G,H \in \cG$ with $V(G) > k \geq V(H)$, $\nu(H,G)$ is $k$-reconstructible.
\end{lemma}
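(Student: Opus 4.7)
The plan is to mimic the double-counting argument behind Kelly's Lemma (\Cref{lemma:kelly}), but with $k$-subsets in place of $(n-1)$-subsets. The key identity I would establish counts, in two different ways, the pairs $(S, H')$ where $S \in \cS^{(k)}(G)$ is a $k$-vertex subset of $V(G)$ and $H'$ is a subgraph of $G[S]$ isomorphic to $H$.

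First, summing over the outer index $S$, the total equals $\sum_{S \in \cS^{(k)}} \nu(H, G[S])$. Since each summand $\nu(H, G[S])$ depends only on the isomorphism type $\cI(G[S])$ (the function $H' \mapsto \cI(H')$ is an isomorphism invariant), this sum is completely determined by the multiset $\cD_k(G) = \lms \cI(G[S]) : S \in \cS^{(k)} \rms$, hence is $k$-reconstructible.

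Second, I would reindex the sum by first fixing a copy $H'$ of $H$ in $G$ and then counting the $k$-subsets $S$ of $V(G)$ containing $V(H')$. Because $|V(H')| = |V(H)| \leq k$ and $k < |V(G)|$, there are exactly $\binom{|V(G)| - |V(H)|}{k - |V(H)|}$ such subsets, a strictly positive number independent of $H'$. Therefore
\[
\sum_{S \in \cS^{(k)}} \nu(H, G[S]) \;=\; \binom{|V(G)| - |V(H)|}{k - |V(H)|}\, \nu(H, G),
\]
and solving for $\nu(H, G)$ expresses it as a $k$-reconstructible quantity divided by a nonzero integer depending only on $|V(G)|$, $|V(H)|$, and $k$ — all of which are themselves $k$-reconstructible (the number of vertices is obtained by inspecting any card together with $\binom{n}{k} = |\cD_k(G)|$).

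The argument is essentially bookkeeping; the only subtlety to double-check is the hypothesis $k \geq |V(H)|$, which is needed both so that $H$ can actually embed into a $k$-card and so that the binomial coefficient above is nonzero. With the hypothesis $|V(G)| > k$ from the statement, $|V(G)| - |V(H)| \geq k - |V(H)| \geq 0$, so the coefficient is well-defined and positive, and division is legitimate. I do not anticipate any genuine obstacle; the main care is just to state the two sides of the double count cleanly and to note explicitly that $\nu(H, G[S])$ is a function of $\cI(G[S])$ alone so that summing over the $k$-deck makes sense.
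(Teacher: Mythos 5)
Your proof is correct, and it is essentially the approach the paper has in mind: the paper only writes out the counting identity for the $(n-1)$-deck version (Kelly's Lemma, \Cref{lemma:kelly}) and cites \citet{nydl2001graph} for the $k$-deck statement, and your double count $\sum_{S \in \cS^{(k)}} \nu(H, G[S]) = \binom{|V(G)| - |V(H)|}{k - |V(H)|}\, \nu(H, G)$ is exactly the direct generalization of that identity, with the hypotheses $|V(G)| > k \geq |V(H)|$ used correctly to make the binomial coefficient positive and $n$ recoverable from the deck.
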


\section{More on \kmodel{s}}\label{supp:kmodel}

Here, we give more background on \kmodel{s}. 

\subsection{Properties}

We start by showing how the $k$-ary Relational Pooling framework~\cite{murphy2019relational} is a specific case of \kmodel{s} and thus limited by $k$-reconstruction. Then, we show how \kmodel{s} are limited by $k$-GNNs at initialization, which implies that $k$-GNNs~\cite{Mor+2019} at initialization can approximate any $k$-reconstructible function.

\begin{observation}[$k$-ary Relational Pooling $\preceq$ \kmodel{s}]\label{obs:rp}
	
	The $k$-ary pooling approach in the Relational Pooling (RP) framework~\citep{murphy2019relational} defines a graph representation of the form
	
	$$ h^{(\text{RP})}_\bW(G) = \frac{1}{\binom{n}{k}}\sum_{S \in \cS^{(k)}} \overrightarrow{h}^{(k)}_\bW(G[S]),$$

	where $\cS^{(k)}$ is the set of all $\binom{n}{k}$ $k$-size subsets of $V$ and $\overrightarrow{h}^{(k)}_\bW(\cdot)$ is a most-expressive graph representation given by the average of a permutation-sensitive universal approximator, e.g., a feed-forward neural network, applied over the $k!$ permutations of the subgraph, accordingly. Thus, $k$-ary RP can be casted as a $k$-Reconstruction Neural Network with $f_\bW$ as mean pooling and $h^{(k)}$ as $\overrightarrow{h}^{(k)}_\bW$. Note that for $k$-ary RP to be as expressive as \kmodel{s}, i.e., $k$-ary RP $\equiv$ \kmodel{s}, we would need to replace the average pooling by a universal multiset approximator or simply add a feed-forward neural network after it.
	
\end{observation}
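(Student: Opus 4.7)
The plan is to verify the claim by direct instantiation: we choose specific components for the $k$-Reconstruction Neural Network template so that the resulting function coincides with the $k$-ary Relational Pooling representation. Recall the template
\[
r^{(k)}_{\bW}(G) = f_{\bW}\!\left(\concat{\lms h^{(k)}(G[S]) \colon S \in \cS^{(k)} \rms}\right),
\]
which requires (i) a row-wise permutation-invariant $f_{\bW}$ and (ii) a function $h^{(k)}\colon \cG_k \to \real^{1\times d}$ that assigns the same vector to isomorphic $k$-vertex graphs. Substituting the choices $f_{\bW}(M)=\tfrac{1}{\binom{n}{k}}\sum_i M_{i,\cdot}$ (mean pooling along rows) and $h^{(k)}(\cdot)=\overrightarrow{h}^{(k)}_{\bW}(\cdot)$ recovers the $k$-ary RP formula exactly.

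The proof would thus proceed in three short steps. First, I would verify that row-wise mean pooling is permutation-invariant in the sense required by the template: summing the rows of the concatenated multiset representation does not depend on the order in which the $k$-subsets of $V(G)$ were concatenated, so the arbitrary concatenation order is harmless. Second, I would verify that $\overrightarrow{h}^{(k)}_{\bW}$ satisfies the isomorphism-invariance hypothesis of $h^{(k)}$: as defined by \citet{murphy2019relational}, $\overrightarrow{h}^{(k)}_{\bW}(G[S])$ averages a permutation-sensitive universal approximator over all $k!$ vertex orderings of $G[S]$, which is known to yield a most-expressive representation of $\cG_k$ and in particular assigns the same vector to isomorphic $k$-vertex graphs. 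Third, with these two facts in hand, substitution of the chosen $f_{\bW}$ and $h^{(k)}$ into the template produces $h^{(\mathrm{RP})}_{\bW}(G)$, so every $k$-ary RP model is a $k$-Reconstruction Neural Network. Since the $k$-Reconstruction class contains $k$-ary RP as a special case, $k$-ary RP $\preceq$ \kmodel{s}.

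There is no serious obstacle: the observation is essentially a matching-of-definitions argument, and the only care needed is to be explicit that (a) mean pooling is indeed an admissible $f_{\bW}$ (a trivial multiset function, but not a universal approximator of multisets, which is exactly why the inequality is not an equivalence), and (b) the $k!$-averaged readout of RP is a valid choice of $h^{(k)}$ because isomorphic graphs in $\cG_k$ induce identical orbits of vertex orderings and hence identical averages. As remarked in the statement, to upgrade the inequality to $k$-ary RP $\equiv$ \kmodel{s} one would replace mean pooling by a universal approximator of multisets (or post-compose it with one), but this is outside what needs to be proved here.
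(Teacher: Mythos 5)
Your proposal is correct and follows the same route as the paper, whose argument for this observation is precisely the casting you spell out: take $f_{\bW}$ to be mean pooling (a permutation-invariant, but non-universal, multiset function) and $h^{(k)}$ to be the $k!$-averaged RP readout $\overrightarrow{h}^{(k)}_{\bW}$, which is isomorphism-invariant and most-expressive on $\cG_k$, so substitution recovers $h^{(\mathrm{RP})}_{\bW}$ exactly. Your explicit checks of the two admissibility conditions, and the remark on why the inequality is strict-in-general unless mean pooling is replaced or post-composed with a universal multiset approximator, match the paper's reasoning.
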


\begin{observation}[\kmodel{s} $\preceq$ $k$-WL at initialization]\label{obs:kwl}
The $k$-WL test, which limits architectures such as \citet{Morris2020b,Mor+2019,Mar+2019}, at initialization, with zero iteration, considers one-hot encodings of $k$-tuples of vertices. Note that each $k$-size subgraph is completely defined by its corresponding $k!$ vertex tuples. Thus, it follows that $k$-WL with zero iterations is at least as expressive as \kmodel{s}.
Further, by combining \Cref{prop:kmodel} and the result from \Cref{lemma:k-kelly}~\citep{nydl2001graph}, it follows that $k$-WL~\citep{Morris2020b} at initialization can count subgraphs of size $\leq k$, which is a simple proof for the recent result \citep[Theorem 3.7]{Che+2020}.
\end{observation}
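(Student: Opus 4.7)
The plan is to prove the two claims in sequence: (i) the expressiveness inequality \kmodel{s} $\preceq$ $k$-WL at initialization, and (ii) the subgraph-counting corollary.

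For (i), I begin by recalling that $k$-WL at its zeroth iteration colors each $k$-tuple $(v_1, \ldots, v_k) \in V(G)^k$ by its \emph{atomic type}: the isomorphism class of the ordered labeled structure consisting of the equality pattern on indices, the adjacency pattern, and, if present, the vertex and edge attributes. The zero-iteration $k$-WL representation is therefore the multiset of atomic types over all $n^k$ tuples. The next step is to argue that this multiset determines the $k$-deck $\cD_k(G)$: for any tuple with $|\{v_1, \ldots, v_k\}| = k$, the atomic type uniquely encodes the isomorphism type of the \emph{ordered} induced subgraph on those vertices; restricting the color multiset to tuples with $k$ distinct entries and grouping them by underlying set recovers, for each $k$-subset $S \subseteq V(G)$, exactly $k!$ ordered isomorphism types whose multiset determines $\cI(G[S])$. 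Hence the multiset of initial $k$-WL colors determines $\cD_k(G)$.

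Having established that the zero-iteration $k$-WL coloring determines $\cD_k(G)$, I would invoke Proposition \ref{prop:kmodel}, which characterizes \kmodel{s} as universal approximators of $k$-reconstructible functions, i.e., functions that depend on $G$ only through its $k$-deck. Any such function is therefore computable from the initial $k$-WL multiset, yielding \kmodel{s} $\preceq$ $k$-WL at initialization.

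For (ii), I would combine (i) with Lemma \ref{lemma:k-kelly}, which states that for any $H$ with $|V(H)| \leq k$ the subgraph count $\nu(H, G)$ is $k$-reconstructible. By Proposition \ref{prop:kmodel} this count is computable by some \kmodel{}, and by (i) it is thus computable from the initial $k$-WL coloring, recovering \cite[Theorem 3.7]{Che+2020}.

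The main obstacle is the bookkeeping around tuples with repeated entries: $k$-WL colors \emph{all} $n^k$ tuples in $V(G)^k$, whereas $\cD_k(G)$ only involves subgraphs on $k$ distinct vertices. Making the projection from the color multiset to $\cD_k(G)$ rigorous requires that atomic types detect the equality pattern (they do, by construction) so that the injective tuples can be isolated, and that within these the $k!$ orderings of a given subset produce the canonical multiset invariant characterizing the underlying unordered isomorphism type. Once this is spelled out, the remainder is routine.
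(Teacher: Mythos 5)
Your proposal is correct and follows essentially the same route as the paper's own (very terse) argument: the initial $k$-WL colors are the atomic/isomorphism types of ordered $k$-tuples, these determine the $k$-deck, and \Cref{prop:kmodel} together with \Cref{lemma:k-kelly} then yield both the $\preceq$ statement and the subgraph-counting corollary. One small wording fix: from the multiset of initial colors you cannot literally ``group tuples by underlying set,'' but you do not need to---each $k$-subset contributes exactly $k!$ injective tuples and each atomic type determines the underlying unordered isomorphism type, so dividing the per-type counts of injective tuples by $k!$ already recovers $\cD_k(G)$.
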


Now, we discuss the computational complexity of \kmodel{s} and how to circumvent it through subgraph sampling.

\xhdr{Computational complexity.} As outlined in \Cref{sec:prelims}, we would need subgraphs of size almost $n$ to have a most-expressive representation of graphs with \kmodel{s}. This would imply performing isomorphism testing for arbitrarily large graphs, as in \citet{bouritsas2020improving}, making the model computationally infeasible. 

A graph with $n$ vertices has $\binom{n}{k}$ induced subgraphs of size $k$. Let $\cT_{h^{(k)}}$ be an upper-bound on computing $h^{(k)}$. Thus, computing $r^{(k)}_{\bW}(G)$ would take $\cO(\binom{n}{k}\cT_{h^{(k)}})$ time. Although \citet{Bab+2016} has shown how to do isomorphism testing in quasi-polynomial time, an efficient (polynomial) time algorithm remains unknown. More generally, expressive representations of graphs~\citep{Ker+2019,murphy2019relational} and isomorphism class hashing algorithms~\citep{junttila2007engineering} still require exponential time regarding the graph size. Thus, if we choose a small value for $k$, i.e., $n \gg k$, the $\binom{n}{k}$ factor dominates, while if we choose $k \approx n$ the $\cT_{h^{(k)}}$ factor dominates. In both cases, the time complexity is exponential in $k$, i.e., $\cO(n^k)$.

\subsection{Relation to previous work}
Recently, \citet{bouritsas2020improving} propose using subgraph isomorphism type counts as features of vertices and edges used in a GNN architecture. The authors comment that if the reconstruction conjecture holds, their architecture is most expressive for $k=n-1$. Here, we point out two things. First, their architecture is at least as powerful as $k$-reconstruction. Secondly, the reconstruction conjecture does not hold for directed graphs. Since edge directions can be seen as edge attributes, their architecture is not the most expressive for graphs with attributed edges. Finally, to make their architecture scalable, in practice, the authors choose only specific hand-engineered subgraph types, which makes the model incomparable to $k$-reconstruction.

\subsection{Proof of \Cref{prop:kmodel} }

We start by giving a more formal statement of \Cref{prop:kmodel}. 

Let $f$ be a continuous function over a compact set of $\cG$ and $|| \cdot ||$ the uniform (sup) norm.  \Cref{prop:kmodel} states that for every $\epsilon > 0$ there exists some $\bW_{\epsilon}$ such that $|| f(G) - r^{(k)}_{\bW_{\epsilon}}(G) || < \epsilon $  if and only if $f$ is $k$-reconstructible.

\begin{proof}

Since $h^{(k)}$ is required to be most expressive, we can see the input of \kmodel{s} as a multiset of unique identifiers of isomorphism types. Thus, it follows from \Cref{def:func-krecon}, that $k$-reconstrucible functions can be approximated by $r^{(k)}_{\bW}$. The other direction, i.e., a function can be approximated by $h^{(k)}$ if it is $k$-reconstructible, follows from the 
Stone–Weierstrass theorem, see \citet{zaheer2017deep}.\qedhere
\end{proof}

\section{More on \recmodel{s}}\label{supp:recmodel}

The following result captures the expressive power of \recmodel{s}.

\begin{proposition}\label{prop:fullrecon}
If the functions $f^{k}_{\bW} \text{ for all } k=3,...,n^*$ are universal approximators of multisets~\citep{murphy2019janossy,wagstaff19,zaheer2017deep} and the Reconstruction Conjecture holds, \recmodel{s} can approximate a function if the function is reconstructible.
\end{proposition}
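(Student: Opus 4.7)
The plan is to prove by strong induction on $|S|$ that the representation $r(G[S])$ assigns equal values to isomorphic induced subgraphs and distinct values to non-isomorphic ones, i.e., $r$ restricted to each size class is a most-expressive representation in the sense of \Cref{sec:prelims}. Once this is established at $|S|=|V(G)|$, the conclusion follows: a reconstructible target function $f$ factors through the isomorphism type $\cI(G)$ by \Cref{def:func-recon}, and composing a final universal approximator with the injective encoding $r$ yields sup-norm approximation on the compact finite-cardinality domain $\mathfrak{G}_{\leq n^*}^\dagger$ via Stone--Weierstrass, exactly as in the proof sketch of \Cref{prop:kmodel}.

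Base case ($|S|=2$): the assumed $h_S$ is by definition most-expressive on two-vertex graphs. Inductive step: assume the claim for all sizes in $\{2,\dots,k-1\}$ and pick two induced subgraphs $G[S]$ and $H[T]$ with $|S|=|T|=k\geq 3$. Permutation-invariance of $f^{(k)}_{\bW}$ over its multiset input immediately gives $r(G[S])=r(H[T])$ whenever $G[S]\simeq H[T]$. For the converse, if $r(G[S])=r(H[T])$, then since $f^{(k)}_{\bW}$ is a universal multiset approximator (and hence can be chosen injective on any finite domain of multisets of vectors, e.g., through the Deep Sets / Janossy Pooling sum-decomposition), the two input multisets $\lms r(G[S-v]) \colon v\in S\rms$ and $\lms r(H[T-u]) \colon u\in T\rms$ must be equal. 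By the inductive hypothesis these multisets are in bijection with the isomorphism-type multisets of the $(k{-}1)$-decks, so $\cD_{k-1}(G[S])=\cD_{k-1}(H[T])$, i.e.\ $G[S]\sim H[T]$ in the sense of \Cref{def:recon}. Because $k\geq 3$ and both subgraphs lie in $\mathfrak{G}_{\leq n^*}^\dagger$, \Cref{conj:recon} yields $G[S]\simeq H[T]$, closing the induction.

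The main obstacle will be making precise the universal-approximation assumption on $f^{(k)}_{\bW}$ at every level of the recursion: we need a \emph{single} choice of parameters that simultaneously realises an injective embedding of multisets of vectors for each $k\in\{3,\dots,n^*\}$, despite the inputs at level $k$ being continuous-valued outputs of level $k{-}1$ rather than discrete tokens. The standard remedy is to combine (i) the Deep Sets injective sum-decomposition, which gives an injective multiset encoding on any finite domain, with (ii) a universal continuous readout on top, and to apply (i)--(ii) layer by layer so that the image of level $k{-}1$ is a finite set of vectors on which level $k$ can again be made injective. A secondary subtlety worth stating explicitly is the domain restriction: \Cref{conj:recon} fails for directed graphs, hypergraphs, and infinite graphs (cf.\ the discussion after the conjecture), so the argument crucially uses $G\in\mathfrak{G}_{\leq n^*}^\dagger$, which is both undirected/edge-unattributed and of bounded size, the latter also ensuring the recursion terminates in $O(n^*)$ levels.
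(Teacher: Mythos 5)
Your proposal is correct and follows essentially the same route as the paper's proof: induction on $|S|$ with the two-vertex base case, using universality/injectivity of the multiset functions at each level together with the Reconstruction Conjecture (valid since $|S|\geq 3$ and the subgraphs stay in $\mathfrak{G}_{\leq n^*}^\dagger$) to promote equal decks to isomorphism, and finishing with the \Cref{prop:kmodel}/Stone--Weierstrass argument at the top level. Your write-up is somewhat more explicit than the paper's about the injectivity-on-finite-domains and parameter-sharing subtleties, but the underlying argument is the same.
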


\begin{proof}
We use induction on $|S|$ to show that every subgraph representation in \recmodel{s} is a most expressive representation if the Reconstruction Conjecture holds.
\begin{itemize}
    \item[i)] Base case: $|S|=2$. It follows from the model definition that $r(G[S])$ is a most expressive representation if $|S|=2$.
    \item[ii)] Inductive step: $2<|S|<n^*$. If all subgraph representations in $\lms r{(G[S - v ])} \!\mid\! v \in S \rms$ are most expressive, it follows from \Cref{prop:kmodel} that if $f^{(|S|)}_{\bW}$is a universal approximator of multisets $r(G[S])$ can approximate any reconstructible function. Thus, if the Reconstruction Conjecture holds, $r(G[S])$ can assign a most expressive representation to $G[S]$.
\end{itemize}
It follows then that $r(G[V(G)])$ will be a multiset function $f^{(n^*)}_{\bW}$ of $\lms r{(G[V(G) - v ])} \!\mid\! v \in V(G) \rms$. From \Cref{prop:kmodel}, if $f^{(n^*)}_{\bW}$ is a universal approximator of multisets $r(G[V(G)])$ can approximate any reconstructible function.\qedhere
\end{proof}
It follows from \Cref{prop:fullrecon} that if the Reconstruction Conjecture holds, \recmodel{s} are a most-expressive representation of $\mathfrak{G}_{\leq n^*}^\dagger$.

\xhdr{Number of parameters.} \citet{wagstaff19} shows how a multiset model needs at least $N$ neurons to learn over multisets of size at most $N$. Since our graphs have at most $n^*$ vertices, we can bound the multiset input size of each $\mathbf{f}_{\bW_{k}} \text{ for all } k=3,...,n^*$. Thus, our total number of parameters is $\mathcal{O}(n^{*^2})$.

\xhdr{Computational complexity.} For a graph with $n$ vertices, we need to compute representations of all subgraphs of sizes $2,3,\dots,n$, i.e., $\binom{n}{2} + \binom{n}{3} + \cdots + \binom{n}{n-1}$.  Thus, computing a \recmodel{} representation takes $\mathcal{O}(2^{n^*})$ time.

\xhdr{Relation to previous work.} Unlike \citet{shawe1993symmetries}, the first work proposing reconstruction to build symmetric neural networks for unattributed graphs with a fixed size, we can handle graphs with vertex attributes and of varying sizes of size up to $n^*$. Future work can explore approximate computing methods a \recmodel{} representation, as recently done for the Relational Pooling (RP) framework. Further, in contrast to the most expressive representation in the RP framework, which uses a permutation-sensitive function, \recmodel{s} incorporate graph invariances in the model.

\section{More on \gnn{s}}\label{supp:gnn}

\begin{figure}
	\includegraphics[width=0.3\textwidth]{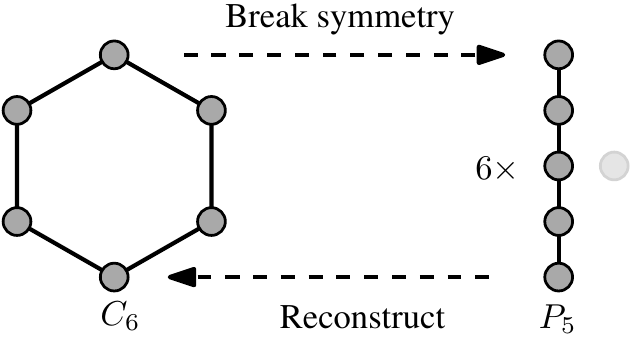}
	\caption{A cycle, undistinguishable by GNNs, and how reconstruction makes it distinguishable.}
	\label{fig:rec-gnn}
\end{figure}

In the following, we give more details on \gnn{s}.

\subsection{Relation to previous work}

Recently, \citet{Gar+2020} showed how many graph properties are not recognizable by GNNs using specific cycle graph examples. Further, most existing work extending GNN architectures to make them more expressive, such as \citet{Vig+2020, li2020distance}, and \citet{murphy2019relational} focus on distinguishing regular graph examples. Finally, \citet{beaini2020directional} used a single pair of planar, non-regular graphs, also used in \citet{Gar+2020}, to prove their method is more expressive than GNNs. Here, we show how the simple idea of graph reconstruction---without changing the original GNN architecture---can extend the GNN's expressivity, distinguishing some classes of regular graphs. 

\subsection{Relating \gnn{s} and \kmodel{s}}\label{gnnreco}

Here, we devise conditions under which \gnn{s} and \kmodel{s} have the same power, using the following definition.
\begin{definition}
Let $\mathcal{D}$ be a distribution on graphs. A graph representation is \emph{$\varepsilon$-universal} with $\varepsilon$ in $[0,1)$ for $\mathcal{D}$ if it assigns, with probability $(1-\varepsilon)$, a unique representation, up to isomorphism, to a graph sampled from $\mathcal{D}$. If a graph representation is $\varepsilon$-universal for all induced $k$-vertex subgraphs of graphs sampled from $\mathcal{D}$, then the representation is \emph{$(\varepsilon,k)$-universal} for $\mathcal{D}$.
\end{definition}

Based on the above definition, we get the following result, relating \gnn{s} and \kmodel{s}.

\begin{proposition}
Let $\mathcal{D}$ be a distribution on graphs with at most $n^*$ vertices, let 	$r^{(k)}_{\bW}$ be a \kmodel{}, let $r_{\bW}^{(k, \text{GNN})}$ be a \gnn{}, and let $h_{\bW}^{\text{GNN}}$ be the underlying  GNN graph representation used within the \gnn{} $r_{\bW}^{(k, \text{GNN})}$.
Assume that the GNN $h_{\bW}^{\text{GNN}}$ is \emph{$(\varepsilon/{n^* \choose k},k)$-universal}, then with probability $1-\varepsilon$ it holds that 
\begin{align*}
    r_{\bW}^{(k, \text{GNN})} \equiv r^{(k)}_{\bW}.
\end{align*}
\end{proposition}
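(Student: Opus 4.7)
The plan is to reduce the equivalence $r_{\bW}^{(k,\text{GNN})} \equiv r^{(k)}_{\bW}$ to a statement about the underlying GNN acting as a most-expressive representation on every $k$-vertex induced subgraph of the input graph. Recall that both models compose a multiset-universal outer function $f_{\bW}$ with a per-subgraph embedding, differing only in that $r^{(k)}_{\bW}$ uses a most-expressive $h^{(k)}$ while $r_{\bW}^{(k,\text{GNN})}$ uses $h_{\bW}^{\text{GNN}}$. Hence, on a fixed graph $G$ drawn from $\mathcal{D}$, the two representations distinguish exactly the same pairs whenever $h_{\bW}^{\text{GNN}}$ agrees with $h^{(k)}$ in the following sense: $h_{\bW}^{\text{GNN}}(G[S]) = h_{\bW}^{\text{GNN}}(G[S'])$ if and only if $G[S] \simeq G[S']$ for all $S, S' \in \cS^{(k)}$. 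I would formalize this by introducing the event $\mathcal{E}(G)$ that $h_{\bW}^{\text{GNN}}$ is most-expressive when restricted to $\{G[S] : S \in \cS^{(k)}\}$, and show that on $\mathcal{E}(G)$ the two computations produce identical concatenated multisets up to row permutation, so that the permutation-invariant $f_{\bW}$ renders them equivalent.

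The main quantitative step is the probability bound on $\mathcal{E}(G)$. First I would note that $G$ has at most $\binom{n^*}{k}$ induced $k$-vertex subgraphs, since $|V(G)| \leq n^*$. For each fixed $S \in \cS^{(k)}$, the $(\varepsilon/\binom{n^*}{k}, k)$-universality assumption implies that, with probability at least $1 - \varepsilon/\binom{n^*}{k}$, $h_{\bW}^{\text{GNN}}$ assigns a unique (up to isomorphism) representation to $G[S]$ — here I would be careful about whose randomness is being considered: the $(\varepsilon', k)$-universality hypothesis governs the behavior of $h_{\bW}^{\text{GNN}}$ on $k$-vertex subgraphs induced from $G \sim \mathcal{D}$. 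Applying a union bound across the (at most) $\binom{n^*}{k}$ subgraphs then shows that with probability at least $1-\varepsilon$, the GNN is simultaneously most-expressive on all of them, i.e., $\mathcal{E}(G)$ holds.

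Conditioning on $\mathcal{E}(G)$, I would conclude by observing that the multiset $\lms h_{\bW}^{\text{GNN}}(G[S]) : S \in \cS^{(k)} \rms$ and $\lms h^{(k)}(G[S]) : S \in \cS^{(k)} \rms$ encode the same equivalence structure on $\cS^{(k)}$ — both inject the isomorphism types of the induced $k$-subgraphs into Euclidean space. Since $f_{\bW}$ is a universal approximator of multisets, and composition with an injective relabeling of elements does not change the class of functions it can represent, this yields $r_{\bW}^{(k,\text{GNN})} \equiv r^{(k)}_{\bW}$ on the event $\mathcal{E}(G)$, proving the claim.

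The main obstacle is pinning down the interpretation of $(\varepsilon, k)$-universality correctly: the definition speaks of a graph sampled from $\mathcal{D}$ receiving a unique representation with probability $1-\varepsilon$, whereas we need a statement about all induced $k$-subgraphs of a single sampled graph simultaneously. The union-bound step resolves this provided the $(\varepsilon/\binom{n^*}{k}, k)$-universality is read as a per-subgraph failure probability; I would state this reading explicitly and argue that, because each of the $\binom{n^*}{k}$ subgraphs of $G$ is itself a $k$-vertex subgraph of a graph from $\mathcal{D}$, the per-subgraph guarantee applies uniformly, so the union bound closes the argument.
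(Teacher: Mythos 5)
Your proposal matches the paper's proof: the paper likewise applies a union bound over the at most $\binom{n^*}{k}$ induced $k$-vertex subgraphs, each failing to be uniquely represented with probability at most $\varepsilon/\binom{n^*}{k}$, so that with probability $1-\varepsilon$ the GNN is most-expressive on all of them and the equivalence with the \kmodel{} follows. Your additional discussion of why this event yields $r_{\bW}^{(k,\text{GNN})} \equiv r^{(k)}_{\bW}$ and of how to read the $(\varepsilon,k)$-universality assumption simply spells out details the paper leaves implicit.
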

\begin{proof}
By the union bound, we can upper bound the probability that at least one $k$-vertex subgraph is not uniquely represented by the GNN $h_{\bW}^{\text{GNN}}$ by 
\begin{equation*}
    \sum_{i=1}^{{n^* \choose k}} \frac{\varepsilon}{{n^* \choose k}} = \varepsilon.
\end{equation*}
Hence, we can lower bound the probability that this never happens by $1-\varepsilon$.
\end{proof}

\subsection{Proof of \Cref{thm:cycle}}

In the following, we proof \Cref{thm:cycle}. The following result, showing that the $1$-WL assigns unique representation to forest graphs, follows directly from~\cite{Arv+2015}.

\begin{lemma}[\citet{Arv+2015}]\label{lem:wltrees} 1-WL distinguishes any pair of non-isomorphic forests.
\end{lemma}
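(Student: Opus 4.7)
The plan is to reduce the forest case to the tree case and then invoke the classical fact that color refinement (1-WL) distinguishes non-isomorphic trees by simulating the Aho--Hopcroft--Ullman (AHU) canonicalization algorithm.

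For the reduction, observe that 1-WL is a purely local refinement: a vertex's new color depends only on its old color and the multiset of its neighbors' colors. Consequently, the stable coloring produced by 1-WL on a disjoint union of graphs coincides, component by component, with the stable coloring produced on each connected component in isolation. Hence the stable color multiset of a forest $F$ equals the (multiset) disjoint union of the stable color multisets of its tree components. Two forests therefore receive the same 1-WL color histogram iff their tree components can be matched pairwise with coinciding 1-WL histograms, so it suffices to show that 1-WL distinguishes non-isomorphic trees.

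For the tree case, I would proceed by induction on the number of 1-WL iterations. Let $c_t$ denote the coloring after $t$ rounds. The key claim is that for any two vertices $u,v$ (possibly in different trees), $c_t(u) = c_t(v)$ holds iff the rooted subtree of depth $t$ around $u$ is isomorphic to the rooted subtree of depth $t$ around $v$. The base case $t=0$ is trivial; the inductive step follows because $c_{t+1}(u)$ encodes the pair $(c_t(u), \lms c_t(w) : w \in N(u) \rms)$, which by the inductive hypothesis records the isomorphism type of the depth-$(t+1)$ rooted subtree at $u$. To pass from rooted to unrooted distinguishability, root each tree at its centroid (one vertex) or at the edge between its two centroids; after $O(\mathrm{diam}(T))$ iterations the color at the centroid encodes the entire tree up to isomorphism, so non-isomorphic trees receive different stable color multisets.

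The main obstacle is cleanly handling the transition from rooted to unrooted trees. For rooted trees the AHU-style induction is textbook, but for unrooted trees one must argue that the centroid is determined by and compatible with the stable coloring, so that isomorphic trees admit a matching rooting and non-isomorphic trees do not. This follows because centroids are preserved by every automorphism, hence occupy a distinguished cell of the stable 1-WL partition; the detailed argument is precisely the content of the cited result in~\citep{Arv+2015}.
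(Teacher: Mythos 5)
Your overall route (AHU-style induction showing that $1$-WL colours encode rooted neighbourhood types) is a legitimate way to get this lemma; note that the paper itself gives no proof at all, it simply quotes the result from \citet{Arv+2015}, so the only question is whether your sketch stands on its own. It has one genuine logical gap: the reduction step. You claim that two forests receive the same colour histogram iff their tree components can be matched pairwise with coinciding histograms, and conclude that ``it suffices to show that 1-WL distinguishes non-isomorphic trees.'' That inference pattern is false in general: colour refinement can distinguish every pair of components and still fail on the disjoint union (the standard example is $C_7$ versus $C_3 \cup C_4$ --- all vertices of both graphs stay in a single colour class, yet the connected pieces are pairwise distinguishable, trivially, by size). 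So ``1-WL distinguishes trees'' does not by itself yield ``1-WL distinguishes forests.'' What saves you is the stronger statement you actually prove afterwards --- that $c_t(u)$ determines the isomorphism type of the depth-$t$ rooted neighbourhood of $u$ --- because that claim holds verbatim in forests (acyclicity is all that is used), and from it the forest case follows directly: for $t$ large each vertex's colour encodes its rooted component, so equal histograms force, for every tree type $T_0$, equal counts of vertices lying in components of type $T_0$, hence equal numbers of components of each type. You should restructure the argument so that the ball-encoding claim is proved for forests and the histogram comparison is done at the level of rooted components, rather than resting on the pairwise-matching ``iff.''

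Two smaller points. First, your inductive step is too quick as stated: by the inductive hypothesis $c_t(w)$ records the depth-$t$ neighbourhood of $w$ \emph{in the whole tree}, which includes the branch back through $u$, so $\bigl(c_t(u), \{\!\!\{c_t(w) : w \in N(u)\}\!\!\}\bigr)$ is not literally the list of pendant subtrees below $u$; the standard fix is to carry directed (edge-oriented) colours encoding the subtree at $w$ away from $u$, or to argue that the branch through $u$ can be ``subtracted'' since $c_t(u)$ is known. Second, your final sentence defers the rooted-to-unrooted centroid argument back to the cited paper, which defeats the purpose of a self-contained proof; with the component-level argument above the centroid is not needed at all, since any vertex's colour at depth $t \geq \mathrm{diam}$ already encodes its entire rooted component.
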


The following results shows that the degree list of a graph is reconstructable.

\begin{lemma}[\citet{taylor1990reconstructing}]\label{lem:deg} The degree list of an $n$-vertex graph is $(n-\ell)$-reconstructible if $$ n \geq (\ell -\log\ell + 1) \Big( \frac{e + e\log\ell + e + 1}{(\ell-1) \log \ell - 1  }\Big) + 1. $$
\end{lemma}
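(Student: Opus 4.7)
The plan is to show that the degree list of $G$ can be read off algebraically from the $(n-\ell)$-deck $\mathcal{D}_{n-\ell}(G)$, and to use the quantitative bound on $n$ to rule out ambiguity from high-degree vertices. The starting point is Lemma~\ref{lemma:k-kelly}: for any graph $H$ with $|V(H)| < n-\ell$, the induced count $\nu(H,G)$ is $(n-\ell)$-reconstructible. Applying this to the star $H = K_{1,j}$ for each $j$ with $j+1 \leq n-\ell$, we recover the binomial moments of the degree sequence,
\begin{equation*}
m_j \;:=\; \nu(K_{1,j},G) \;=\; \sum_{v \in V(G)} \binom{d_G(v)}{j} \;=\; \sum_{i\geq j} n_i \binom{i}{j},
\end{equation*}
where $n_i$ denotes the number of vertices of degree $i$ in $G$. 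The number of vertices $n$ and the number of edges $m_1/2$ are thus reconstructible, so we obtain $n-\ell-1$ linear equations plus the normalization $\sum_i n_i = n$ in the unknowns $(n_0,\dots,n_{n-1})$.

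Next I would treat the easy regime: if every degree in $G$ is at most $n-\ell-1$, then the unknowns are $(n_0,\dots,n_{n-\ell-1})$ and the coefficient matrix $\bigl[\binom{i}{j}\bigr]_{0 \leq i,j \leq n-\ell-1}$ is lower triangular with unit diagonal, hence invertible. Thus the degree list is recovered uniquely from $m_1,\dots,m_{n-\ell-1}$. It remains to handle the hard regime, where $G$ has at least one vertex of degree $\geq n-\ell$. Let $t$ denote the number of such ``high-degree'' vertices. Since each contributes at least $\binom{n-\ell}{1}$ to $m_1 = 2|E(G)|$, and since high degrees also contribute disproportionately to $m_j$ for $j$ close to $n-\ell-1$, one can derive an upper bound on $t$ in terms of the reconstructible quantity $m_{n-\ell-1}$ together with the total edge count.

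The core of the argument is then to show that the remaining ambiguity, i.e.\ the number of integer sequences $(n_{n-\ell},n_{n-\ell+1},\dots,n_{n-1})$ compatible with the recovered moments and with $\sum_i i\, n_i = m_1$, collapses to a single choice under the hypothesis on $n$. For this I would compare any two putative degree sequences $(n_i)$ and $(n_i')$ agreeing on $m_1,\dots,m_{n-\ell-1}$ and express their difference via a linear combination $\sum_i (n_i - n_i') \binom{i}{j} = 0$ for $j \leq n-\ell-1$; this is a polynomial identity of degree at most $n-\ell-1$ evaluated at the ``high'' integer points $n-\ell,\dots,n-1$, and a Vandermonde/divided-difference argument forces at most $\ell$ nonzero coordinates of $n_i - n_i'$. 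Bounding the remaining $\ell$-parameter freedom then reduces to counting lattice points in a simplex of volume controlled by $\binom{n-1}{\ell-1}$ together with a Stirling estimate, which produces the $e + e\log\ell$ numerator and the $(\ell-1)\log\ell - 1$ denominator in the threshold on $n$; the additive $(\ell - \log\ell + 1)$ factor absorbs the number of high-degree classes that need to be disambiguated. The main obstacle is this last step: carefully matching the entropy-style counting of admissible high-degree tails against the number $n-\ell-1$ of available moments to extract the exact constants appearing in the stated bound.
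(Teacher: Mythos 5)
This lemma is not proved in the paper at all: it is imported verbatim from \citet{taylor1990reconstructing} and used as a black box in the proof of Theorem~1, so there is no in-paper argument to compare yours against. Judged on its own terms, your opening is sound and is indeed the classical entry point: Kelly's lemma (the paper's Lemma~\ref{lemma:k-kelly}) applied to the stars $K_{1,j}$ with $j+1 \leq n-\ell$ recovers the binomial moments $m_j = \sum_v \binom{d_G(v)}{j}$, and the lower-triangular/unit-diagonal observation correctly disposes of the case where every degree is at most $n-\ell-1$.

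The gap is in the hard regime, and it is not just a matter of missing details. Your central technical claim --- that the vanishing of $\sum_i (n_i - n_i')\binom{i}{j}$ for $j \leq n-\ell-1$ ``forces at most $\ell$ nonzero coordinates of $n_i - n_i'$'' --- is backwards. The functions $\binom{x}{j}$, $j = 0,\dots,n-\ell-1$, span all polynomials of degree at most $n-\ell-1$, and any $n-\ell$ columns of the resulting $(n-\ell)\times n$ generalized Vandermonde matrix are linearly independent; hence a \emph{nonzero} difference vector in the kernel must have support of size at least $n-\ell+1$, not at most $\ell$. The kernel does have dimension $\ell$, but that is a statement about the number of free parameters, not about the support of its elements, and conflating the two breaks the subsequent ``$\ell$-parameter freedom'' counting. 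With the support bound pointing the other way, the lattice-point/simplex estimate you propose has nothing to count, and the derivation of the specific constants $e + e\log\ell$ and $(\ell-1)\log\ell - 1$ is, as you concede, reverse-engineered from the target rather than produced by an argument. To close the proof you would need a genuinely different mechanism for ruling out a second nonnegative integer solution --- Taylor's argument controls how much of the degree sequence can be ``hidden'' above the threshold $n-\ell$ and plays the number of available moments against a Stirling-type bound on the number of admissible high-degree tails --- and none of that is present here.
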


The following results shows that connectedness of a graph is reconstructable.

\begin{lemma}[\citet{spinoza2019reconstruction}]\label{lem:conn} Connectedness of an $n$-vertex graph is $(n-\ell)$-reconstructible if $$ \ell < (1+o(1)) \Big(  \frac{2\log n}{ \log \log n }  \Big)^{1/2}.$$.
\end{lemma}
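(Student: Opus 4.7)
The plan is to pass from the $(n-\ell)$-deck to subgraph counts via Kelly's Lemma (Lemma~\ref{lemma:k-kelly}) and then detect connectedness from those counts. First, invoking Lemma~\ref{lemma:k-kelly}, we recover $\nu(H,G)$ for every graph $H$ with $|V(H)| \leq n-\ell$. Summing over connected $H$ of each order gives the count $c_k(G)$ of connected induced subgraphs of $G$ on $k$ vertices, determined for every $k \leq n-\ell$. For a disconnected $G$ with components $C_1,\dots,C_c$, one has $c_k(G) = \sum_i c_k(C_i)$, which vanishes for any $k$ strictly exceeding the largest component size.

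I would then split into two cases based on the size of the largest component. If every component of the (putatively disconnected) $G$ has size at most $n-\ell-1$, then $c_{n-\ell}(G) = 0$; any connected $n$-vertex graph $G'$ admits at least one connected induced subgraph on $n-\ell$ vertices (iteratively delete a leaf of a spanning tree), so $c_{n-\ell}(G') \geq 1$ cleanly distinguishes the two cases. The remaining possibility is that $G$ has a unique giant component $C_1$ of size $n_1 \geq n-\ell$ — uniqueness follows because $\ell < n/2$ in the stated range forces $n - n_1 < \ell$ — with a residual $R := C_2 \sqcup \cdots \sqcup C_c$ on at most $\ell - 1$ vertices. In this case the strategy is to identify $R$ itself from the deck: every induced subgraph of $G$ decomposes uniquely as $H_1 \sqcup H_2$ with $H_1$ induced in $C_1$ and $H_2$ induced in $R$, yielding the convolution identity $\nu(H, G) = \sum_{H = H_1 \sqcup H_2} \nu(H_1, C_1)\,\nu(H_2, R)$ over all graphs $H$ with $|V(H)| \leq n-\ell$. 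Inverting this convolution on the space of small subgraphs recovers $R$ up to isomorphism, and any nontrivial $R$ then contradicts $G$ being connected.

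The main obstacle, and the source of the quantitative hypothesis $\ell < (1+o(1))\sqrt{2\log n /\log\log n}$, lies in carrying out this convolution inversion from counts on subgraphs of size at most $n-\ell$. The number of non-isomorphic residuals $R$ on at most $\ell$ vertices grows roughly as $2^{\binom{\ell}{2}}/\ell!$, so distinguishing among them requires at least $\log_2(2^{\binom{\ell}{2}}/\ell!) \sim \ell^2/2 - \ell\log\ell$ bits of resolution from the deck; balancing this against the effective information budget extractable from Kelly's counts — which scales like $\log n$ in the generic regime — yields a base threshold $\ell^2 \lesssim 2\log n$, sharpened to $\ell^2 \leq (2 + o(1))\log n / \log \log n$ after a Stirling-type correction handling near-coincidences among candidate residuals. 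The hardest step is therefore a quantitative separation lemma certifying that distinct residuals $R, R'$ produce distinct subgraph-count profiles against every small test graph $H$, uniformly as $\ell$ approaches the stated threshold; once this lemma is in hand, $R$ is forced to be empty, and so $G$ must be connected.
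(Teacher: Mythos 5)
The paper does not prove this lemma; it is imported verbatim from \citet{spinoza2019reconstruction}, so there is no in-paper argument to compare against, and your proposal must stand on its own. It does not: there is a genuine gap exactly where the theorem's content lies. Your first case is correct and in fact needs no hypothesis on $\ell$ at all --- if every component of a disconnected $G$ has fewer than $n-\ell$ vertices, the $(n-\ell)$-deck contains no connected card, while every connected $n$-vertex graph contributes at least one (peel leaves of a spanning tree) --- so $c_{n-\ell}$ separates these cases. The problem is the second case, where the largest component $C_1$ has at least $n-\ell$ vertices. Your plan is to recover the residual $R$ by ``inverting'' the identity $\nu(H,G)=\sum_{H=H_1\sqcup H_2}\nu(H_1,C_1)\,\nu(H_2,R)$, but this system contains two families of unknowns --- the counts $\nu(\cdot,C_1)$ for the unknown giant component as well as $R$ itself --- and you supply no inversion procedure. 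What actually has to be shown is that no \emph{connected} $n$-vertex graph can realize the same profile $\{\nu(H,\cdot)\}_{|V(H)|\le n-\ell}$ as $C_1\sqcup R$ with $R$ nonempty; postulating a ``separation lemma'' to that effect is just restating the theorem.

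The quantitative part is likewise not a proof. Observing that there are roughly $2^{\binom{\ell}{2}}/\ell!$ candidate residuals gives at best a lower bound on how much information a successful scheme must extract; it says nothing about whether the $(n-\ell)$-deck actually carries that information, and the assertion that the ``information budget scales like $\log n$'' is unsupported --- it is then silently adjusted by a $\log\log n$ factor to match the stated answer. (Note that $\ell^2\lesssim 2\log n/\log\log n$ is what falls out of a hypothesis of the shape $n\ge\ell^{(1+o(1))\ell^2}$, which already suggests the true argument counts something other than residual isomorphism types.) The proof in \citet{spinoza2019reconstruction} is a substantive combinatorial analysis of the numbers of connected induced subgraphs of different orders, not an entropy count, and nothing in your sketch substitutes for it. As written, your argument establishes only the easy half of the case split.
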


Moreover, we will need the following observation.

\begin{observation}\label{obs:cycles} Every subgraph of a cycle graph is either a path graph or a collection of path graphs.
\end{observation}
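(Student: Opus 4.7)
The plan is to prove the observation via a simple degree argument combined with the girth of $C_n$, which handles both the single-path and multi-path cases uniformly. The key structural fact is that every vertex of $C_n$ has degree exactly $2$, so every vertex of any induced subgraph has degree at most $2$, and it is well known that a simple graph with maximum degree at most $2$ is a disjoint union of paths and cycles.

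First I would fix a cyclic labeling $v_1, v_2, \ldots, v_n$ of $V(C_n)$ with edges $\{v_i, v_{i+1}\}$ (indices mod $n$), and let $S \subseteq V(C_n)$ be an arbitrary subset inducing the subgraph of interest. Since the context in \Cref{thm:cycle} uses subgraph size $k = n - \ell$ with $\ell \geq 1$, it suffices to treat proper subsets $S \subsetneq V(C_n)$; the full graph $C_n[V(C_n)] = C_n$ is itself a cycle and is the only case to exclude from the statement. Step one: observe that for every $v \in S$, the induced-subgraph degree satisfies $\deg_{C_n[S]}(v) \leq \deg_{C_n}(v) = 2$.

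Step two: apply the classical classification of graphs with maximum degree at most $2$, namely that $C_n[S]$ decomposes into vertex-disjoint components each of which is a (possibly trivial) path or a cycle. Step three: rule out the cycle components. Any cycle component of $C_n[S]$ would be an induced cycle of $C_n$, and since $C_n$ has girth $n$, any induced cycle must use all $n$ vertices. But $|S| < n$, contradicting the existence of such a component. Hence every component of $C_n[S]$ is a path, so $C_n[S]$ is a single path or a disjoint union of paths. Alternatively, one can argue more directly and constructively: let $T := V(C_n) \setminus S \neq \emptyset$; the vertices of $T$ cut the cyclic order into maximal arcs of consecutively-indexed vertices of $S$, and each such arc is a path component of $C_n[S]$ since its endpoint vertices have lost at least one of their two cyclic neighbors.

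There is no serious obstacle here; the only care needed is to specify whether the statement intends proper induced subgraphs (so as not to include $C_n$ itself among ``paths or collections of paths''), which the surrounding use in \Cref{thm:cycle} makes clear. I would therefore present the degree/girth argument as the main proof and optionally mention the explicit arc decomposition as an illustration of the underlying cyclic structure.
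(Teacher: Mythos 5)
Your proof is correct. The paper states this as an unproved observation (it is invoked directly in the proof of \Cref{thm:cycle} with no argument given), so there is no authorial proof to compare against; your degree-bound-plus-girth argument, and equally the explicit arc decomposition you sketch as an alternative, are both standard and sound formalizations, and you are right to flag the trivial caveat that $S = V(C_n)$ returns the cycle itself and is excluded by the context $k = n - \ell$ with $\ell \geq 1$.
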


We can now prove \Cref{thm:cycle}.

\begin{proof}[Proof of \Cref{thm:cycle}]

We start with a simple observation in \Cref{obs:cycles}. With that, we know from \Cref{lem:wltrees} that GNNs can assign unique representations to every cycle subgraph. Thus, it follows from \Cref{prop:kmodel} that \gnn{s} can learn $k$-reconstructible functions of cycle graphs. With that, if conditions \textit{i)} and \textit{ii)} hold, it follows from \Cref{lem:deg,lem:conn} that we can reconstruct the degree list and the connectedness of a cycle graph. Note that a cycle graph is a $2$-regular connected graph. That is, it is uniquely identified by its degree list and its connectedness. Thus, \gnn{s} can assign an unique representation to it if conditions \textit{i)} and \textit{ii)} hold.

\end{proof}

\subsection{Proof of \Cref{thm:csl}}

The following definition defines CSL graphs. 

\begin{definition}[Circular Skip Link (CSL) graphs~\citep{murphy2019relational}]\label{def:csl}
Let $R$ and $M$ be co-prime natural numbers such that $R < M - 1$. We denote by $\cG_{\text{skip}}(M,R)$ the undirected $4$-regular graph with vertices labeled as $0,1,...,M-1$ whose edges form a cycle and have skip links. More specifically, the edge set is defined by a cycle formed by $(i,i+1),(i+1,i)$ in $E$ for $i \in \{ 1,...,M-2\}$ and $(0,M-1),(M-1,0)$ together with skip links defined recursively by the sequence of edges $(s_i,s_{i+1}),(s_{i+1},s_{i}) \in E$ with $s_1=0, s_{i+1}= (s_i + R) \mod M $

\end{definition}

\begin{proof}[Proof of \Cref{thm:csl}]
Consider two non-isomorphic CSL graphs with the same number of vertices, which we can denote by $\cG_{\text{skip}}(M,R)$ and $\cG_{\text{skip}}(M,R')$ with $R \neq R'$ according to \Cref{def:csl}. First, note that every $n-1$-size subgraph (card) of a CSL graph is isomorphic to each other. Thus, for \gnn{s}---due to the equivalence in expressiveness between GNNs and 1-WL---it suffices to prove that 1-WL can distinguish between a card from $\cG_{\text{skip}}(M,R)$ and a card from $\cG_{\text{skip}}(M,R')$.

Now, let $\cG_{\text{skip}}^{-i}(M,R)$ and $\cG_{\text{skip}}^{-i}(M,R')$ be the two subgraphs we get by removing the vertex $i$ from $\cG_{\text{skip}}(M,R)$ and $\cG_{\text{skip}}(M,R')$ respectively. In each subgraph, $M-4$ vertices remain with degree 4. However, we can differentiate the two subgraphs by looking at the vertices which now have degree 3. In both subgraphs $i-1$ and $i+1$ will have degree 3. Moreover, in $\cG_{\text{skip}}^{-i}(M,R)$ the vertex $(i + R) \mod M$ will have degree 3, while $(i + R') \mod M$ in $\cG_{\text{skip}}^{-i}(M,R')$ will have degree 3. Since $(i + R) \mod M \neq (i + R') \mod M$, the distance from $i+1$ to $(i + R) \mod M$ and to $(i + R') \mod M$ is different in the two graphs. Thus, the $1$-WL will assign different colors to the vertices between $i+1$ and $(i + R) \mod M$ in a subgraph and between $i+1$ and $(i + R') \mod M$ in the other. The same argument applies to the distance from $j$, where $(j + R) \mod M \equiv i$  to $i-1$. Hence, the $1$-WL will assign different color histograms to the subgraphs and thus \gnn{s} can distinguish them.\qedhere
\end{proof}

\subsection{Proof of \Cref{prop:gnn-smallk}}

 We start by stating the following result.
\begin{lemma}[\citet{nydl1981finite}]\label{lem:spider} Spider graphs are not $\ceil{n/2}$-reconstructible.
\end{lemma}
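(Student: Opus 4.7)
The plan is to exhibit, for infinitely many $n$, a pair of non-isomorphic spider graphs $S_1, S_2$ on $n$ vertices whose $\lceil n/2\rceil$-decks coincide; this directly witnesses the failure of $\lceil n/2\rceil$-reconstructibility on the class of spiders. Recall that a spider is a tree with a unique branch vertex $c$ (the \emph{center}) from which a collection of vertex-disjoint paths (\emph{legs}) emanate, so up to isomorphism a spider is determined by the multiset of leg lengths $(\ell_1,\dots,\ell_p)$ with $\sum_i \ell_i = n-1$. Two spiders are non-isomorphic iff their leg-length multisets differ.

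My plan has three steps. First, I would fix an even $n$ and set $k = n/2$, then select two leg-length multisets $L = (\ell_1,\dots,\ell_p)$ and $L' = (\ell_1',\dots,\ell_p')$ with $L \neq L'$ (as multisets) but $\sum \ell_i = \sum \ell_i' = n-1$, chosen so that each individual $\ell_i, \ell_i' < k$ (otherwise a full leg would appear as a subgraph and expose the multiset). Following Nydl's construction, the simplest parameter choice is a pair of spiders with the same number $p$ of legs, designed so that the multiset of pairwise sums $\{\!\{\ell_i+\ell_j\}\!\}$ and higher partial sums up to size $k-1$ agree between $L$ and $L'$ while the multisets themselves differ.

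Second, I would enumerate and classify the induced $k$-vertex subgraphs of a spider. Any induced subgraph of $S_j$ on $k$ vertices is a forest falling into two families: (a) those avoiding the center $c$, which are disjoint unions of subpaths, each entirely contained in a single leg; and (b) those containing $c$, which are sub-spiders obtained by choosing an initial segment of each leg (possibly empty) plus, in leg $i$, some additional disjoint subpaths disconnected from $c$. Using the constraint $\ell_i < k$, I would write the multiplicity of each isomorphism type of induced $k$-subforest as an explicit symmetric function of the leg-length multiset, involving sums of products of binomial coefficients counting contiguous-subpath placements within each leg under the constraint that the chosen vertex counts total $k$.

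Third, I would show that both symmetric functions---the family-(a) counts and the family-(b) counts---depend on $L$ only through partial-sum statistics that coincide for $L$ and $L'$ by the choice in step one; hence $\mathcal{D}_k(S_1)=\mathcal{D}_k(S_2)$, proving non-reconstructibility. The main obstacle will be the family-(b) counts: subgraphs containing the center mix contributions from all legs simultaneously, so matching their multiplicities imposes nontrivial symmetric-function identities on $L$ and $L'$, not just the pairwise-sum identity that suffices for family (a). The crux is therefore selecting the leg-length multisets so that these mixed generating functions agree through degree $k$; once such $L,L'$ are pinned down, verifying the match reduces to a finite, mechanical check that exploits the fact that no $k$-vertex induced subgraph can ever span a full leg and thereby reveal the global leg-length distribution.
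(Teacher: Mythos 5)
First, note that the paper does not actually prove this lemma: it is imported verbatim from Nydl's 1981 paper and used as a black box in the proof of Proposition~\ref{prop:gnn-smallk}, so there is no in-paper argument to compare against step by step. Your overall strategy---exhibit a pair of non-isomorphic spiders on $n=2k$ vertices with identical $k$-decks---is exactly the shape of Nydl's result, and your classification of the induced $k$-vertex subgraphs of a spider (disjoint unions of sub-paths of legs, versus a central sub-spider of leg prefixes together with further detached sub-paths) is correct.

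The genuine gap is that the pair is never produced, and producing it is the entire content of the theorem. Your step one proposes to ``select'' leg-length multisets $L\neq L'$ whose mixed subgraph-count generating functions agree through degree $k$, and your step three defers verification to a ``finite, mechanical check'' on this unspecified construction; but the existence of such $L,L'$ is precisely what must be proved, so as written the argument assumes its conclusion. Moreover, the concrete invariants you propose to match are not the operative ones. The smallest genuine instance is $n=6$, $k=3$: the spiders with leg multisets $\{1,1,3\}$ and $\{1,2,2\}$ are non-isomorphic, have the same degree sequence $(3,2,2,1,1,1)$, hence the same number of edges and of induced $P_3$'s, and therefore the same $3$-deck (for a triangle-free graph the $3$-deck is determined by $n$, $|E|$, and $\sum_v \binom{d_v}{2}$). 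Yet their multisets of pairwise leg-length sums differ ($\{2,4,4\}$ versus $\{3,3,4\}$), and the first spider has a leg of length $3=k$, violating your design constraint $\ell_i<k$. So the selection criterion in step one would not locate the actual counterexamples, and the identities that genuinely control the $k$-deck of a spider (counts of induced forests, organized leg by leg) still need to be derived and matched. To close the gap you would need to write down an explicit family of pairs for general $k$ and carry out the deck computation, which is the substance of Nydl's proof.
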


\begin{proof}[Proof of \Cref{prop:gnn-smallk}]

First, it is clear that \gnn{s} $\preceq$ \kmodel{s}, thus it suffices to show  GNNs $\not \preceq$ \kmodel{s} for $k \leq \ceil{n/2} $. It follows from \Cref{lem:spider} and \Cref{prop:kmodel} that \kmodel{s} cannot assign unique representations to spider graphs if $k=\ceil{n/2}$. However, spider graphs are a family of trees, which are known to be assigned unique representations in 1-WL, see~\Cref{lem:wltrees}. Thus, due to the equivalence in expressiveness between GNNs and 1-WL, we know GNNs can assign unique representations to spider graphs. Thus, GNNs $\not \preceq$ \kmodel{s} for $k = \ceil{n/2} $. In fact, from \Cref{obs:kdeck}, we know GNNs $\not \preceq$ \kmodel{s} for $k \leq \ceil{n/2} $. \qedhere
\end{proof}

\subsection{Proof of \Cref{prop:2wl}}\label{supp:2wl}
In the following, we provide a pair of non-isomorphic graphs that the $2$-WL cannot distinguish, while a \gnn{} with $k:=n-2$ can.

\begin{proof}[Proof sketch of \Cref{prop:2wl}]

The $2$-WL cannot distinguish any pair of non-isomorphic, strongly-regular graphs with the same parameters~\cite{Gro+2020b}. For example, following~[A1], the $2$-WL cannot distinguish the line graph of $K_{4,4}$ (graph $G_1$) and Shrikande Graph (graph $G_2$), both strongly-regular graphs with parameters $(16, 6, 2, 2)$, which are non-isomorphic.\footnote{	\url{https://www.win.tue.nl/~aeb/graphs/srg/srgtab.html}
} Hence, any $2$-GNN architecture can also not distinguish them. The graph $G_1$ and $G_2$ are non-isomorphic since the neighborhood around each node in the graph either induces a cycle or a disjoint union of triangles, respectively. By a similar argument as in the proof of~\cref{thm:csl}, the $1$-WL can distinguish the graphs induced by the decks $\mathcal{D}_{2}(G_1)$ and $\mathcal{D}_{2}(G_2)$, which we verified by a computer experiment. Hence, there exists a $(n-2)$-Reconstruction $2$-GNN architecture that can distinguish the two graphs. 3
\end{proof}

\subsection{WL reconstruction conjecture}

There exists a wide variety of graphs identifiable by 1-WL---and thus by GNNs---with one or a few $(n-1)$-size subgraphs not identifiable by 1-WL. A simple example would be adding a special vertex to a cycle with 5 vertices. In this new 6-vertex graph we connect the special vertex to every other vertex in the 5-cycle. Further, we add another 5-cycle as a different component. This new 11-vertex graph is identifiable by 1-WL from \citet[Theorem 17]{Kie+2015}.\footnote{The flip of this graph is a bouquet forest with two 5-cycles (of distinct colors) and an isolated extra vertex} However, the 10-vertex subgraph we get by removing the special vertex is a regular graph, notably not identifiable by 1-WL.

As we saw, getting one or even a few subgraphs that GNNs cannot distinguish from a distinguishable original graph is not a complex task. However, in order to understand whether GNNs are not less powerful than \gnn{s} we need to find a counter example where 1-WL cannot distinguish the entire multiset of $k$-vertex subgraphs. In this work, we were not able to find such example for large enough $k$, \textit{i.e.} $k \approx n$. Thus, we next state what we name the WL reconstruction conjecture.

\begin{conjecture}\label{conj:wl}
    For $\cS^{(k)}$ as the set of all $k$-size subsets of $V(G)$, let $G_k = \sum_{S \in \cS^{(k)}} G[S] $ be the disjoint union of all $k$-vertex subgraphs of $G$. Then, there exists some $k \in [n]$ such that if $G$ is uniquely identifiable by 1-WL, $G_k$ is.
\end{conjecture}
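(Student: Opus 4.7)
When $k=n$ the only element of $\cS^{(n)}$ is $V(G)$ itself, so $G_n = G$ and the implication is vacuous. Hence the conjecture is only nontrivial for $k<n$, and the real target, aligned with the preceding discussion, is to establish it for $k=n-1$ (or for some $k$ close to $n$). The plan is a contrapositive argument: fix $k=n-1$, assume $G$ is uniquely 1-WL identifiable, let $H'$ be any graph whose 1-WL stable coloring matches that of $G_{n-1}$, and show $H' \simeq G_{n-1}$.

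\textbf{Step 1.} I would first exploit the fact that color refinement acts component-wise: the stable color of a vertex depends only on its connected component, so the stable color multiset of a disjoint union is the multiset sum of the stable color multisets of the components. This immediately forces $H'$ to split into components whose WL-equivalence-class multiset matches that of the cards $\{G[V(G)-v] : v \in V(G)\}$, and in particular into exactly $n$ components once we also use that 1-WL reads off the number of components and each component's size from the stable coloring. \textbf{Step 2.} I would then try to aggregate 1-WL-detectable invariants of the cards into 1-WL-detectable invariants of the parent $G$: the number of edges of $G$ equals $\tfrac{1}{n-2}$ times the total edge count across cards, the degree sequence of $G$ can be reconstructed from those of the cards, and more generally Kelly's Lemma (\Cref{lemma:kelly}) recovers every small subgraph count of $G$ from its deck. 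Combining these recovered invariants with the hypothesis that $G$ is 1-WL identifiable should pin down the isomorphism type of $G$ from $H'$, from which $H' \simeq G_{n-1}$ would follow.

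\textbf{The main obstacle} sits squarely in Step 2. Being 1-WL identifiable is a \emph{global} property of $G$ that is not inherited by its cards: a 1-WL identifiable $G$ can perfectly well have cards that are regular or otherwise algebraically symmetric and therefore \emph{not} themselves distinguishable by 1-WL (as the example immediately preceding the conjecture illustrates). Whenever some card $G[V(G)-v]$ shares its WL coloring with a non-isomorphic graph $F$ on $n-1$ vertices, one could hope to substitute $F$ for that card inside the disjoint union and produce an $H' \not\simeq G_{n-1}$ with the same 1-WL coloring as $G_{n-1}$. The conjecture is asking that the aggregate information in the \emph{multiset} of card colorings rule out all such swaps simultaneously, leveraging the fact that the cards come from a common parent graph $G$. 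Converting this global consistency into a rigorous argument, or equivalently bounding how small $n-k$ can be while still carrying enough 1-WL information about $G$, is, I expect, the crux of the problem, and is precisely why the authors state it as a conjecture rather than a theorem. A reasonable partial program is: (i) prove the implication within classes where 1-WL identifiability is hereditary under vertex deletion (trees and certain amenable graphs), using induction on $n$; (ii) quantify how $n-k$ must grow for the aggregated WL-invariants of the cards to dominate the information needed to recover $G$; and (iii) search small strongly regular or algebraically symmetric graphs for possible counterexamples, which would refute the statement as written and suggest a weaker form (e.g., requiring $k \le cn$ for some constant $c<1$, or augmenting $G_k$ with ``connector'' edges between cards).
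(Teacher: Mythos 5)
This statement is the paper's \emph{WL reconstruction conjecture}: the authors explicitly present it as an open problem (they report being unable to find counterexamples and state it precisely because they cannot prove it), so there is no proof in the paper to compare against. Your proposal is, by your own admission, not a proof either: the ``main obstacle'' you place in Step~2 --- showing that the multiset of 1-WL colorings of the cards rules out swapping a card for a WL-equivalent non-isomorphic graph --- is exactly the content of the conjecture, so the attempt reduces the open problem to itself. Your preliminary observation is worth sharpening, though: for $k=n$ the implication is not vacuous but \emph{trivially true} ($G_n=G$), so the conjecture as literally worded holds by taking $k=n$; the intended and interesting reading, consistent with the surrounding text, is $k=n-1$ or $k$ close to $n$, which you correctly target.

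Beyond the admitted gap, Step~1 contains a concrete false claim. It is true that color refinement acts component-wise in the sense that a vertex's stable color depends only on its component, but 1-WL does \emph{not} read off the number of connected components or their sizes from the stable coloring: $C_6$ and $C_3 \sqcup C_3$ are both 2-regular, receive the monochromatic stable coloring, and are 1-WL equivalent despite having different component counts. Consequently, a graph $H'$ that is 1-WL equivalent to $G_{n-1}$ need not decompose into $n$ components matching the cards, so the reduction you build the rest of the argument on does not go through. Step~2 also conflates two different targets: Kelly-type arguments recover isomorphism invariants of the parent $G$ from the deck, but what must be shown is that the 1-WL coloring of the disjoint union determines the isomorphism type of the \emph{union} $G_{n-1}$; knowing that $G$ itself is WL-identifiable does not transfer, since $G$ is not a component of $G_{n-1}$ and its identifiability is not inherited by the cards (as your own obstacle paragraph notes). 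Your partial program (hereditary classes, quantifying $n-k$, searching strongly regular graphs for counterexamples) is a sensible research direction, but none of it constitutes progress on the statement as it stands.
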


If \Cref{conj:wl} holds, GNNs $\prec$ \gnn{s}.

\subsection{Proof of \Cref{thm:var}}

We start by extending the Invariance Lemma~\citep[Lemma 4.1]{Chen2019inv} to our context in \Cref{lemma}.

\begin{lemma}\label{lemma} Let $\mu$ be an arbitrary $\delta$-hereditary property and $P_{\cD}$ as in \Cref{thm:var}. now, let $ \mu_k(G) := \E_{ S \sim \text{Unif}(\cS^{(k)}) }[ \mu( G[S] ) ] = \frac{1}{|\cS^{(k)}|} \sum_{S \in \cS^{(k)}} \mu(G[S]) $. Then:

\begin{itemize}
    \item[i)] By inspection, for any $G \in \cG$ with $|V(G)| \geq \delta + \ell $, $ \mu_k(G) = \E[ \mu( H ) \colon H \in \cG_k(G)  ] $ where $ \cG_k(G) := \{ G[S] \colon S \in \cS^{(k)} \}$.
    \item[ii)] By the law of total expectation, $ \E_{P_\cD} [ \mu(G) ] = \E_{P_\cD} [ \mu_k(G) ] $.
    \item[iii)] Note that the covariance matrices of $\mu(G)$ and any of its subgraphs $\mu(G[S])$ are equal, i.e.,  $ \text{Cov}_{P_\cD} \mu(G) = \text{Cov}_{P_\cD, \text{Unif}(\cS^{(k)})} \mu( G[S] ) $. Thus, by the law of total covariance,
        $$ \text{Cov}_{P_\cD}[\mu(G)]  =  \text{Cov}_{P_\cD}[\mu_k(G)] +  \E_{P_\cD}[\text{Cov}_{\text{Unif}(\cS^{(k)})}[\mu(G)]] .$$
\end{itemize}
\end{lemma}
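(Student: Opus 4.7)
All three identities boil down to one technical step---upgrading the $\delta$-hereditary property from a single vertex removal to removing up to $\ell$ vertices---together with standard conditioning arguments applied to the joint randomness $(G,S)$ with $G\sim P_{\cD}$ and $S\sim\text{Unif}(\cS^{(k)})$ conditional on $G$.

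The plan is first to isolate a substitution lemma: for every $G$ with $|V(G)|\geq \delta+\ell$ and every $S\in\cS^{(k)}$ (where $k=n-\ell$), $\mu(G[S])=\mu(G)$. I would prove this by induction on the number of deletions, removing the vertices of $V(G)\setminus S$ one at a time and invoking the $\delta$-hereditary definition at each step; the support hypothesis $|V(G)|\geq \delta+\ell$ is exactly what guarantees that every intermediate graph in the chain still has strictly more than $\delta$ vertices, so the definition applies throughout.

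Given the substitution lemma, (i) is purely definitional: $\mu_k(G)=\frac{1}{|\cS^{(k)}|}\sum_{S}\mu(G[S])$ is literally $\mathbb{E}_{H\sim\text{Unif}(\cG_k(G))}[\mu(H)]$, and no hereditary assumption is needed for this step. For (ii), the substitution lemma forces $\mu_k(G)=\mu(G)$ $P_{\cD}$-almost surely, so taking $\mathbb{E}_{P_{\cD}}$ of both sides yields the claimed equality---equivalently the tower $\mathbb{E}_G\mathbb{E}_{S\mid G}[\mu(G[S])]=\mathbb{E}_G[\mu(G)]$. For (iii), I would apply the law of total covariance to $X:=\mu(G[S])$ under the joint law of $(G,S)$: the standard decomposition $\text{Cov}[X]=\text{Cov}[\mathbb{E}[X\mid G]]+\mathbb{E}[\text{Cov}[X\mid G]]$ identifies the two right-hand terms as $\text{Cov}_{P_{\cD}}[\mu_k(G)]$ and $\mathbb{E}_{P_{\cD}}[\text{Cov}_{\text{Unif}(\cS^{(k)})}[\mu(G[S])]]$, while the substitution lemma collapses the left-hand $\text{Cov}[X]$ to $\text{Cov}_{P_{\cD}}[\mu(G)]$ (this is the first display in (iii)), giving the stated decomposition.

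The only genuine obstacle is the bookkeeping inside the induction---making sure the bound $|V(G)|\geq \delta+\ell$ translates into $k\geq \delta$ and that every graph reached along an $\ell$-step vertex-deletion chain from $G$ to $G[S]$ still satisfies the strict inequality $|V|>\delta$ required by the definition of $\delta$-heredity. Once this routine check is in place, each of (i)--(iii) reduces to either unfolding a definition or quoting a textbook conditioning identity, matching the inline justifications the lemma already provides.
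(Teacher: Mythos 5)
Your proposal is correct and follows essentially the same route as the paper, which justifies (i)--(iii) inline by inspection, the law of total expectation, and the law of total covariance, resting on the observation that $\mu(G)=\mu(G[S])$ for every $S\in\cS^{(k)}$ on the support. Your explicit substitution lemma (induction on single-vertex deletions, checking that every intermediate graph in the $\ell$-step chain has more than $\delta$ vertices precisely when $|V(G)|\geq\delta+\ell$ and $k=n-\ell$) is exactly the step the paper leaves implicit in the phrase ``the covariance matrices of $\mu(G)$ and any of its subgraphs $\mu(G[S])$ are equal,'' so you have simply made the paper's argument more precise.
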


\begin{proof}[Proof of \Cref{thm:var}] Let us first define three risk estimators:

\begin{itemize}
    \item[a)] GNN estimator:
    $$\widehat{\cR}_{\text{GNN}}(\cD^{(\text{tr})};\bW_2, \bW_3) := \frac{1}{N^{\text{tr}}}  \sum_{i=1}^{N^{\text{tr}}}  l \Big(\rho_{\bW_{1}} \Big( \phi_{\bW_{2}} \Big( h_{\bW_3}^{\text{GNN}}(G[S])\Big) \Big) , y_i \Big)$$
    \item[b)] Data augmentation estimator:
     $$\widehat{\cR}_{\circ}(\cD^{(\text{tr})};\bW_2, \bW_3) := \frac{1}{N^{\text{tr}}}  \sum_{i=1}^{N^{\text{tr}}} 1/|\cS^{(k)}| \sum_{S \in \cS^{(k)}} l\Big( \rho_{\bW_{1}}\Big( \phi_{\bW_{2}} \Big( h_{\bW_3}^{\text{GNN}}(G[S])\Big) \Big), y_i\Big)$$
    \item[c)] \gnn{} estimator:
     $$\widehat{\cR}_k(\cD^{(\text{tr})};\bW_1, \bW_2, \bW_3) := \frac{1}{N^{\text{tr}}}  \sum_{i=1}^{N^{\text{tr}}} l\Big( \rho_{\bW_{1}}\Big( 1/|\cS^{(k)}| \sum_{S \in \cS^{(k)}} \phi_{\bW_{2}} \Big( h_{\bW_3}^{\text{GNN}}(G[S]) \Big) , y_i\Big)$$
\end{itemize}

Now, we leverage \Cref{lemma}. By mapping $\phi_{\bW_{2}} \Big( h_{\bW_3}^{\text{GNN}}(G[S])\Big)$ to $\mu$ and $1/|\cS^{(k)}| \sum_{S \in \cS^{(k)}} l\big( \phi_{\bW_{2}} \Big( h_{\bW_3}^{\text{GNN}}(G[S])\Big) , y_i\big)$ to $\mu_k$, from iii) we get that

$$ \text{Var}[ \widehat{\cR}_{\circ}(\cD^{(\text{tr})};\bW_2, \bW_3)  ] \leq \text{Var}[ \widehat{\cR}_{\text{GNN}}(\cD^{(\text{tr})};\bW_2, \bW_3)  ] $$

Since $l \circ \rho_{\bW_1}$ is convex in the first argument, we apply Jensen's inequality and arrive at

$$  \text{Var}[ \widehat{\cR}_{k}(\cD^{(\text{tr})};\bW_1, \bW_2, \bW_3)  ]  \leq \text{Var}[ \widehat{\cR}_{\circ}(\cD^{(\text{tr})};\bW_2, \bW_3)  ] \leq \text{Var}[ \widehat{\cR}_{\text{GNN}}(\cD^{(\text{tr})};\bW_2, \bW_3)  ] ,$$

as we wanted to show.\qedhere

\end{proof}

\section{Details on Experiments and Architectures}\label{supp:exp}
In the following, we give details on the experiments. 

\subsection{1-WL test on real-world datasets.} To show how the expressiveness of GNNs is not an obstacle in real-world tasks, we tested if the $1$-WL can distinguish each pair of non-isomorphic graphs in every dataset used in \Cref{sec:exp}.  We go further and ignore vertex and edge features in the test, showing how only the graphs' topology is enough for GNNs to distinguish $\approx 100\%$ of the graphs in each dataset. Results are shown in \Cref{tab:wlds}.
\begin{table}[h]
    \centering
    \begin{tabular}{lc}
     \textbf{Dataset}    & \textbf{\% of dist. non-iso. graph pairs} \\
          \toprule
     \textsc{zinc} & 100.00\,\% \\
     \textsc{alchemy} & >99.99\,\% \\
     \textsc{ogbg-moltox21} & >99.99\,\% \\
     \textsc{ogbg-moltoxcast} & >99.99\,\%  \\
     \textsc{ogbg-molfreesolv} &100.00\,\% \\
     \textsc{ogbg-molesol} &100.00\,\% \\
     \textsc{ogbg-mollipo} &100.00\,\% \\
    \textsc{ogbg-molhiv} & >99.99\,\%\\
    \textsc{ogbg-molpcba} & >99.99\,\%\\
    \bottomrule
    \end{tabular}
    \caption{Percentage of distinguished non-isomorphic graph pairs by $1$-WL over the used benchmark datasets.}
    \label{tab:wlds}
\end{table}

\subsection{Architectures.}

In the following, we outline details on the used GNN architectures. 

\xhdr{GIN.} Below we specify the architecture together with its \gnn{} version for each dataset.

\textsc{zinc}: We used the exact same architecture as used in \citet{Morris2020b}. Its reconstruction versions used a Deep Sets function with mean pooling with three hidden layers before the pooling and two after it. All hidden layers are of the same size as the GNN layers.

\textsc{alchemy}:  We used the exact same architecture as used in \citet{Morris2020b}. Its reconstruction versions used a Deep Sets function with mean pooling with one hidden layer before the pooling and three after it. All hidden layers are of the same size as the GNN layers.

\textsc{ogbg-molhiv}:  We used the exact same architecture as used in \citet{hu2020ogb}. Its reconstruction versions used a Deep Sets function with mean pooling with no hidden layer before the pooling and two after it. Additionally, a dropout layer before the output layer. All hidden layers are of the same size as the GNN layers.

\textsc{ogbg-moltox21}: Same as \textsc{ogbg-molhiv}.

\textsc{ogbg-moltoxcast}:  We used the exact same architecture as used in \citet{hu2020ogb}. Its reconstruction versions used a Deep Sets function with mean pooling with no hidden layer before or after it.

\textsc{ogbg-molfreesolv}: We used the exact same architecture as used in \citet{hu2020ogb}, with the exception of using their jumping knowledge layer, which yielded better validation and test results. Its reconstruction versions used a Deep Sets function with mean pooling with no hidden layer before or after it.

\textsc{ogbg-molesol}: Same as \textsc{ogbg-molfreesolv}.

\textsc{ogbg-mollipo}: We used the exact same architecture as used in \citet{hu2020ogb}, with the exception of using their jumping knowledge layer, which yielded better validation and test results. Its reconstruction versions used a Deep Sets function with mean pooling with no hidden layer before and three after it. All hidden layers are of the same size as the GNN layers.

\textsc{ogbg-molpcba}: We used the exact same architecture as used in \citet{hu2020ogb}. Its reconstruction versions used a Deep Sets function with mean pooling with one hidden layer before the pooling and no after it. All hidden layers are of the same size as the GNN layers.

\textsc{csl}: We used the exact same architecture as used in \citet{Dwi+2020}. Its reconstruction versions used a Deep Sets function with mean pooling with no hidden layer before the pooling and one after it. All hidden layers are of the same size as the GNN layers (110 neurons).

\textsc{multitask}: Same as \textsc{csl}, but with hidden layers of size 300.

\textsc{4,6,8 cycles}: We used the same GIN architecture from \textsc{csl}, with the difference of using one hidden layer and two after the aggregation in the Deep Sets architecture. Additionally, a dropout layer before the output layer. All hidden layers are of the same size as the GNN layers, \textit{i.e.} 300.

\xhdr{GCN.} We used the exact same architectures from GIN for each dataset with the only change being the convolution (aggregation) layer, here we used the GCN layer from \citet{Kip+2017} instead of GIN.

\xhdr{PNA.} Below we specify the architecture together with its \gnn{} version for each dataset.

\textsc{zinc}: We used the exact same architecture from \citet{Cor+2020}. Its reconstruction versions used a Deep Sets function with mean pooling with one hidden layer before the pooling and three after it. All hidden layers have 25 hidden units.

\textsc{alchemy}: We used the exact same architecture used for \textsc{zinc} in \citet{Cor+2020}. Its reconstruction versions used a Deep Sets function with mean pooling with one hidden layer before the pooling and three after it. All hidden layers have 25 hidden units.

\textsc{ogbg-molhiv}: We used the exact same architecture as used in \citet{Cor+2020}. Its reconstruction versions used a Deep Sets function with mean pooling with one hidden layer before the pooling and three after it. In the original PNA model the hidden layers after vertex pooling are of sizes 70, 35 and 17. We replaced them so all have 70 hidden units and put the layers after the subgraph pooling with sizes 70, 35 and 17.

\textsc{ogbg-moltox21}: Same as \textsc{ogbg-molhiv}.

\textsc{ogbg-moltoxcast}: Same as \textsc{ogbg-molhiv}.

\textsc{ogbg-molfreesolv}: We used the exact same architecture used for \textsc{ogbg-molhiv} in \citet{Cor+2020}. Its reconstruction versions used a Deep Sets function with mean pooling with no hidden layer before or after the pooling.

\textsc{ogbg-molesol}: Same as \textsc{ogbg-molfreesolv}.

\textsc{ogbg-mollipo}: Same as \textsc{ogbg-molhiv}.

\textsc{ogbg-molpcba}: We used the exact same architecture used for \textsc{ogbg-molhiv} in \citet{Cor+2020}. Its reconstruction versions used a Deep Sets function with mean pooling with no hidden layer before and two after the pooling.  Additionally, a dropout layer before the output layer. All hidden units are of size 510, with exception of the two hidden layers in Deep Sets that had 255 and 127 neurons.

\textsc{csl}: Same as \textsc{alchemy}.

\textsc{multitask}: We used the exact same architecture from in \citet{Cor+2020}. Its reconstruction versions used a Deep Sets function with mean pooling with no hidden layer before and two after the pooling. All hidden units are of size 16.

\textsc{4,6,8 cycles}:  We used the exact same architecture from in \citet{Cor+2020}, with the difference of a sum pooling instead of a Set2Set~\citep{vinyals2015order} pooling for the readout function in PNA. Its reconstruction versions used a Deep Sets function with mean pooling with no hidden layer before and two after the pooling.  Additionally, a dropout layer before the output layer. All hidden units are of size 16.

\subsection{Experimental setup.}

As mentioned in \Cref{sec:exp}, we retain training procedures and evaluation metrics from the original GNN works~\citep{Dwi+2020,Morris2020b,hu2020ogb}. We highlight how \textsc{csl} is the only dataset with a $k$-fold cross validation (with $k=5$) as originally proposed in \citep{Dwi+2020}. In \Cref{tab:sampling}, we highlight the number of subgraph samples used for training and testing each \gnn{} archictecture for every dataset, our only new hyperparameter introduced. Note that for test, for what is not specified in \Cref{tab:sampling} we use 200 samples or compute exactly (if number of subgraphs $\leq$ 200) for all architectures in all datasets..

\xhdr{Implementation.} All models were implemented in PyTorch Geometric\citep{Fey+2019} using NVidia GeForce 1080 Ti GPUs.

\begin{table*}
    \centering
    \resizebox{0.6\textwidth}{!}{\renewcommand{\arraystretch}{1.0}
    \begin{tabular}{lccccc}
      & & \multicolumn{4}{c}{\bf \# of samples used} \\
      & \bf GNN &  \multicolumn{4}{c}{\bf in \gnn{}} \\
    \bf Dataset & \bf architecture   & $n-1$  & $n-2$  & $n-3$ &  $\ceil{n/2}$   \\
          \toprule
     \textsc{zinc} & \textbf{GIN} & Exact  & 10 &  10 & 10 \\
     \textsc{zinc} & \textbf{GCN} &  Exact & 10  & 10  & 10 \\
     \textsc{zinc} & \textbf{PNA} & Exact  & 10 & 10 & 10 \\
     \textsc{alchemy} & \textbf{GIN}   & Exact & 30 & 30 & 30 \\
     \textsc{alchemy} & \textbf{GCN}  &  Exact  & 30 & 30 & 30 \\
     \textsc{alchemy} & \textbf{PNA}  & Exact  & 30 & 30 & 30 \\
     \textsc{ogbg-moltox21} & \textbf{GIN}  & 5  & 5 & 5 & 5 \\
     \textsc{ogbg-moltox21} & \textbf{GCN}  & 5  & 5 & 5 & 5 \\
     \textsc{ogbg-moltox21} & \textbf{PNA}  & 5  & 5 & 5 & 5 \\
     \textsc{ogbg-moltoxcast} & \textbf{GIN}  & 5  & 5 & 5 & 5 \\
     \textsc{ogbg-moltoxcast} & \textbf{GCN}  & 5  & 5 & 5 & 5 \\
     \textsc{ogbg-moltoxcast} & \textbf{PNA}  & 5  & 5 & 5 & 5 \\
     \textsc{ogbg-molfreesolv} & \textbf{GIN}  & Exact  & Exact & Exact & Exact \\
     \textsc{ogbg-molfreesolv} & \textbf{GCN}  & Exact  & Exact & Exact & Exact \\
     \textsc{ogbg-molfreesolv} & \textbf{PNA}  & Exact  & 20 & 20 & 20 \\
     \textsc{ogbg-molesol} & \textbf{GIN}   & Exact  & Exact & Exact & Exact  \\
     \textsc{ogbg-molesol} & \textbf{GCN}   & Exact  &  Exact & Exact & Exact  \\
     \textsc{ogbg-molesol} & \textbf{PNA}   & Exact  & 20 & 20 & 20  \\
     \textsc{ogbg-mollipo} & \textbf{GIN}   & Exact  & Exact & Exact & Exact \\
     \textsc{ogbg-mollipo} & \textbf{GCN}  & Exact  & Exact & Exact & Exact \\
      \textsc{ogbg-mollipo} & \textbf{PNA}  & 20  & 20 & 20 & 20 \\
    \textsc{ogbg-molhiv} & \textbf{GIN}  & 5  & 5 & 5 & 5 \\
     \textsc{ogbg-molhiv} & \textbf{GCN} & 5  & 5 & 5 & 5  \\
      \textsc{ogbg-molhiv} & \textbf{PNA}  & 5  & 5 & 5  & 5 \\
      
      \textsc{ogbg-molpcba} & \textbf{GIN}  & 3/5$^+$  & 3/5$^+$  & 3/5$^+$   & 3/5$^+$ \\
    \textsc{ogbg-molpcba} & \textbf{GIN}  & 3/5$^+$  & 3/5$^+$  & 3/5$^+$   & 3/5$^+$ \\
    \textsc{ogbg-molpcba} & \textbf{PNA}  & 1/3$^+$  & 1/3$^+$ & 1/3$^+$  & 1/3$^+$ \\
      
      \textsc{csl} & \textbf{GIN}  & Exact  & 20 & 20 & 20  \\
     \textsc{csl} & \textbf{GCN} & Exact  & 20 & 20 & 20 \\
      \textsc{csl} & \textbf{PNA}  & Exact  & 20 & 20 & 20  \\
      
      \textsc{multitask} & \textbf{GIN}  &  25/20$^+$ & 25/20$^+$ & 25/20$^+$ & 25/20$^+$  \\
     \textsc{multitask} & \textbf{GCN} & 25/20$^+$  & 25/20$^+$ & 25/20$^+$ & 25/20$^+$ \\
      \textsc{multitask} & \textbf{PNA}  & 15/10$^+$  & 15/10$^+$ & 15/10$^+$  & 15/10$^+$  \\
      
       \textsc{4,6,8 cycles} & \textbf{GIN}  & 10/10$^+$ & 10/10$^+$ & 10/10$^+$ & 10/10$^+$ \\
     \textsc{4,6,8 cycles} & \textbf{GCN} & 10/10$^+$ & 10/10$^+$ & 10/10$^+$ & 10/10$^+$ \\
      \textsc{4,6,8 cycles} & \textbf{PNA} & 10/10$^+$ & 10/10$^+$ & 10/10$^+$ & 10/10$^+$ \\

    \bottomrule
    \end{tabular}
    }
    \caption{The number of subgraph samples used in training in each \gnn{} for every dataset. \\$+$: train/test (validation same as test).}
    \label{tab:sampling}
\end{table*}

\section{Datasets}\label{supp:datasets}

In \Cref{tab:data}, we show some basic statistics from the datasets used in \Cref{sec:exp}.

\begin{table*}
    \centering
    \resizebox{0.9\textwidth}{!}{\renewcommand{\arraystretch}{1.0}
    \begin{tabular}{lcccc}
     \textbf{Dataset}    & \textbf{ \# of graphs } &  \textbf{ \# of classes/targets } &  \textbf{ Average \# of vertices } &  \textbf{ Average \# of edges }  \\
          \toprule
     \textsc{zinc}$^1$ & 249\,456  & 1 & 23.1 & 24.9 \\
     \textsc{alchemy}$^2$ & 202\,579 & 12 & 10.1 & 10.4 \\
     \textsc{ogbg-moltox21} & 7\,831 & 12 & 18.6 & 19.3 \\
     \textsc{ogbg-moltoxcast} & 8\,576 & 617 & 18.8 & 19.3 \\
     \textsc{ogbg-molfreesolv} & 642 & 1 & 8.7 & 8.4 \\
     \textsc{ogbg-molesol} & 1\,128 & 1 & 13.3 & 13.7 \\
     \textsc{ogbg-mollipo} & 4\,200 & 1 & 27.0 & 29.5 \\
    \textsc{ogbg-molhiv} & 41\,127  & 2 & 25.5 & 27.5 \\
    \textsc{ogbg-molpcba} & 437\,929  & 128 & 26.0 & 28.1 \\
    \textsc{csl} & 150  & 10 & 41.0 & 82.0 \\
    \textsc{multitask} & 7\,040 & 3 & 18.81 & 47.58 \\
    \textsc{4 cycles}$^3$ & 20\,000 & 2 & 36.0 & 30.85 \\
    \textsc{6 cycles}$^4$ & 20\,000 & 2 & 48.96 & 43.92 \\
    \textsc{8 cycles}$^5$ & 20\,000 & 2 & 61.96 & 56.94 \\
    \bottomrule
    \end{tabular}
    }
    \caption{Dataset statistics. $^1$---We used the 10k subset from \citet{Dwi+2020}.  $^2$---We used the 10k subset from \citet{Morris2020b}. We generated the datasets from \citet{Vig+2020} using an average graph size of $^3$ 36, $^4$ 56, $^5$ 72}
    \label{tab:data}
\end{table*}

\section{Additional results}\label{addres}

The attentive reader might wonder how sampling subgraphs impacts the training of the models. More precisely, how does sampling subgraphs affect the convergence and the accuracy of models? To provide insights in this matter, we show results for $(n-1)$-Reconstruction GIN in the \textsc{alchemy} dataset. \Cref{fig:test} shows the average number of epochs taken to converge and the training loss at convergence. We note that the training converges faster to a larger loss for a very small sample size, e.g., 1 and 3. Sample sizes of 8 and 15 are already sufficient to converge to approximately the training loss with the exact model. Note that 15 is already the average graph size in the dataset. Thus it has a very similar behavior as the exact model. For a sample size of 8, the model takes 21\% more epochs to converge to the same training loss as the exact. Note, however, that using subgraph samples uses a fixed amount of GPU memory independently of the maximum graph size in the dataset.

\begin{figure}
\centering
\begin{subfigure}{.5\textwidth}
  \centering
  \includegraphics[scale=0.3]{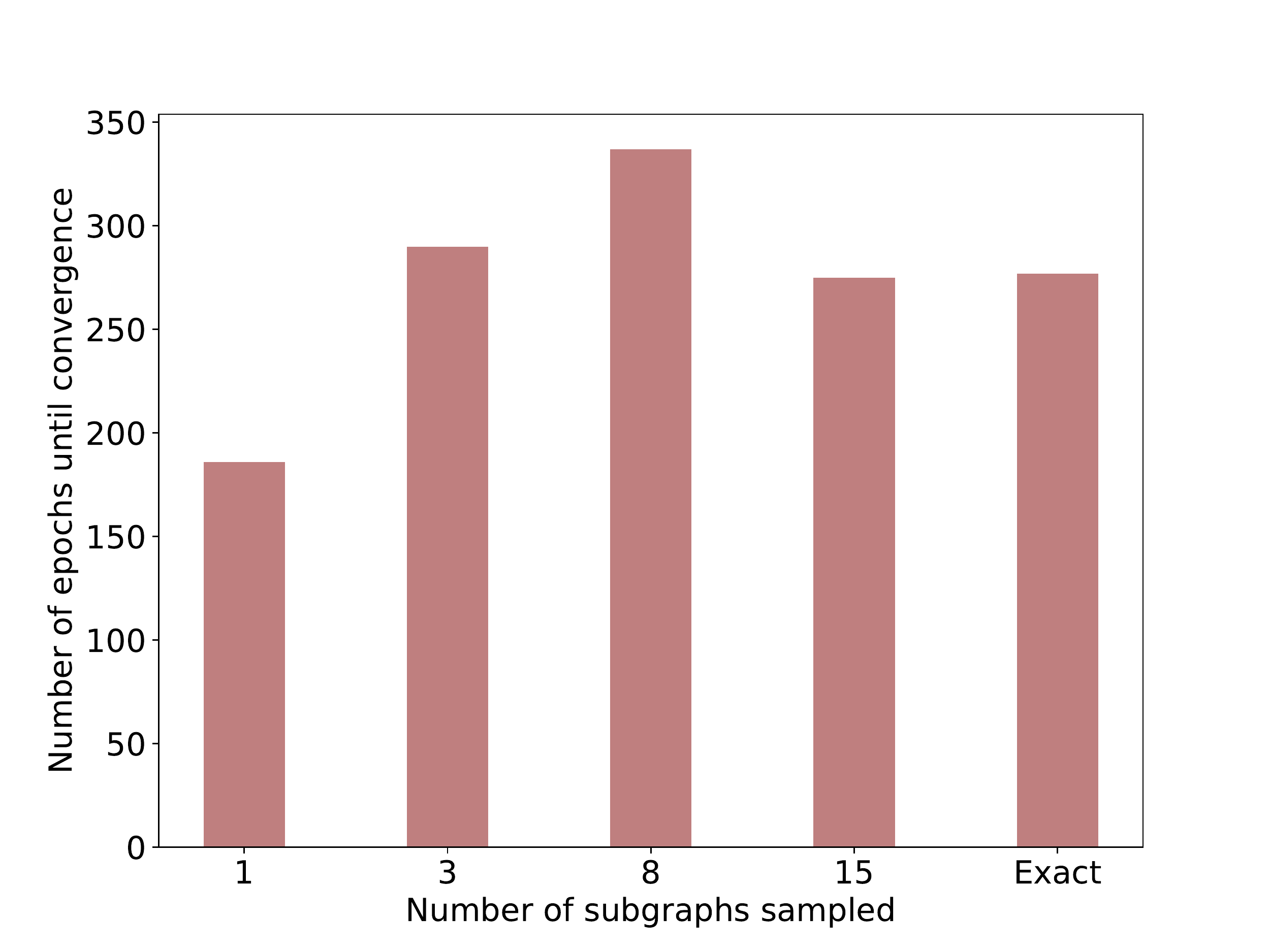}
  \caption{Number of epochs until convergence under different number of subgraphs sampled in training.}
  \label{fig:epoch}
\end{subfigure}\begin{subfigure}{.5\textwidth}
  \centering
  \includegraphics[scale=0.3]{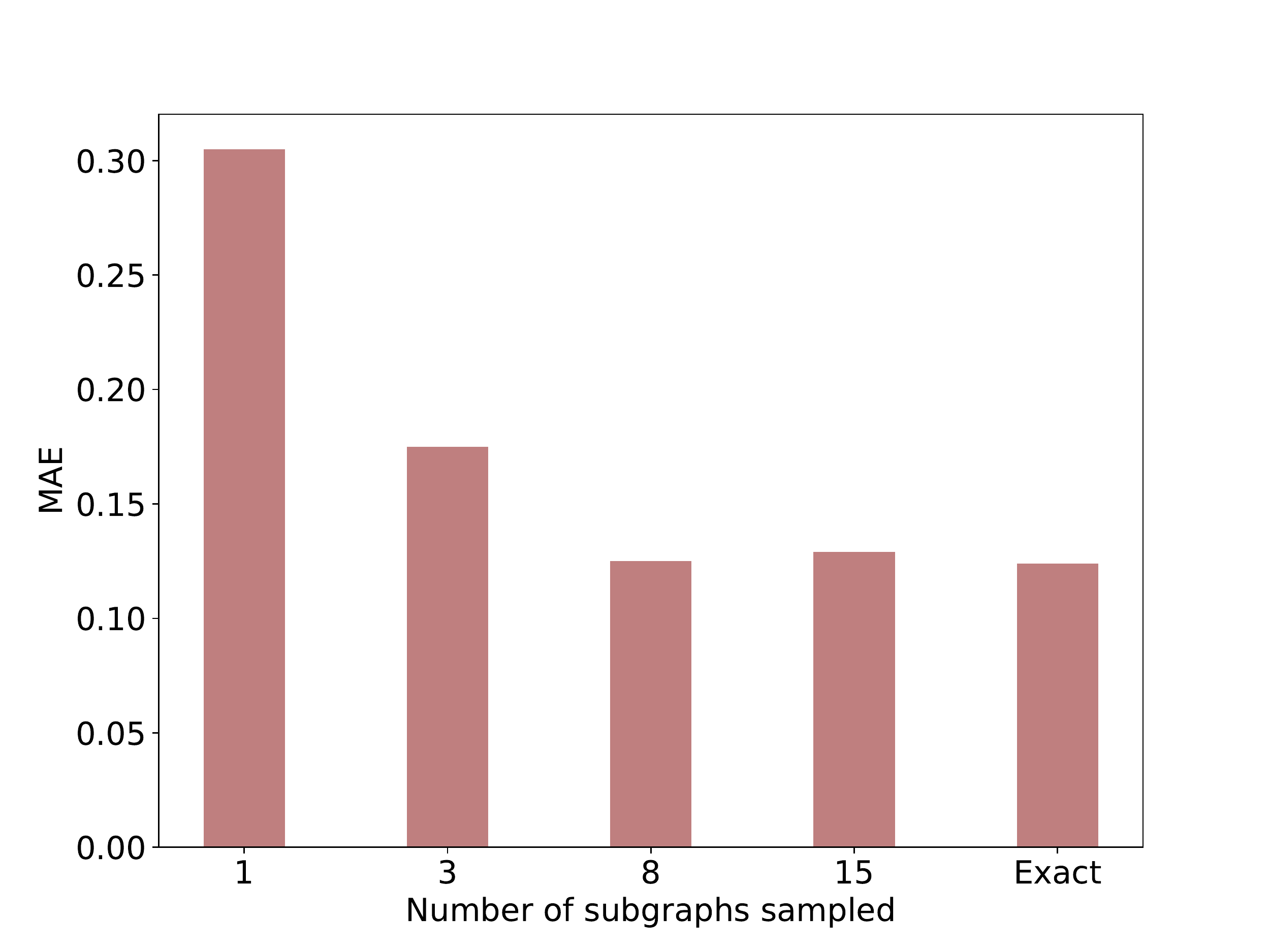}
  \caption{Mean absolute error under different number of subgraphs sampled in training. }
  \label{fig:mae}
\end{subfigure}
\caption{The impact of sampling subgraphs in $(n-1)$-Reconstruction GIN on the \textsc{alchemy} dataset}
\label{fig:test}
\end{figure}

 \begin{table}[!htbp]

 \centering
 \resizebox{0.49\textwidth}{!}{\renewcommand{\arraystretch}{0.95}
   \begin{tabular}[t]{@{}m{0.05em}lccc@{}}
 \toprule
              && \textsc{ogbg-molhiv} & \textsc{zinc} & \textsc{alchemy}
              \\
              && (ROC-AUC \%) $\uparrow$  & (MAE) $\downarrow$ &  (MAE) $\downarrow$
              \\
 \cmidrule{2-5}
 & \textbf{GIN}    &  75.58 \footnotesize{ $\pm$ 1.40 }    &    0.278 \footnotesize{$\pm$ 0.022}     &    0.185 \footnotesize{$\pm$ 0.022}                    \\

 \parbox[t]{2mm}{\multirow{4}{*}{\rotatebox[origin=c]{90}{\scriptsize{Reconstruction }}}}
 & \footnotesize{$(n-1)$}     & \colorbox{pastelgreen}{ 76.32 \footnotesize{ $\pm$ 1.40 } }     &  \colorbox{pastelgreen}{ 0.209 \footnotesize{$\pm$ 0.009} }    &    \colorbox{pastelgreen}{   0.160  \footnotesize{$\pm$ 0.003} }          \\

 & \footnotesize{$(n-2)$}    &   \colorbox{pastelgreen}{ 77.53 \footnotesize{ $\pm$ 1.59 } }   &   0.324 \footnotesize{$\pm$ 0.048}   &  \colorbox{pastelgreen}{   0.153  \footnotesize{$\pm$ 0.003} }        \\

 & \footnotesize{$(n-3)$}        &    \colorbox{pastelgreen}{ 75.82 \footnotesize{ $\pm$ 1.65 } }  &     0.329 \footnotesize{$\pm$ 0.049} &   \colorbox{pastelgreen}{   0.167  \footnotesize{$\pm$ 0.006} }    \\

 & \footnotesize{$\ceil{n/2}$} & 68.43 \footnotesize{ $\pm$ 1.23 }      &        0.548   \footnotesize{$\pm$ 0.006}    &    0.238 \footnotesize{$\pm$ 0.011}    \\

 \cmidrule{2-5}

 & \textbf{GCN}    &   76.06 \footnotesize{ $\pm$ 0.97 }     &    0.306 \footnotesize{$\pm$ 0.023}       &   0.189  \footnotesize{$\pm$ 0.003}                  \\

 \parbox[t]{2mm}{\multirow{4}{*}{\rotatebox[origin=c]{90}{\scriptsize{Reconstruction }}}}
 & \footnotesize{$(n-1)$}     &  \colorbox{pastelgreen}{  76.83  \footnotesize{ $\pm$ 1.88 }  }  &    \colorbox{pastelgreen}{   0.248 \footnotesize{$\pm$ 0.011} }    &   \colorbox{pastelgreen}{   0.162  \footnotesize{$\pm$ 0.002} }               \\

 & \footnotesize{$(n-2)$}    &  \colorbox{pastelgreen}{  76.13  \footnotesize{ $\pm$ 1.18 } }    &     0.340   \footnotesize{$\pm$ 0.025}    &   \colorbox{pastelgreen}{   0.157  \footnotesize{$\pm$ 0.004} }         \\

 & \footnotesize{$(n-3)$}        &   76.00 \footnotesize{ $\pm$ 3.30}    &     0.361  \footnotesize{$\pm$ 0.015}&   \colorbox{pastelgreen}{ 0.161 \footnotesize{ $\pm$ 0.004}}  \\

 & \footnotesize{$\ceil{n/2}$} &   72.16 \footnotesize{ $\pm$ 1.96 }      &   0.544   \footnotesize{$\pm$ 0.006}    &         0.236 \footnotesize{$\pm$ 0.013}   \\

 \cmidrule{2-5}

 & \textbf{PNA}    &   \textbf{79.05} \footnotesize{ $\pm$ 1.32 }    &     0.188 \footnotesize{$\pm$ 0.004}     &    0.176   \footnotesize{$\pm$ 0.011}                       \\

 \parbox[t]{2mm}{\multirow{4}{*}{\rotatebox[origin=c]{90}{\scriptsize{Reconstruction }}}}
 & \footnotesize{$(n-1)$}     &  77.88 \footnotesize{ $\pm$ 1.13 }   &      \colorbox{pastelgreen}{ 0.170 \footnotesize{$\pm$ 0.006} }    &   \colorbox{pastelgreen}{   0.125 \footnotesize{$\pm$ 0.001} }                                \\

 & \footnotesize{$(n-2)$}    &   78.49 \footnotesize{ $\pm$ 1.33 }   &  0.197 \footnotesize{$\pm$ 0.007}  &  \colorbox{pastelgreen}{   0.128 \footnotesize{$\pm$ 0.002} }                   \\

 & \footnotesize{$(n-3)$}        &    78.85  \footnotesize{ $\pm$ 0.48 } &       0.212 \footnotesize{$\pm$ 0.212}   &  \colorbox{pastelgreen}{   0.152 \footnotesize{$\pm$ 0.006} } \\

 & \footnotesize{$\ceil{n/2}$} &   76.48 \footnotesize{$\pm$ 0.35}  &  0.582 \footnotesize{$\pm$ 0.018}  &   0.243 \footnotesize{$\pm$ 0.005}\\

 \cmidrule{2-5}

 & \textbf{LRP} &  77.19  \footnotesize{$\pm$ 1.40}    &    0.223   \footnotesize{$\pm$ 0.001}     & --- \\
 & \textbf{GSN} &  77.99   \footnotesize{$\pm$ 1.00}    &    \textbf{0.108} \footnotesize{$\pm$ 0.001}   & --- \\
 & \textbf{$\delta$-2-LGNN} &  --- & 0.306  \footnotesize{$\pm$ 0.044}  &    \textbf{0.122}  \footnotesize{$\pm$ 0.003}     \\
 & \textbf{SMP} &   --- &     $0.138^\dagger$~~~~~~~~~~~    &      ---   \\

 \bottomrule
 \end{tabular}
 }
   \caption{Further results. We highlight in green \gnn{s} that boost the original GNN architecture. $\dagger$: Standard deviation not reported in original work.}\label{tab:2_appendix}
 \end{table}
 
 \section{Result Analysis}
 
 \subsection{Graph Property Results}\label{appx:result_properties}
 
 Such results are a consequence of the benefits of \gnn{s} highlighted in \Cref{thm:cycle}. That is, when we remove vertices we transform cycles into paths and trees, graphs easily recognizable by GNNs. By reconstructing the original cycle from its subgraphs, we are able to solve the tasks. Moreover, reconstruction is also able to make GNNs solve the multitask of determining a graph's connectivity, diameter and spectral radius. When we remove a vertex (or a connected subgraph) from a graph, we expect to change only the representation of vertices in its connected component, thus, by inspecting unchanged representations we can determine connectivity and solve the task. The help of reconstruction in solving graph diameter and spectral radius is a bit less direct. The first is related to shortest path lengths and the latter to the number of walks of size $n$ on the graph. When we remove vertices, we alter vertex representations. The way these representations are changed are affected by both shortest path lengths and number of walks. Thus, by measuring the change in representations, reconstruction can better capture these metrics. Finally, we observe that opposed to Ring-GNN, PPGN and Positional GIN, \gnn{s} is the only method consistently solving the tasks while maintaining the most important feature of graph representations, i.e., invariance to vertex permutations.

\end{document}